\documentclass{article} 
\usepackage{iclr2027_conference,times}
\usepackage{booktabs}

\usepackage{algorithm}      
\usepackage{algpseudocode}  

\usepackage{amsmath,amsfonts,bm}

\def\eqref#1{equation~\ref{#1}}

\def\1{\bm{1}}

\newcommand{\test}{\mathcal{D_{\mathrm{test}}}}

\DeclareMathAlphabet{\mathsfit}{\encodingdefault}{\sfdefault}{m}{sl}
\SetMathAlphabet{\mathsfit}{bold}{\encodingdefault}{\sfdefault}{bx}{n}

\usepackage{hyperref}
\usepackage{url}
\usepackage{graphicx} 
\usepackage{booktabs}          
\usepackage{amsmath,amssymb}   
\usepackage{amsthm}            
\theoremstyle{plain}
\newtheorem{proposition}{Proposition}
\theoremstyle{definition}
\newtheorem{definition}{Definition}
\newtheorem{lemma}{Lemma}
\usepackage{listings}          
\usepackage{xcolor}
\usepackage{tikz}
\usetikzlibrary{arrows.meta,positioning}
\usepackage[most]{tcolorbox}   
\definecolor{seBlue}{HTML}{2A78D6}   \definecolor{seBlueBg}{HTML}{E7F0FB}
\definecolor{seOrange}{HTML}{EB6834} \definecolor{seOrangeBg}{HTML}{FDECE4}
\definecolor{seGreen}{HTML}{1BAF7A}  \definecolor{seGreenBg}{HTML}{E3F5EC}
\definecolor{seYellow}{HTML}{EDA100} \definecolor{seYellowBg}{HTML}{FFF4D8}
\definecolor{seViolet}{HTML}{4A3AA7} \definecolor{seVioletBg}{HTML}{ECE9F7}
\definecolor{seGray}{HTML}{6B7280}   \definecolor{seGrayBg}{HTML}{F1F2F4}
\definecolor{seInk}{HTML}{2C3E50}    \definecolor{seInkLight}{HTML}{6B7280}
\definecolor{seArrow}{HTML}{4B5563}
\definecolor{seTeal}{HTML}{2A9D8F}     \definecolor{seRose}{HTML}{E76F51}
\definecolor{seDeepGreen}{HTML}{2E7D32} \definecolor{seSlate}{HTML}{4B5563}

\newcommand{\sefigfont}{\fontsize{7.5}{8.8}\selectfont\sffamily}
\newcommand{\sefiglab}{\fontsize{7}{8}\selectfont\sffamily}

\tikzset{
  secard/.style={draw=seGray!70, fill=seGrayBg, rounded corners=3pt,
                 line width=0.5pt, align=center, inner sep=2pt,
                 font=\sefigfont, text=seInk},
  secard blue/.style={secard, draw=seBlue!75, fill=seBlueBg},
  secard orange/.style={secard, draw=seOrange!85, fill=seOrangeBg, dashed},
  secard green/.style={secard, draw=seGreen!85, fill=seGreenBg},
  secard violet/.style={secard, draw=seViolet!70, fill=seVioletBg},
  searrow/.style={-{Latex[length=2mm,width=1.4mm]}, line width=0.6pt, seArrow},
  searrow back/.style={searrow, dash pattern=on 2pt off 1.2pt},
  selab/.style={font=\sefiglab, text=seInkLight, inner sep=1.5pt},
  sestage/.style={secard, draw=seBlue!80, fill=seBlue!#1},
  sestage/.default=20,
}

\definecolor{leanKw}{HTML}{1F57C3}    
\definecolor{leanTac}{HTML}{7D2AA8}   
\definecolor{leanCmt}{HTML}{2E7D32}   
\definecolor{leanStr}{HTML}{A31515}   
\definecolor{leanSorry}{HTML}{C62828} 
\definecolor{leanType}{HTML}{267F99}  
\definecolor{leanBg}{HTML}{F7F9FC}
\definecolor{leanFrame}{HTML}{D5DCE6}

\lstdefinelanguage{lean4vs}{
  morekeywords=[1]{theorem,lemma,def,abbrev,structure,class,instance,example,
                   noncomputable,open,scoped,import,variable,namespace,end,section,
                   axiom,opaque,notation,universe,deriving,where},
  morekeywords=[2]{by,fun,let,have,show,from,with,at,intro,intros,exact,apply,
                   refine,rw,simp,simpa,obtain,rcases,rintro,constructor,cases,
                   induction,calc,use,subst,change,omega,linarith,nlinarith,norm_num,
                   decide,trivial,rfl,ext,funext,classical,convert,using,exact_mod_cast,
                   push_cast,simp_all,aesop,apply_rules,noncomm_ring,left,right,
                   exfalso,absurd,contradiction,match,do,if,then,else,forall,exists,
                   simp_all_arith,simp_all_arith!,simp_arith,field_simp,ring,ring_nf,
                   positivity,unfold,dsimp,specialize,by_contra,by_cases,push_neg,
                   norm_cast,split,split_ifs,gcongr,tauto,generalize,exact?,apply?},
  morekeywords=[3]{sorry},
  alsoletter={.!?},   
  sensitive=true,
  morecomment=[l]{--},
  morecomment=[s]{/-}{-/},
  morestring=[b]",
}
\lstdefinestyle{leanvs}{
  language=lean4vs,
  basicstyle=\ttfamily\footnotesize, columns=fullflexible, keepspaces=true,
  keywordstyle=\color{leanKw}, keywordstyle=[2]\color{leanTac},
  keywordstyle=[3]\color{leanSorry}\bfseries,
  commentstyle=\itshape\color{leanCmt}, stringstyle=\color{leanStr},
  emph={Type,Prop,Set,Finset}, emphstyle=\color{leanType},
  showstringspaces=false, breaklines=true, breakatwhitespace=true,
  frame=none, xleftmargin=0pt, aboveskip=0pt, belowskip=0pt, upquote=true,
  literate=
  {→}{{$\rightarrow$}}1 {↔}{{$\leftrightarrow$}}1 {∀}{{$\forall$}}1
  {∃}{{$\exists$}}1 {∧}{{$\wedge$}}1 {∨}{{$\vee$}}1 {¬}{{$\lnot$}}1
  {≤}{{$\leq$}}1 {≥}{{$\geq$}}1 {≠}{{$\neq$}}1 {∈}{{$\in$}}1
  {⊂}{{$\subset$}}1 {⊆}{{$\subseteq$}}1 {∩}{{$\cap$}}1
  {∅}{{$\emptyset$}}1 {∑}{{$\sum$}}1 {⋂}{{$\bigcap$}}1
  {∘}{{$\circ$}}1 {α}{{$\alpha$}}1 {β}{{$\beta$}}1 {π}{{$\pi$}}1
  {ℕ}{{\textcolor{leanType}{$\mathbb{N}$}}}1 {ℝ}{{\textcolor{leanType}{$\mathbb{R}$}}}1
  {ℤ}{{\textcolor{leanType}{$\mathbb{Z}$}}}1 {ℚ}{{\textcolor{leanType}{$\mathbb{Q}$}}}1
  {⟨}{{$\langle$}}1 {⟩}{{$\rangle$}}1 {×}{{$\times$}}1
  {₀}{{$_0$}}1 {₁}{{$_1$}}1 {₂}{{$_2$}}1 {—}{{\textemdash}}1,
}
\newtcblisting{leanbox}[1][]{
  listing only, listing options={style=leanvs,#1},
  before upper={\fontencoding{T1}\selectfont},   
  enhanced, breakable,
  colback=leanBg, colframe=leanFrame, boxrule=0.5pt, arc=2.5pt,
  left=4pt, right=4pt, top=3pt, bottom=3pt, before skip=6pt, after skip=6pt,
  overlay unbroken and first={\node[anchor=north east, inner sep=2.5pt,
    font=\scriptsize\sffamily, text=seInkLight] at (frame.north east) {Lean 4};},
}

\newtcblisting{leanboxmini}[1][]{
  listing only, listing options={style=leanvs,#1},
  before upper={\fontencoding{T1}\selectfont},
  enhanced, unbreakable,
  colback=leanBg, colframe=leanFrame, boxrule=0.5pt, arc=2.5pt,
  left=4pt, right=4pt, top=3pt, bottom=3pt, before skip=4pt, after skip=4pt,
  overlay={\node[anchor=north east, inner sep=2.5pt,
    font=\scriptsize\sffamily, text=seInkLight] at (frame.north east) {Lean 4};},
}

\RequirePackage{fancyvrb}
\definecolor{leanCFg}{HTML}{1F1F1F}    \definecolor{leanCTag}{HTML}{6B7280}
\definecolor{leanCKw}{HTML}{0000FF}    \definecolor{leanCDecl}{HTML}{795E26}
\definecolor{leanCCmt}{HTML}{008000}   \definecolor{leanCNum}{HTML}{098658}
\definecolor{leanCStr}{HTML}{A31515}   \definecolor{leanCSorry}{HTML}{CD3131}
\definecolor{leanCBr1}{HTML}{0431FA}   \definecolor{leanCBr2}{HTML}{319331}
\definecolor{leanCBr3}{HTML}{7B3814}
\newcommand{\leancodesize}{\fontsize{7.5}{9}\selectfont}
\newcommand{\leancodesizemini}{\fontsize{7}{8.4}\selectfont}
\newtcolorbox{leancode}[1][]{
  enhanced, breakable, coltext=leanCFg,
  before upper={\fontencoding{T1}\selectfont},
  colback=leanBg, colframe=leanFrame, boxrule=0.5pt, arc=2.5pt,
  left=5pt, right=4pt, top=4pt, bottom=4pt, before skip=6pt, after skip=6pt,
  overlay unbroken and first={\node[anchor=north east, inner sep=2.5pt,
    font=\scriptsize\sffamily, text=leanCTag] at (frame.north east) {Lean 4};},
  #1}

\definecolor{promptBg}{HTML}{F6F6F6}
\definecolor{promptBar}{HTML}{3F3F3F}
\lstdefinestyle{prompttext}{
  language={}, keywordstyle={}, commentstyle={}, stringstyle={}, identifierstyle={},
  basicstyle=\ttfamily\footnotesize, columns=fullflexible, keepspaces=true,
  breaklines=true, breakindent=1.5em, showstringspaces=false, upquote=true,
  frame=none, xleftmargin=0pt, aboveskip=0pt, belowskip=0pt,
  literate={—}{{\textemdash}}1,
}
\newtcblisting{promptbox}[1]{
  listing only, listing options={style=prompttext},
  before upper={\fontencoding{T1}\selectfont},
  enhanced, breakable, title={#1},
  colback=promptBg, colframe=promptBar, colbacktitle=promptBar, coltitle=white,
  fonttitle=\sffamily\small, titlerule=0pt, toptitle=2pt, bottomtitle=2pt,
  boxrule=0.5pt, arc=3pt, left=5pt, right=5pt, top=4pt, bottom=4pt,
  before skip=8pt, after skip=8pt,
}

\lstdefinelanguage{lean}{
  morekeywords={theorem,lemma,def,structure,class,instance,example,
                by,sorry,fun,let,have,show,from,with,at,forall,exists,
                Type,Prop,Sort,match,do,if,then,else,and,or,not,
                variable,namespace,end,open,import,section},
  sensitive=true,
  morecomment=[l]{--},
  morecomment=[s]{/-}{-/},
  morestring=[b]",
}

\title{\texttt{SemantEq}: Faithful Bidirectional Equivalence of Formal Statements}

\author{\begin{tabular}{@{}l@{\hspace{2em}}l@{\hspace{2em}}l@{}}
Xinrui Li$^{1,2}$ & Pawan Sasanka Ammanamanchi$^{1}$ & Siddharth Bhat$^{1,3,4}$ \\
Rianna Jamdar$^{1}$ & Yongxi Lin$^{1,2}$ & Stella Biderman$^{1}$
\end{tabular} \\[6pt]
{\normalfont $^{1}$EleutherAI \quad $^{2}$Carnegie Mellon University \quad $^{3}$University of Cambridge \quad $^{4}$XTX Markets}}

\newcommand{\beq}{\texttt{BEq}}
\newcommand{\beqplus}{\ensuremath{\texttt{BEq}+}}

\def\Program{\texttt{SemantEq}}
\def\Eval{\Program\texttt{-Eval}}
\def\Checker{\Program\texttt{-ProofAudit}}
\newcommand{\Lib}{\mathcal{L}}                 
\newcommand{\steps}{\rightsquigarrow_{\Lib}}          
\newcommand{\semeq}{\simeq_{Sem,\ \Lib}}            

\iclrfinalcopy 
\begin{document}

\maketitle
\lhead{Preprint}

\begin{abstract}
Statement autoformalization translates mathematics from natural language into a
proof assistant such as Lean 4, and models now do it at scale. As provers get
stronger, the burden of correctness shifts from the proof to the statement,
that is, to whether the formal statement faithfully translates its source.
Given a trusted reference formalization, the natural test is logical
equivalence, and it proves too much. $P \leftrightarrow Q$ holds trivially for
any two statements $P$ and $Q$ that the library proves, because a prover can
prove each separately and pass through $P \leftrightarrow \mathrm{True}
\leftrightarrow Q$. The stronger the prover, the more unrelated pairs pass.
Earlier methods restrict the prover and check that each proof names the other
statement. We introduce \texttt{SemantEq}, a more principled relation that instead
requires each statement to turn into the other only through a chain of
allowed steps, such as logical rearrangement and rewriting with the library's
own equations. Our programs judge a proof by the proof term Lean checks, not
by its tactic script. \texttt{SemantEq-Eval}, a search without a model, reaches 100\%
precision and 65.6\% recall on the ProofNetVerif test split under our
adjudicated labels, against 95.8\% and 61.8\% for \texttt{BEq}+, and an 8.9-fold
increase in its CPU time adds at most three pairs. A language model finds more proofs, but under two of four
prompts it proves both directions of about two in five of the non-equivalent
pairs. \texttt{SemantEq-ProofAudit} audits each model-written proof under a policy derived from the relation and
accepts none of the 104 non-equivalent pairs, and it raises recall on 407
equivalent pairs from 65.4\% to 81.1\% under the best prompt. Separating the
generation of equivalence proofs from their acceptance lets a stronger prover
raise recall without changing what an acceptance means. We release code and data at:
https://github.com/hlxr89/SemantEq
\end{abstract}

\section{Introduction}
\label{sec:intro}

Formal theorem proving has improved quickly on benchmarks such as miniF2F,
ProofNet, and PutnamBench \citep{zheng2022minif2f,proofnet,tsoukalas2024putnambench},
in part because the Lean kernel checks every proof and gives models a reliable
signal \citep{demoura2021lean4}. Models now also formalize at scale, from large
problem sets \citep{ying2024leanworkbook,lin2025goedel,wu2026stepfun} to whole
textbooks and
research papers \citep{rammal2026formalizing,zhang2026leanmarathon}, and the
Formal Conjectures project collects Lean statements of open problems
\citep{formalconjectures2025}. The kernel, however, checks a proof only against
its formal statement, not whether the statement says what the source says.
Audits of benchmarks find statements that drop hypotheses, change domains, or
become vacuous \citep{ospanov2025minif2f,ammanamanchi2026faults}, and an expert
review of an agent's formalization found it weakest at choosing definitions,
not at filling in proofs \citep{ilin2026sorries}. As provers get stronger, the
burden of correctness shifts from the proof to the statement.

At this scale no one can read every statement, so an automatic evaluator
decides which formalizations count as correct. It is also often the reward for
training the next formalizer \citep{wu2026stepfun,yu2026mathesis} or the
filter that keeps its outputs \citep{peng2026criticlean}. A formalizer trained
against an evaluator that accepts unfaithful statements therefore learns to
produce them, and one trained against an evaluator that rejects valid
rewordings is held back.

The ideal evaluator would compare a formal statement with its informal source
directly, and learned judges try to
\citep{lu2025formalalign,peng2026criticlean,rammal2026formalizing}, but their
verdicts come with no proof that anyone can check. We work in the setting where
a trusted formal reference exists, as it does when large projects formalize the
statements of their main results before the proofs
\citep{vandoorn2023sphere,tao2023pfr,commelin2024abstraction} and when
benchmarks ship references checked by people. The task is still hard, because
one informal statement has many correct formalizations, which differ in
variable names, hypothesis order, library definitions, or the form of the
conclusion. Text similarity to the reference, such as BLEU
\citep{papineni2002bleu}, therefore cannot tell a rewording from a different
claim \citep{proofnet}. The same problem arises across language model evaluation. \citet{biderman2024lessons} call it ``the Key Problem'': many syntactically different outputs express the same content, and the best available tools for detecting semantic equivalence are language models, which are the systems being evaluated. Responses fall into three groups: restricting the answer space so a program can check it, grading with a model, and grading by people. Formal mathematics seems to belong to the first group, because the kernel checks proofs. The kernel does not check statements, however, so statement autoformalization brings the Key Problem back.

A natural test is to prove that the candidate and the reference imply each
other, but the test proves too much. In Lean, \texttt{Nat.Prime 5} and
\texttt{Irrational (Real.sqrt 2)} are logically equivalent, because Mathlib
proves both. Each implication can then ignore its antecedent, the statement
it assumes, and prove its conclusion outright. A candidate that states
something true but different from the reference passes the same way, and a
stronger prover finds such proofs for more pairs. The existing evaluators
\beq{} and \beqplus{} restrict how the proof is found and check the proof
text for the antecedent's name \citep{liu2025rethinking,poiroux2025reliable}.
A restricted search can still close a goal from library facts alone, however,
and a name in the text does not show that the antecedent did any work
(Section~\ref{sec:algorithm}). The question is therefore which proofs to
accept.

\Program{} answers that question before any prover runs. It relates two
statements when each turns into the other by a chain of four kinds of step,
namely logical rearrangement, rewriting with an equation or biconditional that
the library states, a change of variables, and structural weakening
(Figure~\ref{fig:chain}). Our programs read the proof term that Lean checks,
not the tactic script that produced it, to see which steps a proof takes and
whether it depends on the other statement. Because the rule is fixed apart
from the prover, it separates the generation of equivalence proofs from their
acceptance. Any
prover can search for chains, and a stronger prover can find more of them
without changing what an acceptance means.

We study two ways to find chains. \Eval{} is a deterministic Lean metaprogram
that searches for a chain in each direction and audits the proof term it
finds, with no model and no GPU, in a median of two seconds per pair (mean
30.5~s, on the 407 equivalent pairs of Section~\ref{sec:capability}) on a
laptop CPU. On the ProofNetVerif test split, under our
adjudicated labels, it accepts no non-equivalent pair, for a precision of
100\% against 95.8\% for \beqplus{}, and its recall is 65.6\% against 61.8\%.
The published labels give the same ranking. On 100 non-equivalent pairs
assembled to test each audit, it accepts none, while \beqplus{} accepts 54
(Section~\ref{sec:evaluation}). More search barely helps. Across our grid an
8.9-fold increase in CPU time adds at most three pairs, because many of the
missed pairs need a step that the search is not built to take, such as
building a new object to plug into the other statement
(Section~\ref{sec:capability}).

The second way lets a language model write the proofs. A compiling proof
establishes the implication but not, by itself, that the two statements say
the same thing, since under prompts that ask only for a compiling proof
the model proves both directions of about two in five of the non-equivalent
pairs. \Checker{} reads each proof term and accepts a direction only when it
is built from the four kinds of defined step, apart from exceptions it
records, and it names the rule that a rejected proof breaks. It accepts none
of the 104 non-equivalent pairs and certifies 330 of the 407 equivalent pairs
under the best of four prompts, against 266 for the search
(Sections~\ref{sec:checker} and~\ref{sec:campaign}).

Our contributions are as follows.
\begin{itemize}\itemsep1pt

  \item We define \Program{}, a relation that fixes the allowed steps by their effect, and prove that it is an equivalence relation whose related pairs are logically equivalent (Section~\ref{sec:semanteq}).

  \item We build \Eval{}, a search without a model that improves on \beqplus{} in precision and recall, and find where more search stops helping (Sections~\ref{sec:eval} and~\ref{sec:evaluation}).

  \item We introduce \Checker{}, which audits proofs from any prover and separates equivalent from non-equivalent pairs where compilation does not (Section~\ref{sec:models}).
\end{itemize}

\Program{} checks agreement with a formal reference, not with the informal
source, so a candidate that copies a mistake in the reference still passes. A
rejection means only that no proof passed the search and its audits, not that the two statements
differ.

\section{Better Semantic Equivalence for Mathematical Statements}
\label{sec:beq_background}
\label{sec:algorithm}

A good notion of mathematical equivalence should accept different formulations
of the same claim.
Definitional equality, which Lean checks by computing and unfolding
definitions, is too strict. It identifies $1 + 1 = 2$ with $2 = 2$ but not
$x + y = z$ with $y + x = z$. Unrestricted provable equivalence is too loose,
as the introduction shows. \beq{} and \beqplus{} aim for a test between the
two by restricting the proofs in both directions.

\beq{} \citep{liu2025rethinking} defines equivalence as converting each
statement into the other by restricted transformations, which reduce to
definitional equality when none are allowed. To check one direction, it
assumes the other statement, the antecedent, as a theorem closed with
\texttt{sorry}, Lean's placeholder for a missing proof. It then looks for a
proof of the goal, first with \texttt{exact?}, which searches the library and
the hypotheses for a lemma that closes it, and then with a language model
limited to eleven tactics, Lean's proof commands. \beqplus{}
\citep{poiroux2025reliable} replaces the model with fixed strategies that use
the antecedent through \texttt{apply}, \texttt{convert}, or \texttt{have} and
finish with tactics such as \texttt{simp\_all\_arith!}, so it is deterministic
and runs on a CPU. Both look for the antecedent's name in the proof, although
\beqplus{} does so only for its first \texttt{exact?} attempt and inserts the
name itself in its other strategies.

Appendix~\ref{app:beqplus} shows two ways this fails on pairs that
ProofNetVerif labels not equivalent. First, a closing tactic can prove the
goal from the library alone. In the first pair each statement has its own
direct proof, so accepting both directions says nothing about whether they
state the same claim. The \beqplus{} authors report related false acceptances
and a fix that is now their default \citep[Appendix A.6.1]{poiroux2025reliable}.
Second, a name in a script does not show that the antecedent did any work. In
one direction of the second pair the antecedent never appears in the proof
term, and in the other it appears but is not needed. An evaluator therefore
has to check how a proof uses the antecedent and the library. \Program{} does
this by defining the allowed steps first, as \beq{} does, and by examining the
proof term, the proof object that Lean's kernel checks.

\subsection{\Program{}: An Equivalence Relation Defined by Allowed Steps}
\label{sec:semanteq}

Our key idea is to fix a set of allowed steps, each simple enough for a person
or a program to audit, and to call two statements equivalent when a chain of
such steps leads from each to the other. We first fix notation and then define
four kinds of step by their effect on a statement.

\paragraph{Setting.} Fix the library $\Lib$, for us Lean with Mathlib. A
\emph{statement} is a proposition over $\Lib$ with no free variables, taken up
to renaming of bound variables and definitional unfolding, and $\mathcal{S}$
is the class of statements. The \emph{atoms} of a formula $\varphi$, written
$\mathrm{atoms}(\varphi)$, are its equations and applications of predicates.
We write $S = \mathcal{C}[\varphi]$ to single out one occurrence of a formula
or term $\varphi$ in $S$, and $\mathcal{C}[\varphi']$ for $S$ with $\varphi'$
in its place. The \emph{context} $\Gamma$ of the occurrence holds the
variables quantified over it and the hypotheses it stands under. A term or
formula \emph{over} $\Gamma$ uses only the variables of $\Gamma$ and the
symbols of $\Lib$, and $\Lib, \Gamma \vdash \psi$ means that $\Lib$ proves
$\psi$ from the hypotheses in $\Gamma$. An occurrence is \emph{positive} when
the path from the root to it passes only through $\wedge$, $\vee$, $\to$, and
the bodies of quantifiers, and enters the left side of an implication an even
number of times. The \emph{prefix} of a statement is
its leading string of universal quantifiers and hypotheses, and
$S = \Pi\;\varphi$ cuts $S$ after an initial segment $\Pi$ of its prefix. A
\emph{bridge} is a theorem
$b = \forall \bar y.\; \psi_1 \to \cdots \to \psi_m \to (u \asymp v)$ of
$\Lib$, with $\asymp \in \{=, \leftrightarrow\}$, that is not a law of pure
logic. We take $\mathcal{B}$ to be the set of all bridges, so no bridge is
chosen by hand. In simpler words, $\mathcal{B}$ is the set of all theorems defined in the library whose conclusion is of the form of an equality or a  bi-conditional. An \emph{instance} $\sigma(b)$ is given by an assignment
$\sigma$ of terms to $\bar y$, with sides $\sigma(u)$ and $\sigma(v)$ and
hypotheses $\sigma(\psi_i)$. Every domain $\mathsf{D}$ below is a type, and a term or formula that a step introduces contains, as written, no proof other than a hypothesis of its context. Appendix~\ref{app:relation-notation} gives the
notation in full.

\begin{definition}[Faithful derivation]
\label{def:faithful-step}
A statement $S$ \textbf{\emph{faithfully derives}} a statement $T$ with respect to the library $\Lib$, written
$S \steps T$, when one of the following holds.
\begin{itemize}\itemsep1pt
   \item \textbf{Logical rearrangement.} $S = \mathcal{C}[\varphi]$ and
        $T = \mathcal{C}[\varphi']$, where $\mathrm{atoms}(\varphi) =
        \mathrm{atoms}(\varphi')$, $\varphi \leftrightarrow \varphi'$ holds
        under every interpretation of the symbols in these atoms, and every atom
        matters on both sides, in that for each atom some interpretation makes the
        truth value of $\varphi$, and likewise of $\varphi'$, change with that
        atom alone. Equations are atoms like any other here.
  \item \textbf{Bridge rewriting.} $S = \mathcal{C}[\sigma(u)]$ and
        $T = \mathcal{C}[\sigma(v)]$, or the same with $u$ and $v$ exchanged,
        for some bridge $b \in \mathcal{B}$ and some instance $\sigma(b)$
        whose assignment $\sigma$ consists of terms over $\Gamma$ and whose
        hypotheses satisfy $\Lib, \Gamma \vdash \sigma(\psi_i)$ for each
        $i \le m$.
  \item \textbf{Change of variables.} Either $T = \forall y{:}\mathsf{D}.\; S$
        for a new variable $y$ ranging over a domain $\mathsf{D}$, or
        $S = \Pi\;\forall x{:}\mathsf{D}.\;\varphi$ and
        $T = \Pi\;\varphi[t/x]$ for a term $t$ of $\mathsf{D}$ built from
        the variables of $\Pi$ and the symbols of $\Lib$. If $\mathsf{D}$ is a
        kind of structure on an underlying set, such as topologies on a
        set, then $t$ must be a structure that $S$ already carries on that
        set, a variable of $\Pi$ or a structure occurring in $S$.
  \item \textbf{Structural weakening.} A one-directional move of pure logic,
        at a positive occurrence for the first three and in the prefix for
        the last two. Dropping a conjunct,
        $\mathcal{C}[\varphi \wedge \psi] \steps \mathcal{C}[\varphi]$;
        adding a disjunct,
        $\mathcal{C}[\varphi] \steps \mathcal{C}[\varphi \vee \psi]$ for a
        formula $\psi$ over $\Gamma$; witnessing an existential,
        $\mathcal{C}[\varphi[t/x]] \steps \mathcal{C}[\exists x{:}\mathsf{D}.\;\varphi]$
        for a term $t$ of $\mathsf{D}$ over $\Gamma$; adding a hypothesis,
        $\Pi\;\varphi \steps \Pi\;(\psi \to \varphi)$ for a formula $\psi$
        over the variables of $\Pi$; and removing a hypothesis the others
        imply, $\Pi\;(\psi \to \varphi) \steps \Pi\;\varphi$ when $\psi$
        follows from the hypotheses of $\Pi$ in every interpretation.
\end{itemize}
\end{definition}

Every step preserves implication, and no step proves anything. Rearrangement
only reorders, a bridge only replaces a phrasing by one the library
identifies with it, a change of variables only adds a variable or fills one
in, and a weakening only removes. The steps differ in whether they can be
undone. Rearrangement and bridge rewriting preserve equivalence in context
and can be reversed, but a change of variables or a weakening can give a
special case or a weaker statement. Read one way, $S \steps T$ therefore
says only that $T$ is a rephrasing, a specialization or a weakening of $S$
by content-free means, and the relation below asks for a chain in each
direction (Figure~\ref{fig:chain}). The relation also depends on the
library, through the bridges it contains and the facts that meet their
hypotheses.

\begin{definition}[\Program]
\label{def:semanteq}
Statements $P$ and $Q$ are \textbf{\emph{\Program{}}} over $\Lib$, pronounced
``semantic'', written $P \semeq Q$, if there
are statements $S_1, \dots, S_k \in \mathcal{S}$ with $k \ge 1$, $S_1 = P$,
$S_k = Q$, and $S_i \steps S_{i+1}$ for each $i < k$, and likewise a chain
from $Q$ to $P$. In other words, $P \semeq Q$ exactly when $P \steps^{*} Q$
and $Q \steps^{*} P$, where $\steps^{*}$ is the reflexive transitive closure
of $\steps$.
\end{definition}

\begin{figure}[t]
\centering
\begin{tikzpicture}[x=1mm, y=1mm, font=\sffamily,
  st/.style={draw=seGray!70, fill=seGrayBg, rounded corners=3pt, line width=0.5pt,
             minimum width=8mm, minimum height=5.5mm, inner sep=2pt, font=\small, text=seInk},
  endpt/.style={st, draw=seInk, fill=seInk!12, text=seInk, line width=0.8pt, font=\small\bfseries},
  after rearr/.style={st, draw=seBlue!80, fill=seBlueBg},
  after bridge/.style={st, draw=seGreen!80, fill=seGreenBg},
  after chvar/.style={st, draw=seViolet!75, fill=seVioletBg},
  rev/.style={{Latex[length=1.8mm,width=1.3mm]}-{Latex[length=1.8mm,width=1.3mm]}, line width=0.7pt},
  one/.style={-{Latex[length=1.8mm,width=1.3mm]}, line width=0.7pt},
  back/.style={line width=0.6pt, seArrow},
  lab/.style={font=\sefiglab, inner sep=1pt, above=8pt},
  sub/.style={font=\scriptsize, inner sep=1pt, below=4.5pt}]
\node[endpt]        (P)  at (0,0)   {$P$};
\node[after rearr]  (S1) at (32,0)  {$S_1$};
\node[after bridge] (S2) at (64,0)  {$S_2$};
\node[after chvar]  (S3) at (96,0)  {$S_3$};
\node[endpt]        (Q)  at (128,0) {$Q$};
\draw[rev, seBlue]   (P)  -- node[lab, text=seBlue]{rearrangement}
                            node[sub, text=seBlue!80!black]{$\varphi\leftrightarrow\varphi'$} (S1);
\draw[rev, seGreen]  (S1) -- node[lab, text=seGreen!75!black]{bridge rewriting}
                            node[sub, text=seGreen!65!black]{$\sigma(u)\mapsto\sigma(v)$} (S2);
\draw[one, seViolet] (S2) -- node[lab, text=seViolet]{change of variables}
                            node[sub, text=seViolet!80!black]{$\forall x.\,\varphi\mapsto\varphi[t/x]$} (S3);
\draw[one, seYellow!85!black] (S3) -- node[lab, text=seYellow!70!black]{structural weakening}
                            node[sub, text=seYellow!60!black]{$\varphi\wedge\psi\mapsto\varphi$} (Q);
\node[st] (T1) at (96,-12.5) {$T_1$};
\node[st] (T2) at (32,-12.5) {$T_2$};
\draw[one, back] (Q.south)  to[out=-90,in=0]   (T1.east);
\draw[rev, back] (T1) -- node[font=\scriptsize\sffamily, text=seInkLight, inner sep=1pt, below=2.5pt]
                   {the chain back from $Q$ to $P$} (T2);
\draw[one, back] (T2.west) to[out=180,in=-90] (P.south);
\end{tikzpicture}
\caption{Chains witnessing $P \semeq Q$. Rearrangement and bridge rewriting
can be reversed, but a change of variables or a weakening need not be, so a
separate chain leads back from $Q$ to $P$.}
\label{fig:chain}
\end{figure}

\begin{proposition}
\label{prop:semanteq}
The relation $\semeq$ is an equivalence relation. Moreover, for all
statements $P$, $Q$,
\[
  P \semeq Q \quad\Longrightarrow\quad \Lib\vdash(P\leftrightarrow Q).
\]
\end{proposition}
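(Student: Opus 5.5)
The proof splits into two parts. The equivalence-relation properties follow formally from Definition~\ref{def:semanteq}. Soundness reduces, via transitivity of implication, to a one-step lemma: if $S \steps T$ then $\Lib \vdash S \to T$.

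\textbf{Equivalence relation.} Reflexivity holds because $\steps^{*}$ is reflexive: the chain with $k=1$ and $S_1 = P$ witnesses $P \steps^{*} P$ in both directions. Symmetry is immediate, since the definition of $P \semeq Q$ is symmetric in $P$ and $Q$: it asks for $P\steps^{*}Q$ and $Q\steps^{*}P$. For transitivity, suppose $P\semeq Q$ and $Q\semeq R$. Concatenating the chain from $P$ to $Q$ with the chain from $Q$ to $R$, identifying the shared endpoint $Q$, gives a chain from $P$ to $R$. The reverse chains concatenate in the same way, so $P \semeq R$. No property of the individual steps is needed here.

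\textbf{One-step soundness.} I would prove $\Lib\vdash S\to T$ by cases on Definition~\ref{def:faithful-step}. The common tool is a \emph{monotonicity lemma}, proved by induction on the path from the root to the occurrence. It says: if $\Lib,\Gamma\vdash\varphi\leftrightarrow\varphi'$, where $\Gamma$ is the context of the occurrence, then $\Lib\vdash\mathcal{C}[\varphi]\leftrightarrow\mathcal{C}[\varphi']$. If instead only $\Lib,\Gamma\vdash\varphi\to\varphi'$ holds and the occurrence is positive, then $\Lib\vdash\mathcal{C}[\varphi]\to\mathcal{C}[\varphi']$. The inductive cases are the congruence and monotonicity of $\wedge,\vee,\forall,\exists$ and of the consequent of $\to$; for the antecedent of $\to$, the polarity flips. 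Term occurrences are handled the same way, using congruence of equality.

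With this lemma the cases go as follows.
\begin{itemize}
\item \emph{Rearrangement.} A propositional tautology in the atoms, instantiated to the actual atoms, is provable in $\Lib$. Only the first rearrangement condition is used; the ``every atom matters'' condition plays no role in soundness.
\item \emph{Bridge rewriting.} Instantiating $b$ by $\sigma$ and discharging its hypotheses with $\Lib,\Gamma\vdash\sigma(\psi_i)$ gives $\sigma(u)\asymp\sigma(v)$ over $\Gamma$. The congruence half of the lemma then applies, in either orientation.
\item \emph{Change of variables.} Adding a vacuous $\forall y{:}\mathsf D$ is provable from $S$. For instantiation, the leading segment $\Pi$ is introduced, and the hypothesis is then specialised at $t$, which is well typed in the context of $\Pi$.
\item \emph{Weakening.} Dropping a conjunct, adding a disjunct, and introducing $\exists$ from a witness $t$ over $\Gamma$ are each implications over $\Gamma$, at a positive occurrence, so the monotonicity half applies. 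Adding a hypothesis is $\varphi\to(\psi\to\varphi)$ under $\Pi$. Removing an implied hypothesis uses the fact that $\psi$ holds in every interpretation of the hypotheses of $\Pi$, so it is a logical consequence and hence provable; this is then fed into $\psi\to\varphi$.
\end{itemize}

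\textbf{Chaining.} Induction on $k$, using transitivity of $\to$, gives $\Lib\vdash P\to Q$ from $P\steps^{*}Q$, and likewise $\Lib\vdash Q\to P$. Combining the two yields $\Lib\vdash P\leftrightarrow Q$.

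I expect the main obstacle to be the monotonicity lemma in a dependently typed setting. When a step rewrites a term, later types, or hypotheses that mention that term, may change type. This is especially delicate when a term is replaced under binders whose domains depend on it. I would first restrict to occurrences where motive-correct rewriting applies. This is where the standing convention helps: an introduced term contains no proof other than a hypothesis of its context, so the rewrite motive never has to transport proof terms. Cases that remain would be justified by passing to an equivalent non-dependent form via rearrangement. A smaller subtlety is that ``every interpretation'' must be matched with provability in $\Lib$. I would handle it by appeal to completeness, or, more simply, by reading ``every interpretation'' as derivability in pure logic, which embeds into $\Lib$.
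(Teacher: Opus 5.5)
Your proposal is correct and follows the paper's proof. The equivalence-relation properties come from the one-statement chain, the symmetric definition and concatenation of chains, and soundness comes from checking case by case that each step gives $\Lib \vdash S \to T$ (your monotonicity lemma makes explicit the paper's inline polarity argument) and composing along the chain. One small correction: a rearrangement is a first-order equivalence valid over possibly empty sorts, which includes quantifier moves, not only a propositional tautology, but your closing appeal to completeness already covers this, provided you take completeness for that semantics.
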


\begin{proof}
Each permitted step is implication-sound. $S \steps T$ gives
$\Lib \vdash S \to T$. Rearrangement and bridge rewriting replace a
subexpression by one that is equivalent to it in its context, in the second
case because the hypotheses of the bridge instance are provable there. Adding
an unused variable and specializing a universal quantifier are valid
implications, and so is each structural weakening at a positive occurrence or
in the prefix. Composing the steps of a chain gives $\Lib \vdash P \to Q$, and
chains in both directions give $\Lib \vdash P \leftrightarrow Q$. Reflexivity
is the chain of one statement, symmetry is built into
Definition~\ref{def:semanteq}, and transitivity concatenates chains.
Appendix~\ref{app:relation-proof} gives each case in detail.
\end{proof}

\begin{proposition}[Strictly finer]
\label{prop:finer}
Rearrangement, change of variables and weakening alone, with the symbols of
$\Lib$ treated as opaque, relate strictly fewer statements than logical
equivalence in $\Lib$ does. The statements $\forall x{:}\mathbb{N}.\ 0 \le x$
and $(\forall x{:}\mathbb{N}.\ 0 \le x) \wedge (\forall x{:}\mathbb{N}.\ x = x)$
are logically equivalent in $\Lib$, and no chain of these steps derives the
second from the first.
\end{proposition}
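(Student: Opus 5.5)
The plan is a soundness-and-countermodel argument. With the symbols of $\Lib$ treated as opaque, every step among rearrangement, change of variables and structural weakening preserves truth under \emph{every} interpretation of those symbols, and not only under their intended meaning in $\Lib$. So a chain from the first statement to the second would make the implication valid in every interpretation. Exhibiting one interpretation where the first statement is true and the second false then rules out any chain. Logical equivalence of the two statements in $\Lib$ is immediate: $0 \le x$ holds for every natural number and $x = x$ is reflexivity, so both sides are provable. Hence the relation generated by these steps is strictly contained in logical equivalence.

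First I check, case by case in Definition~\ref{def:faithful-step}, that $S \steps T$ implies $S \to T$ holds in every interpretation $I$ that assigns arbitrary denotations to $0$, $\le$ and $=$ on a nonempty carrier for $\mathbb{N}$.
\begin{itemize}\itemsep1pt
\item \textbf{Rearrangement.} By definition, $\varphi \leftrightarrow \varphi'$ holds under every interpretation of the symbols in its atoms, and equations are explicitly atoms like any other. Substituting $\varphi'$ for $\varphi$ in a context $\mathcal{C}$ therefore preserves truth in $I$, by induction on $\mathcal{C}$.
\item \textbf{Change of variables.} Adding a vacuous $\forall y{:}\mathsf{D}$ and instantiating $\forall x$ at a term $t$ are first-order valid in any interpretation.
\item \textbf{Weakenings.} Dropping a conjunct, adding a disjunct and witnessing an existential are valid implications at positive occurrences, by the usual monotonicity induction on positive contexts. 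Adding a hypothesis in the prefix is valid. Removing a hypothesis is valid because the definition requires that $\psi$ follow from the prefix hypotheses \emph{in every interpretation}, which covers $I$.
\end{itemize}
Composing along a chain then gives $P \to Q$ true in $I$.

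For the countermodel, take $I$ with carrier $\mathbb{N}$, interpret $\le$ as the total relation (or as the usual order) and $0$ as anything, and interpret the atom symbol $=$ as the empty relation. Then $\forall x.\ 0 \le x$ is true in $I$, while $\forall x.\ x = x$ is false because the carrier is nonempty. So the conjunction is false, and no chain of the allowed steps leads from the first statement to the second.

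The main obstacle is justifying that $=$ may be interpreted non-reflexively. Under genuine equality, $x = x$ is a law of pure logic, and adding it as a conjunct would arguably count as a rearrangement. I would lean on two points in the paper: the clause ``equations are atoms like any other'' in the rearrangement step, and the hypothesis of the proposition that symbols of $\Lib$ are opaque. Together these mean that every interpretation quantified over in the definitions ranges over all binary relations for $=$. I would also verify that no step quietly depends on reflexivity of $=$. Bridge rewriting is the only step that uses equality semantically, and it is excluded from this proposition, which is exactly why the proposition is stated without it.
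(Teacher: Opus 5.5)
Your proposal is correct and follows the paper's proof: the paper likewise shows, via Lemma~\ref{lem:uses} restricted to chains without bridges, that each remaining step is sound under every interpretation in which equality is just an atom, and it rules out a chain with the same countermodel ($\mathbb{N}$ nonempty, $\le$ true everywhere, equality empty). Two small points: your alternative ``or as the usual order'' works only if $0$ then denotes the least element rather than ``anything'', and your interpretation must also give arbitrary meanings to every other sort and symbol, as the paper's does, because intermediate statements of a chain may mention them.
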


\begin{proof}
We use a property of every chain, proved as Lemma~\ref{lem:uses} in
Appendix~\ref{app:relation-proof}: if $S \steps^{*} T$, then $S \to T$
follows by logic alone from the conclusions of the bridge instances used,
each taken under the hypotheses of the context where it was applied, with
equality read as an unknown relation of which only the replacement of equals
by equals is assumed, so no chain uses that $t = t$ holds. A chain without
bridges from the first statement to the second would therefore make the first
imply the second under every interpretation of $\mathbb{N}$, $\le$ and
equality. The interpretation that makes $\mathbb{N}$ a nonempty set, $\le$ true
everywhere and equality the empty relation, with every other sort and symbol
interpreted arbitrarily, makes the first statement true and the second false.
\end{proof}

Proposition~\ref{prop:semanteq} says that \Program{} implies logical
equivalence, and Proposition~\ref{prop:finer} shows that logic alone never
identifies two statements by their truth values, so without bridges even a
provable statement such as $4 = 4$ cannot be added as a new conjunct. The rest
of the relation therefore depends on the bridges the library states, and this
is where our main assumption enters, the only place where a mathematical
judgement enters the definition. We assume that in any well-developed
library the set $\mathcal{B}$, the theorems whose conclusion is an equation or
a biconditional, reflects how the community identifies mathematical objects
and formulas of similar meaning, so that rewriting with a bridge changes the
form of a statement but not what it says. Mathlib does not meet this ideal
everywhere, but we believe $\mathcal{B}$ is a good enough proxy, and \Eval{}
and \Checker{} each make adjustments for its imperfections
(Appendices~\ref{app:eval} and~\ref{app:checker}).

\subsection{\Eval{}: a proof search without a model}
\label{sec:eval}

Given a reference $R$ and a candidate $C$, \Eval{} accepts the pair at once
if the two are identical up to renaming and definitional unfolding. Otherwise
it attempts $R \Rightarrow C$ and $C \Rightarrow R$, and it accepts the pair
only if both directions succeed.

\begin{figure}[t]
\centering
\resizebox{\ifdim\width>\textwidth\textwidth\else\width\fi}{!}{%
\begin{tikzpicture}[x=1mm, y=1mm, font=\sefigfont,
  setup/.style={secard, text width=38mm, minimum height=9mm},
  gate/.style={secard orange, text width=21mm, minimum height=16mm},
  audit/.style={secard orange, text width=22.5mm, minimum height=15mm},
  accept/.style={secard green, minimum height=15mm},
  arr/.style={searrow},
  cand/.style={searrow, seBlue},
  back/.style={searrow back, seOrange},
  lab/.style={selab}]
\node[setup, anchor=west] (ident) at (0,0)
  {$R$ and $C$ identical up to\\ renaming or unfolding?};
\node[accept, anchor=west, text width=17mm, minimum height=9mm] (pacc) at (52,0)
  {\textbf{pair}\\ \textbf{accepted}};
\draw[arr] (ident) -- node[lab, above] {yes} (pacc);
\draw[seGray!55, dashed, rounded corners=4pt, line width=0.5pt] (-1.5,-9) rectangle (143.5,-51);
\node[lab, anchor=south east, font=\sefiglab\itshape] at (142.5,-8.8)
  {one implication direction, run for $R \Rightarrow C$ and for $C \Rightarrow R$};
\node[gate, anchor=west] (gate) at (0,-18)
  {\textbf{relevance gate}\\ probe goal parts\\ without the\\ antecedent};
\node[sestage=10, text width=19.2mm, minimum height=16mm, anchor=west] (s1) at (25.7,-18)  {1.~direct use};
\node[sestage=20, text width=19.2mm, minimum height=16mm, anchor=west] (s2) at (49.6,-18)  {2.~normalisation};
\node[sestage=30, text width=19.2mm, minimum height=16mm, anchor=west] (s3) at (73.5,-18)  {3.~logical\\ rearrangement};
\node[sestage=42, text width=19.2mm, minimum height=16mm, anchor=west] (s4) at (97.4,-18)  {4.~restricted\\ lemma\\ discovery};
\node[sestage=55, text width=19.2mm, minimum height=16mm, anchor=west] (s5) at (121.3,-18) {5.~bridge\\ rewriting};
\draw[arr] (gate.north |- ident.south) -- node[lab, right, pos=0.35] {no} (gate.north);
\draw[arr] (gate) -- (s1);
\draw[arr] (s1) -- (s2); \draw[arr] (s2) -- (s3);
\draw[arr] (s3) -- (s4); \draw[arr] (s4) -- (s5);
\node[audit, anchor=west] (a1) at (25.7,-41.5)
  {\textbf{proof-term}\\ \textbf{grading}\\ check permitted\\ constants};
\node[audit, anchor=west] (a2) at (57.6,-41.5)
  {\textbf{explosion check}\\ check for\\ ex falso};
\node[audit, anchor=west] (a3) at (89.5,-41.5)
  {\textbf{antecedent-free}\\ \textbf{replay}\\ reject if replay\\ closes the goal};
\node[accept, anchor=west, text width=18mm] (dacc) at (122.5,-41.5)
  {\textbf{direction}\\ \textbf{accepted}};
\draw[arr] (a1) -- (a2); \draw[arr] (a2) -- (a3);
\draw[arr] (a3) -- node[lab, above=0.5pt, align=center, inner sep=0.5pt] {all\\ pass} (dacc);
\coordinate (busy) at (0,-30.5);
\draw[line width=0.5pt, seArrow!70] (s1.south |- busy) -- (s5.south |- busy);
\foreach \s in {s1,s2,s3,s4,s5} \draw[line width=0.5pt, seArrow!70] (\s.south) -- (\s.south |- busy);
\draw[cand] (a1.north |- busy) -- node[lab, right, xshift=2pt, text=seBlue!80!black] {candidate proof} (a1.north);
\draw[back] (a3.north) -- node[lab, right, xshift=2pt, text=seOrange!80!black] {rejected, next stage} (a3.north |- busy);
\end{tikzpicture}}
\caption{One implication direction. The relevance gate runs before the
search. Each proof that a stage finds must pass the three audits below it, or
the search moves on to the next stage.}
\label{fig:pipeline}
\end{figure}

Each direction runs the five stages of Figure~\ref{fig:pipeline} in order,
from cheap to expensive, and stops at the first proof that passes the audits.
The first stage applies the antecedent to the goal directly. The second
rewrites both statements into a normal form with a fixed set of logical
identities and cast lemmas and tries again, and the third takes both apart
into their logical components and matches them piece by piece. No library
fact outside the fixed set enters these three stages. Rearrangement alone
settles a large share of real pairs, since formalizers most often differ in
presentation rather than in substance (Appendix~\ref{app:chains}). The last
two stages collect a small pool of library equations and biconditionals whose
sides match parts of the two statements, and rewrite with them in either
direction, one bridge at a time.

\paragraph{Audits.} The stages call Lean's simplifier, rewriter, and backward
search. We choose the lemmas that these procedures may
use, but we do not control how they build the proof term. A proof they find
may hide work in an auxiliary constant, close the goal by a route other than
the intended rewrite, or use the antecedent in a way that contributes
nothing. Four audits guard against such proofs (Figure~\ref{fig:pipeline}),
and passing them is the acceptance criterion of the program, not a
consequence of Proposition~\ref{prop:semanteq}. \textbf{Relevance} runs
before the search. It refuses the direction if the stage procedures prove,
without the antecedent, a part of the goal that the antecedent does not
assert. The other three audits inspect every proof that the search finds.
\textbf{Proof-term grading} expands auxiliary constants and admits only the
antecedent, logical rules, bridges, and a few further classes
(Appendix~\ref{app:eval}). The \textbf{explosion check} fails a proof that
derives \texttt{False} from its context and concludes the goal from it, but it
allows a refutation inside a negation, as in proving $\lnot Q$ by assuming
$Q$. \textbf{Antecedent-free replay} reruns every accepting transformation
with the antecedent withheld, and it discards the acceptance if the goal still
closes. Removing any one audit admits the errors it targets
(\S\ref{sec:ablation}).

\section{Evaluation}
\label{sec:evaluation}

We evaluate \Eval{} as a judge of statement equivalence. Our main benchmark is ProofNetVerif
\citep{poiroux2025reliable}, 3{,}752 pairs of a reference and a candidate
formalization in a validation and a test split, each labelled by people as
faithful or not to the informal statement. We score the 1{,}303 of the
1{,}452 test pairs that elaborate on our toolchain. Our criterion, equivalence
with the reference, is different, and recent audits have found errors in the
statements of formal benchmarks
\citep{ospanov2025minif2f,ammanamanchi2026faults}, so we review the test pairs
labelled equivalent. The review changes 102 labels, all from equivalent to
non-equivalent\footnote{We release the adjudicated relabelling of the test
split with our code.}, and we report both label sets. For 88 of the relabelled
pairs the review was prompted because no configuration of \Eval{} accepts them
and \Checker{} certifies them in at most one direction (\S\ref{sec:models}).
The other 14 came from a language-model screen and manual rulings, and a
person made every final decision (Appendix~\ref{app:data}). We compare with
\beqplus{} ported to our toolchain (Appendix~\ref{app:repro}).

\subsection{Head to head on ProofNetVerif}
\label{sec:pnv}

Table~\ref{tab:pnv} compares the two systems on the same pairs. Under the
adjudicated labels \Eval{} accepts 14 more equivalent pairs than \beqplus{},
for a precision of 100 against 95.8 and a recall of 65.6 against 61.8. The
published labels give the same ranking, and there the \beqplus{} numbers are
close to the 98.0 and 48.3 that its authors report on the whole benchmark
under an older Lean version. \Eval{} makes no false acceptance under either
label set, and \beqplus{} makes 8 and 10 (Appendix~\ref{app:beqplus} shows
two). 
On the validation split, which we do not score, we know of one false
acceptance by \Eval{} (Appendix~\ref{app:eval}).

\begin{table}[t]
\centering

\caption{Equivalence judgements on the ProofNetVerif test split, scored on
the 1{,}303 of 1{,}452 pairs that elaborate on our toolchain, under the
published labels and under our adjudicated labels.}
\label{tab:pnv}
\begin{tabular}{llrrrrrrr}
\toprule
Labels & System & TP & FP & FN & TN & Precision & Recall & F1 \\
\midrule
Published   & Ours                    & 242 & 0  & 229 & 832 & \textbf{100.0} & \textbf{51.4} & \textbf{67.9} \\
            & \beqplus{}$^{\ddagger}$ & 230 & 8  & 241 & 824 & 96.6 & 48.8 & 64.9 \\
\midrule
Adjudicated & Ours                    & 242 & 0  & 127 & 934 & \textbf{100.0} & \textbf{65.6} & \textbf{79.2} \\
            & \beqplus{}$^{\ddagger}$ & 228 & 10 & 141 & 924 & 95.8 & 61.8 & 75.1 \\
\bottomrule
\end{tabular}

\vspace{2pt}
\footnotesize $^{\ddagger}$ Ported to our toolchain with compatibility fixes and no
change of method.
\end{table}

\subsection{Audit ablation}
\label{sec:ablation}

To test whether each audit of \S\ref{sec:eval} does the work it was built
for, and whether the precision gap comes from the design rather than from the
benchmark's particular pairs, we build an \emph{ablation set} of 100
non-equivalent pairs (Appendix~\ref{app:data}). Fifty are ProofNetVerif test
pairs, including all ten that \beqplus{} accepts. Fifty are constructed in
four classes. A real statement is
given one extra claim, the most common defect among the relabelled benchmark
pairs, or it is paired with a neighbouring theorem about the same objects,
with an unrelated true fact, or with a statement whose hypotheses contradict
each other. Every added claim and every partner statement is easy to prove
from the library, so a rejection comes from the restricted steps and the
audits rather than from a lack of proving power. The set does not estimate
how often such pairs occur in practice. We remove each audit in turn, and all
four at once, and run \beqplus{} on the same pairs (Table~\ref{tab:ablation}).

Each audit catches its own family of pairs. Removing relevance admits 19
pairs, 15 of them added claims. Removing proof-term grading admits 8, 6 of
them unrelated true facts. Removing the explosion check admits 3, 2 of them
benchmark pairs, and removing the antecedent-free replay admits 2. Removing
all four admits 35, more than the 32 of the single removals together, so
removing several audits exposes failures that no single removal shows. The search alone still rejects the
other 65, including 20 of the 54 pairs that \beqplus{} accepts, so its
restricted steps and the audits both contribute.

\begin{table}[t]
\centering

\caption{Contribution of each audit of \S\ref{sec:eval} on the ablation set
of 100 non-equivalent pairs, fifty from the ProofNetVerif test split and fifty
constructed. Entries count erroneous acceptances.}
\label{tab:ablation}
\footnotesize
\setlength{\tabcolsep}{3pt}
\resizebox{\textwidth}{!}{%
\begin{tabular}{lrrrrrr}
\toprule
 & \multicolumn{1}{c}{\shortstack{ProofNetVerif\\test split (50)}}
 & \multicolumn{1}{c}{\shortstack{Added\\claim (18)}}
 & \multicolumn{1}{c}{\shortstack{Neighbouring\\theorem (14)}}
 & \multicolumn{1}{c}{\shortstack{Unrelated\\true fact (14)}}
 & \multicolumn{1}{c}{\shortstack{Vacuous\\hypotheses (4)}}
 & Total \\
\midrule
Ours, all checks                     & 0  & 0  & 0 & 0  & 0 & \textbf{0} \\
\quad without relevance              & 0  & 15 & 4 & 0  & 0 & 19 \\
\quad without proof-term grading     & 0  & 0  & 2 & 6  & 0 & 8 \\
\quad without explosion check        & 2  & 0  & 0 & 0  & 1 & 3 \\
\quad without antecedent-free replay & 0  & 0  & 1 & 1  & 0 & 2 \\
\quad without all four               & 2  & 17 & 7 & 7  & 2 & 35 \\
\midrule
\beqplus{} (our toolchain)           & 10 & 18 & 9 & 13 & 4 & 54 \\
\bottomrule
\end{tabular}}
\end{table}

\subsection{Search strength and saturation}
\label{sec:capability}

We next ask how much more search helps on the \emph{equivalent set}. It holds the
407 pairs labelled equivalent after the review, 369 ProofNetVerif test pairs
and 38 of the 40 pairs of a curated recall suite whose references come from
ProofNet\# \citep{proofnet,poiroux2025reliable}, PutnamBench
\citep{tsoukalas2024putnambench}, and
Compfiles \citep{renshaw_compfiles}
(Appendix~\ref{app:data}), so its recall is not comparable with
Table~\ref{tab:pnv}. We raise the depth of chained rewrites, the number of
bridges admitted per step, and the number of intermediate statements kept at
each depth far past their useful range (Table~\ref{tab:capability},
Appendix~\ref{app:grid}). The base configuration accepts 65.4\% of the pairs.
Doubling the bridge pool (D1+W1) adds two pairs and a second chained rewrite (D2) adds one, for 66.1\%. Eight stronger configurations add none, while the mean time per
pair grows from 50.7 to 271.3 seconds. The median stays at 2.0 seconds in
every configuration, because a larger budget only adds time to the pairs that
fail. Figure~\ref{fig:recall-runtime} shows the same plateau against a time
budget per pair, with \beqplus{} the cheaper judge below about 15 seconds per
pair and \Eval{} ahead above it.

\begin{figure}[t]
\centering
\includegraphics[width=\textwidth]{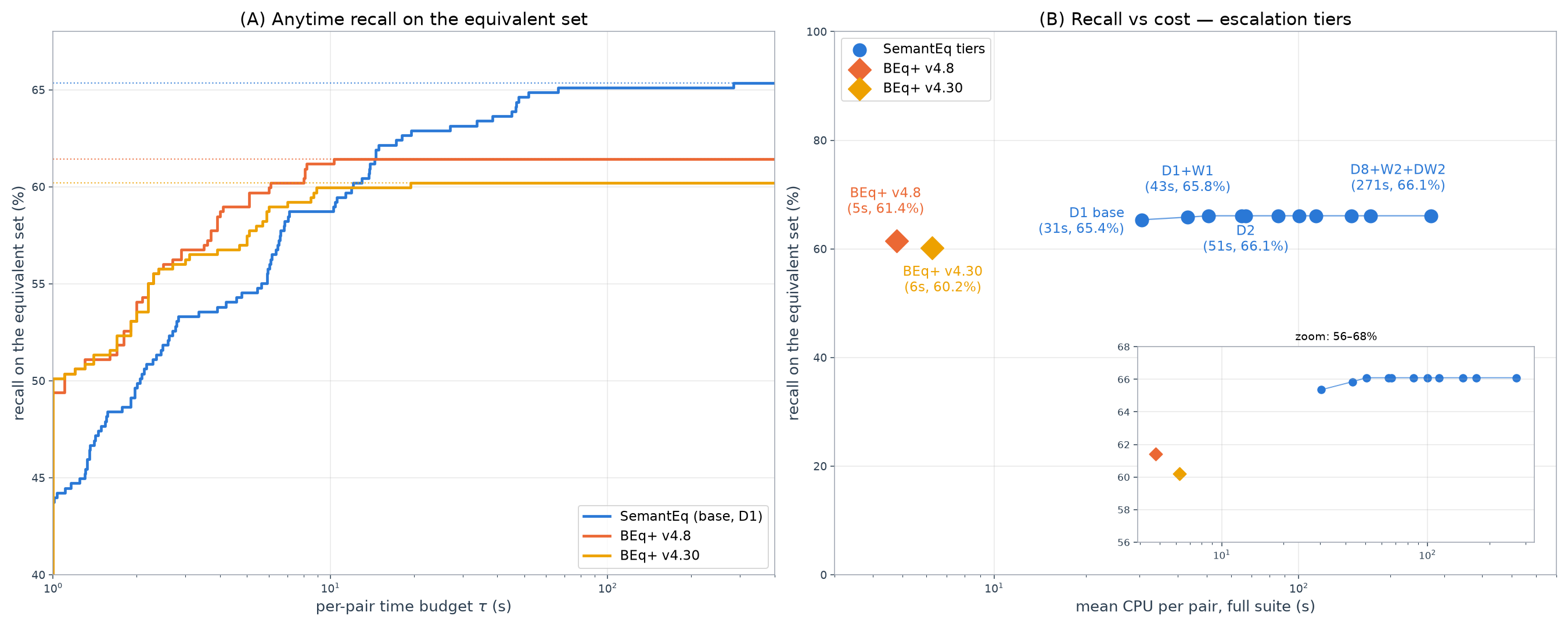}
\caption{(A) Recall attainable when each pair is allowed at most $\tau$
seconds, computed from the uncapped runs on the \emph{equivalent set}. \beqplus{} leads
below roughly 15\,s per pair, and the curves level off at $65.4\%$ for \Eval{}
and at $61.4\%$ and $60.2\%$ for \beqplus{} on Lean v4.8 and on our toolchain. (B) Recall against mean cost per pair for each search configuration
(Appendix~\ref{app:grid}).}
\label{fig:recall-runtime}
\end{figure}

The saturation is a property of this search algorithm. By design the search
never proves anything on its own. It builds no new object, proves no new
bridge, and meets a bridge's hypotheses only with local hypotheses, its
fixed rewrite set and the numeral facts of Appendix~\ref{app:eval}, so a
larger budget only lets it try more of the same steps. The pairs it misses might need an object or a structure that the
goal only implies to be built before the antecedent applies, or a bridge that the library
does not state, or states in one direction only, so the proof has to
establish the identification itself before it can rewrite along it. Both require the
prover to establish something new, which is exactly what the search is built
not to do. The bottleneck is therefore the set of operations, not the search
budget. A prover that can take these steps can close such pairs, but it then
supplies content of its own, and whether that content was used faithfully
can only be judged on the finished proof, so Section~\ref{sec:models}
changes the prover and keeps the audit fixed.

\section{Decoupling Proof Generation from Proof Acceptance}
\label{sec:models}

We test a language model as the prover on the \emph{certification set}, the
full 511 pairs labelled equivalent before the adjudication, namely 471 pairs
of the ProofNetVerif test split and the 40 recall pairs. After the
adjudication the set holds 407 equivalent and 104 non-equivalent pairs. For
each pair and each of four prompt styles, which range from rewriting only to
expedient shortcuts (Appendix~\ref{app:prompts}), GPT 5.6 Sol in Codex writes
a proof of each direction in one session per pair. A proof that compiles
establishes the implication, not that the statements say the same thing. Under the expedient and the unrestricted prompts the
model's proofs compile in both directions for 43 and 45 of the 104
non-equivalent pairs, so a judge that only checks compilation would accept
them. We therefore keep acceptance separate from generation. Every returned
proof is checked by \Checker{}, and a pair is accepted only when both
directions are certified.

\subsection{\Checker{}: auditing supplied proofs}
\label{sec:checker}

\Checker{} is built from the audits of \Eval{}. The search earned its
precision by restricting the steps a proof could take
and by auditing the proof it found. A black-box prover takes the first away, and since its search cannot be steered from inside what remains is the
audits that read the finished proof term. Lean checks that a supplied proof
is valid, and \Checker{} then tests it against an audit policy based on
Definition~\ref{def:faithful-step}. It reads the proof term that Lean builds
from the tactic script and gives each position one of three roles
(Figure~\ref{fig:checker}): the goal derivation, an argument handed to the
antecedent, or the justification of a rewrite. A constant is judged by its
shape and by the role of its position. On the goal derivation only the
antecedent, logical rules, and bridges are admitted, a bridge must convert the
goal rather than state it, and the antecedent must do work in every part of
the proof, such as each branch of a case split. Each
rejection names the check that failed, so \Checker{} can serve as a
standalone judge or give feedback to a prover in a loop
(Appendix~\ref{app:cases}).

\begin{figure}[t]
\centering
\includegraphics[width=\textwidth]{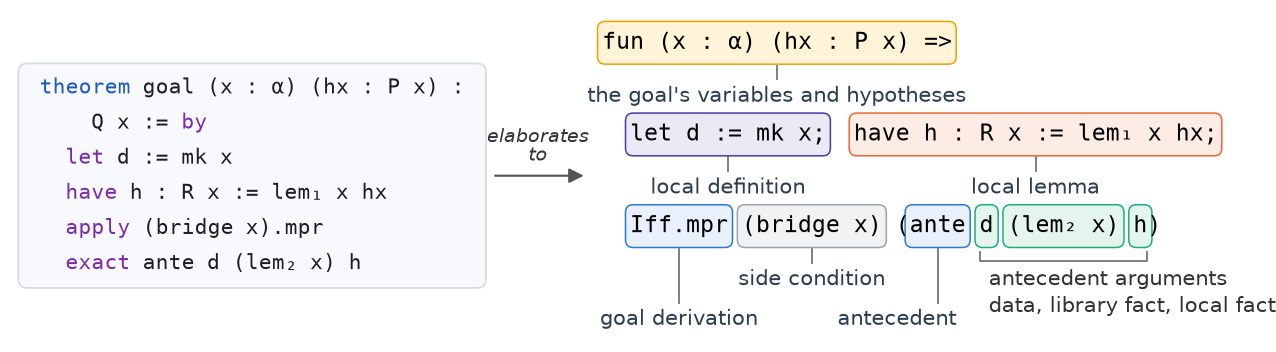}

\caption{The auditor gives each position in a proof term a role. The same
library fact can be admitted as the justification of a rewrite and refused
when it proves the goal directly.}
\label{fig:checker}
\end{figure}
Model proofs also contain material that the search never produces, such as
local lemmas and objects built along the way, and three rules cover it.
\begin{itemize}\itemsep1pt
  \item \textbf{Local lemmas.} A lemma proved from the antecedent is audited
        with its proof expanded. A lemma proved without it is judged by its
        shape like a library fact, and each direction of its proof is then
        checked as an implication proof of its own, so a direction that proves
        its conclusion without using its hypothesis fails. No proof may contain
        an unproved hole.
  \item \textbf{Instantiation of the antecedent.} The arguments of each
        application of the antecedent must come from the types, objects, and
        structures of the goal. A replay then withholds the antecedent but
        keeps the proof's own constructions, and it refuses the proof if the
        goal closes with those constructions but not from the library alone.
  \item \textbf{Library facts at a hypothesis of the antecedent.} Beyond the
        definition, the auditor tolerates a library fact that meets a
        hypothesis of the antecedent and records every such use, because a
        change of domain often leaves a fact that the library states in one
direction only. A stricter policy refuses such uses when they draw on
        no hypothesis of the goal (Appendix~\ref{app:checker}).
\end{itemize}

The auditor has no relevance audit and tolerates library facts in the
justification of a rewrite, because it sees only the final proof term. The
search has no instantiation check, and it builds no data objects, but instance
synthesis once supplied a structure that the goal does not carry
(Appendix~\ref{app:eval}).

\subsection{Results}
\label{sec:campaign}

\Checker{} certifies none of the 104 non-equivalent pairs in both directions
under any style (Table~\ref{tab:campaign}). It certifies many of them in one
direction, as it is expected to have implication proofs that adds no mathematical content when one statement is a special case of the other, and
the certified direction then shows which statement is at least as strong. The
reasons for rejection often show where they differ. A recurring pattern
applies the antecedent at an object the goal never names, or binds it and
closes the goal with a library theorem (Appendix~\ref{app:cases}).

\begin{table}[t]
\centering
\caption{Certificates on the certification set, the 407 equivalent pairs and
the 104 that our review relabelled non-equivalent. A pair is accepted only
when both directions are certified, and precision and recall are over that
acceptance. The reference rows apply the same count to \Eval{} and
\beqplus{}.}
\label{tab:campaign}
\setlength{\tabcolsep}{3.5pt}
\resizebox{\textwidth}{!}{%
\begin{tabular}{lrrrcrrrcrr}
\toprule
 & \multicolumn{3}{c}{Equivalent (407)} & & \multicolumn{3}{c}{Non-equivalent (104)} & & & \\
\cmidrule(lr){2-4}\cmidrule(lr){6-8}
 & both & one & neither & & both & one & neither & & Precision & Recall \\
\midrule
\beqplus{} & 245 & 29 & 133 & & 2 & 26 & 76 & & 99.2  & 60.2 \\
\Eval{} (default)  & 266 & 58 & 83  & & 0 & 34 & 70 & & 100.0 & 65.4 \\
\midrule
\multicolumn{11}{l}{\Checker{}, by prompt style} \\
Expedient    & 309 & 40 & 58 & & 0 & 49 & 55 & & 100.0 & 75.9 \\
Unrestricted & 322 & 53 & 32 & & 0 & 57 & 47 & & 100.0 & 79.1 \\
Faithful     & \textbf{330} & 50 & 27 & & 0 & 67 & 37 & & 100.0 & \textbf{81.1} \\
Rewrite      & 329 & 51 & 27 & & 0 & 70 & 34 & & 100.0 & 80.8 \\
\bottomrule
\end{tabular}}
\end{table}

On the 407 equivalent pairs the faithful style certifies 330, 64 more than the
266 of \Eval{}, and \beqplus{} certifies 245. The model's 330 share 255 pairs with the
search's 266, add 75 that \Eval{} never closes, and miss 11 that it does, so
together the two certify 341 of the 407 pairs (83.8\%).
The expedient style certifies 309 and leaves 58 pairs with no certified
direction, against 27 for the faithful style, so the auditor refuses
shortcuts that the other styles avoid. Under the expedient and unrestricted prompts the model wrote
proofs that compile in both directions for about two in five of them, and the
audit rejected every one. Under the stricter audit policy of
Appendix~\ref{app:checker} the faithful count falls to 323, which keeps 57 of
the 64 extra pairs. Of the 77
equivalent pairs that the faithful style misses, 57 have a compiling proof
that the auditor refused in some direction, and 20 lack an elaborating proof
in some direction. The model never saw a verdict, so these counts measure one
session per pair.

\section{Conclusion}
\label{sec:discussion}

We present \Program{}, which defines an equivalence relation
meant to capture when two formal statements are mathematically equivalent in
meaning. Earlier proof-based evaluators
accept a pair when a chosen proof procedure succeeds, whereas \Program{}
defines the allowed steps by their effect and judges the proof itself. A search
without a model improves on \beqplus{} in precision and recall, and an 8.9-fold
increase in its CPU time adds at most three pairs. A language model checked by \Checker{} certifies more
equivalent pairs, 330 of 407, while accepting no non-equivalent pair.
Separating the generation of equivalence proofs from their acceptance lets more
expressive provers raise recall without changing what an acceptance means.

\paragraph{Limitations and future work} Proposition~\ref{prop:finer} covers
chains without bridges, and a bridge whose hypotheses a library proof meets can
connect any two statements that the library proves equivalent. \Eval{}
restricts the facts that meet bridge hypotheses, and
\Checker{} accepted no non-equivalent pair, so acceptance is evidence of faithful
reformulation rather than a proof of it. The relabelled test set reflects our
own adjudication, and we tested one prover model in one session per pair and
prompt style. The method also needs a trusted reference. Evaluating a formal
statement against its informal source when no reference exists is the natural
next step.

\section*{Acknowledgements}

We gratefully acknowledge support from Renaissance Philanthropy's AI for Math Fund through the MathBench: Towards Evaluating Natural Language Proofs project. The authors thank the AI for Math Fund team for supporting open research infrastructure at the intersection of artificial intelligence and mathematics. The views expressed in this paper are those of the authors and do not necessarily reflect the views of the funders.

\subsection*{AI use statement}

In this work, we used generative AI tools for the following tasks that require
disclosure. They helped implement \Eval{} and \Checker{}, and they helped build
and clean the evaluation data, including a first screen of the ProofNetVerif
test labels by Claude Fable 5 and the assembly of our test sets. In the
experiments, GPT 5.6 Sol in OpenAI Codex wrote the equivalence proofs that
\Checker{} checks, and these proofs are the object of study
(\S\ref{sec:campaign}). We also used Claude Fable 5.1 in Claude Code, and \texttt{gpt-6-astra} in OpenAI Codex to check the proof of
Proposition~\ref{prop:semanteq} and Proposition~\ref{prop:finer}, and to find counterexamples to an earlier
version of Definition~\ref{def:faithful-step}. We have not used generative AI
tools to generate synthetic data sets or to translate, and qualitative data
analysis is not applicable to this work. Additionally, we used these tools to spell-check parts of the paper and find errors, to find related work, and to check and
merge the bibliography and check the consistency of the numbers. We have reviewed all AI-assisted work. Three authors verified and tested the code of both programs for correctness. The authors also checked every label change and every item of our test sets, and every final
label is a human decision. The authors checked the final definition and proof themselves. We take responsibility for the final content of this work, including text, claims, and artifacts produced with the aid of generative AI.

\subsection*{Reproducibility statement}
We release the code of \Eval{} and \Checker{} together with the exact Lean
and Mathlib versions they run on. The repository gives instructions to
reproduce every table in the paper and to use \Program{} as a general tool
on new pairs of statements. Appendix~\ref{app:repro} lists the toolchain,
the two changes we made to run \beqplus{} on it, the settings of the model
campaign, and the hardware we used. Appendix~\ref{app:data} describes the
datasets and the audit of the ProofNetVerif test labels, and
Appendix~\ref{app:prompts} gives the four prompts in full.

\bibliography{iclr2027_conference}
\bibliographystyle{iclr2027_conference}

\newpage

\appendix
\section{Related work}
\label{app:related}

The main text compares \Program{} only with \beq{} and \beqplus{}. Here we
cover the wider literature.

\subsection{Autoformalization systems and benchmarks}

Most benchmarks for formal mathematics give a model a formal statement and
ask for a proof, e.g., miniF2F \citep{zheng2022minif2f}, ProofNet
\citep{proofnet}, PutnamBench \citep{tsoukalas2024putnambench}, FormalMATH
\citep{yu2025formalmath}, and CombiBench \citep{liu2025combibench}. Provers
evaluated on these benchmarks have improved quickly
\citep{polu2020gptf,lample2022htps,xin2024deepseekprover,xin2024deepseekprover15,ren2025deepseekproverv2,lin2025goedel,lin2025goedelv2,wang2025kimina,hubert2025alphaproof,chen2025seedprover,achim2025aristotle}.
When the proof is in Lean, the Lean kernel checks it \citep{demoura2021lean4},
but the kernel does not check that the formal statement says what the
informal problem says. Early work on statement
autoformalization used neural machine translation
\citep{wang2018first}, and large language models made few-shot translation
practical, for statements \citep{wu2022autoformalization} and for proofs
\citep{jiang2023dsp}. Later work trains
dedicated formalizers
\citep{ying2024leanworkbook,gao2025herald,lu2024pda,liu2025atlas,yu2026mathesis,wu2026stepfun}
or improves prompted models with retrieval and prover feedback
\citep{zhang2024consistent}. Some of these systems use a language-model judge
of faithfulness to train or to filter their output
\citep{peng2026criticlean,yu2026mathesis}. The quality of
the judge then limits the quality of the formalizer. \citet{weng2025survey}
and \citet{yang2024frontier} survey the area.

\subsection{Surface metrics and source-conditioned evaluation}

The first automatic metrics compared a predicted statement with a reference
as text. BLEU \citep{papineni2002bleu} penalizes valid rephrasings, and
\citet{proofnet} found that it did not track expert judgments of accuracy.
The rate at which predictions type-check shows only that they are well
formed \citep{poiroux2025reliable}. A second group of methods uses a model to
compare the formal statement with the informal source. FormalAlign trains a
model to score their alignment \citep{lu2025formalalign}, and other
pipelines translate the formal statement back into natural language and
compare the result with the source
\citep{ying2024leanworkbook,gao2025herald}.
\citet{li2024symbolic} combine such a consistency score with symbolic
equivalence among sampled candidates. Beyond Compilation combines compilation, judgments
from several models, and human calibration, and it reports a large gap
between statements that compile and statements judged faithful
\citep{zhang2026beyondcompilation}. FormalRx predicts an alignment verdict
together with an error category, its location, and a correction
\citep{wang2026formalrx}. The judges also need checking, since
\citet{yousefzadeh2025advocate} report a large disagreement between a
back-translation judge and manual review on miniF2F.

Methods in this group can read the informal source, which \Program{} does
not. Their verdicts come from a model or a score, however, and not from a
proof. \Program{} returns a binary judgment of agreement with a formal
reference, and each acceptance is backed by a derivation that Lean checks
under explicit restrictions.

\subsection{Proof-based reference equivalence}

The closest work proves the candidate and the reference equivalent inside
the proof assistant. BEq restricts a prover model to a small set of tactics
and requires the proof of each direction to use the other statement
\citep{liu2025rethinking}. \beqplus{} replaces the model with a fixed list
of symbolic strategies, and the same work introduces ProofNetVerif
\citep{poiroux2025reliable}. \citet{li2024symbolic} and
\citet{murphy2024leaneuclid} also prove equivalences, among sampled
candidates and among geometry statements respectively. In these methods
acceptance depends on what a fixed proof procedure finds and on the text of
the proof. For \beqplus{}, Section~\ref{sec:algorithm} shows that closing
tactics may prove the goal from library facts alone, which its authors also
report, and that a \beqplus{} script can name the antecedent without using
it. \Program{} keeps the bidirectional design of BEq, and like BEq it fixes
a set of allowed transformations, but it defines them by their effect on the
statement and checks them on the proof term. \citet{poiroux2025reliable}
restrict \texttt{exact?} because library search can make ``unrelated true
theorems'' count as equivalent, which is the problem our introduction starts
from, and they report their restricted check, BEq$_\mathrm{L}$, at a precision
of 100.0 and a recall of 30.9 on ProofNetVerif. A false acceptance found by
Simon Sorg led to a guard that is now their default (their Appendix A.6.1),
and our ablation set includes that pair. \Program{} builds on this approach by
separating the generation of a proof from an audit of the elaborated proof term,
which examines the role of each library fact, the use of the antecedent in
each part of the proof, and the arguments at which it is instantiated.
\Program{}-\texttt{Eval}
searches for chains of permitted steps directly.
\Program{}-\texttt{ProofAudit} audits a proof written by any prover under
rules built on the same kinds of step (Appendix~\ref{app:checker}), and it
records every use of its tolerances. A stronger prover can therefore be used
without accepting every proof that compiles.

\subsection{Structural transformation metrics}

GTED also compares two statements through transformations
\citep{liu2025gted}. It standardizes both Lean statements, builds their
operator trees, and computes a generalized tree edit distance in which some
transformations, such as renaming a bound variable, cost less than others.
GTED and \Program{} share the aim of accepting structural rephrasings, but
they differ in what they return. GTED returns a similarity score and
produces no proof. \Program{} returns a binary judgment, and it accepts a
pair only when it has a chain of permitted steps in both directions that
Lean checks.

\subsection{Shadow statements}

ShadowBench evaluates a candidate against auxiliary statements, called
shadows, that a language model drafts and experts review for each problem
\citep{han2026shadowbench}. Its
SA-Pass metric gives full credit only when the candidate compiles, implies
each shadow, and is implied by the conjunction of the shadows. The method
does not depend on one reference formalization, but each new problem needs
new shadows. \Program{} instead applies one set of rules to every problem
and compares the candidate with a single reference. The two methods could
be combined, e.g., by using \Program{} to compare a candidate with a shadow.

\subsection{Quality of formal references}

Any reference-based metric assumes that the reference is a faithful
formalization, and recent audits show that the assumption often fails.
\citet{ospanov2025minif2f} found many discrepancies between the formal and
informal statements of miniF2F and released a corrected version.
\citet{ammanamanchi2026faults} document missing hypotheses, altered domains,
and vacuous statements across several Lean benchmarks, and the Formal
Conjectures project relies on human review to catch misformalized statements
\citep{formalconjectures2025}. A pair that our programs accept therefore
agrees with its reference, and the acceptance does not show that either
statement matches the informal source.

\subsection{Rewriting, equality saturation, and hammers}

\Program{}-\texttt{Eval} uses the same parts as ordinary Lean automation,
namely the simplifier, rewriting with library equations, and library search
with \texttt{exact?} \citep{demoura2021lean4,mathlib2020}. Rewriting modulo a
set of equations is a classical topic \citep{baader1998term}, and congruence
closure has been extended to the type theory of Lean
\citep{selsam2016congruence}. Equality
saturation, as implemented in egg, stores many equal expressions in one
shared data structure and applies rewrite rules in no fixed order
\citep{willsey2021egg}. It could make the search faster, but it does not
settle which library equations should count as bridges. Stronger automation
is also available, e.g., Aesop \citep{limperg2023aesop}, Duper
\citep{clune2024duper}, Lean-auto and LeanHammer
\citep{qian2025leanauto,zhu2025leanhammer}, and Sledgehammer in Isabelle
\citep{blanchette2016hammering}. The steps of such tools cannot easily be
restricted one by one, so \Program{} does not constrain their search.
\Program{}-\texttt{ProofAudit} checks the proofs they produce instead.

\section{The four prompt styles}
\label{app:prompts}

Every proof certified by \Checker{} was requested with one of four prompts. The prompts share
a task description, reproduced once below, and differ only in their final
``Proof style'' section, reproduced for each style. Nothing else varied
between styles.

\begin{promptbox}{Common task description}
# Task

You are contributing to a corpus of Lean 4 equivalence arguments between
alternative formalizations of the same mathematical problem.

You are given two Lean 4 files, `ref.lean` and `cand.lean`. Each contains a
preamble (imports, `open` lines, possibly local definitions or structures)
and one theorem statement whose proof is `sorry`. The two files formalize
the same informal mathematical statement, possibly in different ways.

Your job: show that the two statements are semantically equivalent by
writing TWO implication proofs, one per direction, as two self-contained
Lean files:

- `ref_imp_cand.lean` — the ref statement is assumed, the cand statement is
  proved from it;
- `cand_imp_ref.lean` — the cand statement is assumed, the ref statement is
  proved from it.

## File layout (follow exactly)

Each output file has this shape:

```
import Mathlib
<the union of the two input files' `open` / `open scoped` lines>

-- ===== unified context (both sides' local defs, merged) =====
<local definitions/structures from BOTH preambles;
 if neither side has any, keep this header comment and note that it is empty>

-- ===== assumed statement (antecedent) =====
theorem <original_name>_assumed <binders> : <statement> := by
  sorry

-- ===== goal statement, PROVED from the antecedent =====
theorem <goal_name> <binders> : <statement> := by
  <your proof>
```

Rules:

- Copy both theorem statements VERBATIM — binders, implicit and instance
  arguments, unicode notation, hypothesis names. Do not strengthen, weaken,
  reorder, or otherwise "clean up" either statement.
- Rename only the theorem identifiers: append `_assumed` to the assumed
  side's name; the proved side keeps its original name (append `_goal` if
  the two names collide).
- Unified context: include each side's local definitions/structures under
  their original names. If both sides define the same name with identical
  bodies, include it once. If the same name is defined differently on the
  two sides, rename one of them apart and adjust that side's statement
  references consistently — nothing else about the statement may change.
- The assumed theorem's proof must be exactly `sorry`, and it must be the
  only `sorry` in the file. The goal theorem's proof may use the assumed
  theorem by name and must compile.
- Each file must compile standalone against current Mathlib with
  `import Mathlib` (Lean 4 syntax).
- If, after a genuine attempt, you cannot prove a direction, leave that
  goal's proof as `sorry` with a one-line comment `-- not proved: <reason>`.

## Deliverables

Write the two files into the pair's folder, next to `ref.lean` and
`cand.lean`, named exactly `ref_imp_cand.lean` and `cand_imp_ref.lean`.
Create or overwrite them; do not modify `ref.lean` or `cand.lean`.

If a Lean toolchain with Mathlib is available in your workspace, check that
each file actually compiles (e.g. `lake env lean <file>`) and repair it
until it does; otherwise take extra care that the syntax is valid current
Lean 4 / Mathlib.
\end{promptbox}

\begin{promptbox}{Prompt style, expedient}
# Proof style

Treat each implication as an exercise in efficiency: the shortest
compiling proof wins, and how it gets there is unimportant.

- The ambient library is fair game. When the goal — or any subgoal — is a
  known result or within reach of automation, prove it directly; there is
  no requirement to route the argument through the assumed statement, and
  the assumed statement may go unused wherever it is not needed.
- Prefer powerful closing tactics, decision procedures, and computation
  over step-by-step transformation.
- Any logically valid route is acceptable, including indirect ones: argue
  by contradiction or by eliminating cases whenever that is quicker.
- Optimize for brevity, not for making the proof reflect why the two
  statements correspond.
\end{promptbox}

\begin{promptbox}{Prompt style, unrestricted}
# Proof style

No constraints beyond correctness. Use whatever proof technique gets
each implication to compile.
\end{promptbox}

\begin{promptbox}{Prompt style, faithful}
# Proof style

Write proofs that genuinely TRANSPORT the assumed statement into the
goal — each proof should be evidence that the two statements say the same
thing, not merely that the goal is true. Concretely:

- Derive every part of the goal from the assumed statement. The assumed
  theorem must carry the mathematical work everywhere; if deleting the
  `_assumed` theorem would leave your proof easily repairable, the proof
  misses the point.
- Use outside library lemmas only as re-expression bridges: equation- or
  iff-shaped facts (and definitional unfolding) that convert between the
  two statements' spellings of the same hypothesis or conclusion. Do not
  use library results that supply the mathematical content itself, and do
  not prove the goal — or any conjunct of it — from the library or from
  scratch, even where that would be easy.
- Feed the assumed statement from the goal's own hypotheses (bridged as
  needed), not from facts obtained elsewhere.
- Argue directly: no proof by contradiction that manufactures `False`, no
  discharging the goal from an impossible case, and no case split whose
  branches bypass the assumption.
- Keep the file minimal: auxiliary `have` steps only for bridges that are
  actually used.
\end{promptbox}

\begin{promptbox}{Prompt style, rewrite}
# Proof style

Write proofs that genuinely TRANSPORT the assumed statement into the
goal — each proof should be evidence that the two statements say the same
thing, not merely that the goal is true. Concretely:

- Derive every part of the goal from the assumed statement. The assumed
  theorem must carry the mathematical work everywhere; if deleting the
  `_assumed` theorem would leave your proof easily repairable, the proof
  misses the point.
- Use outside library lemmas only as re-expression bridges: equation- or
  iff-shaped facts (and definitional unfolding) that convert between the
  two statements' spellings of the same hypothesis or conclusion. Do not
  use library results that supply the mathematical content itself, and do
  not prove the goal — or any conjunct of it — from the library or from
  scratch, even where that would be easy.
- Feed the assumed statement from the goal's own hypotheses (bridged as
  needed), not from facts obtained elsewhere.
- Argue directly: no proof by contradiction that manufactures `False`, no
  discharging the goal from an impossible case, and no case split whose
  branches bypass the assumption.
- Keep the file minimal: auxiliary `have` steps only for bridges that are
  actually used.

In addition, where possible, shape each proof as a pure rewrite transport:

- `intro` the goal's binders, then align the goal with the assumed
  statement through a chain of `rw [...]` / `simp only [...]` steps using
  equation/iff lemmas — rewriting the goal, or a `have` of the instantiated
  assumption — and finish with `exact` / `apply` of the assumed statement.
- Structural glue is fine: `intro`, `exact`, `apply`, `refine`,
  `constructor`, `obtain`/`rcases` on the assumption, `And.intro`,
  `Iff.intro`/`.mp`/`.mpr`.
- Avoid everything else: `calc`, computation and decision tactics
  (`norm_num`, `omega`, `decide`, `nlinarith`, `field_simp`, `ring`),
  `aesop`, bare `simp` with the default simp set, induction, and
  constructing witnesses or data beyond what the assumption provides.
- If a direction genuinely cannot be closed in this shape, fall back to
  the transport style above for the smallest possible step and mark that
  line with `-- non-rewrite step`.
\end{promptbox}

\section{Notation and proof for the relation}
\label{app:relation}

\subsection{Notation}
\label{app:relation-notation}

Fix the library $\Lib$, here Lean with Mathlib.
The class $\mathcal{S}$ consists of propositions over $\Lib$ with no free
variables, identified up to renaming of bound variables and definitional
unfolding.
The atoms of a formula $\varphi$, written $\mathrm{atoms}(\varphi)$,
are its atomic subformulas, namely equations and predicate applications.

\paragraph{Occurrences and contexts.}
Write $S=\mathcal{C}[\varphi]$ when one occurrence of a formula or term
$\varphi$ in $S$ is selected.
The expression $\mathcal{C}[\,\cdot\,]$ is the rest of the statement with
a hole at that occurrence, and $\mathcal{C}[\varphi']$ replaces it by
$\varphi'$.
The context $\Gamma$ contains the variables quantified over the occurrence
and the hypotheses under which it stands.
It contains nothing quantified or assumed inside the occurrence.
A term or formula over $\Gamma$ uses only its variables and the symbols
of $\Lib$.
The judgement $\Lib,\Gamma\vdash\psi$ means that $\psi$ is provable from
the hypotheses of $\Gamma$ and the theorems of $\Lib$.
Substitutions are well typed and avoid variable capture. The domain of a change of variables or of a witnessed existential is a type rather than a proposition, and a term or formula that a step introduces contains, as written, no proof other than a hypothesis of its context.

\paragraph{Positive occurrences.}
An occurrence is positive if the path from the root passes only through
$\wedge$, $\vee$, $\to$, and quantifier bodies and enters the left side
of an implication an even number of times.
The path never enters a biconditional, an equation, or a function or
predicate argument.
Negation is read as $\varphi\to\mathit{False}$ when determining polarity.
The restriction applies to the first three structural weakening rules.
It does not restrict replacement by an equal or equivalent expression
under logical rearrangement or bridge rewriting.

\paragraph{Bridges.}
A bridge is a theorem $b\in\mathcal{B}\subseteq\Lib$ of the form
\[
  b=\forall\bar y.\;\psi_1\to\cdots\to\psi_m\to(u\asymp v),
  \qquad \asymp\in\{=,\leftrightarrow\},
\]
whose statement is not a schema of pure logic.
For the abstract relation, $\mathcal{B}$ contains all Mathlib theorems
whose conclusion after their hypotheses is an equation or a biconditional,
apart from pure logical schemas.
Logical identities such as commutativity of conjunction count as
rearrangements.
No bridge is chosen by hand.
An instance $\sigma(b)$ assigns terms to $\bar y$.
Its sides are $\sigma(u)$ and $\sigma(v)$, and its hypotheses are
$\sigma(\psi_1),\ldots,\sigma(\psi_m)$.
For a rewrite at context $\Gamma$, the assigned terms must be over
$\Gamma$, and every hypothesis must be provable in $\Lib,\Gamma$.
The available bridges and the facts used to discharge their hypotheses
therefore depend on the library.

\paragraph{Prefixes.}
The prefix of a statement is its leading sequence of universal
quantifiers and hypotheses.
Write $S=\Pi\;\varphi$ when $\Pi$ is an initial segment of that sequence
and $\varphi$ is the remainder.
For example, a prefix of the form
$\forall x_1.\;\psi_1\to\forall x_2.\;\cdots$
can be cut after any quantifier or hypothesis.
A term over the variables of $\Pi$ may use those variables and library
symbols.
In the existential weakening rule, the witness $t$ must have the
quantified type $\mathsf{D}$.

\paragraph{The first-order reading.} Proposition~\ref{prop:finer} and
Lemma~\ref{lem:uses} concern the steps read in a separate many-sorted
first-order syntax. Sorts may be empty, the symbols of $\Lib$ are opaque, and
statements are taken as written, without Lean's definitional identifications
or any computation that depends on proofs. An interpretation gives each sort a
set, each function symbol a function and each predicate symbol a relation, and a formula with free variables is
evaluated under an assignment of values to them, so an atom has a truth value
at each value of its variables. Equality is a relation like any other, with
only the replacement of equals by equals. These results describe this
first-order core of the steps, not every chain that Lean's full language
admits.

\subsection{Proofs of Proposition~\ref{prop:semanteq} and Lemma~\ref{lem:uses}}
\label{app:relation-proof}

\begin{proof}
Every permitted step preserves implication.
For logical rearrangement, the selected formulas are equivalent under every
interpretation, so in particular in the library's own logic.
For bridge rewriting, the assigned terms are available in the local
context $\Gamma$, and the bridge hypotheses are provable there.
The instantiated bridge therefore gives equality or equivalence in
$\Gamma$.
Replacing the selected expression preserves equivalence of the whole
statement in both cases.

For a change of variables, $S$ implies $\forall y{:}\mathsf{D}.\;S$
because $y$ is fresh.
Universal elimination also gives
$(\forall x{:}\mathsf{D}.\;\varphi)\to\varphi[t/x]$ for a term
$t{:}\mathsf{D}$ available under the prefix.
The restriction on structure arguments narrows the permitted
instantiations without affecting soundness.

For structural weakening, dropping a conjunct, adding a disjunct, and
introducing an existential with a witness $t{:}\mathsf{D}$ each give a
valid local implication.
Conjunction, disjunction, and quantifier bodies preserve implication.
The right side of an implication preserves its direction, and the left
side reverses it.
An even number of entries into left sides therefore preserves the
direction from the selected occurrence to the root.
The restriction on the path excludes positions where this argument
does not apply.
In the prefix, adding a hypothesis preserves implication.
Removing a hypothesis preserves implication when the remaining prefix
hypotheses imply it by logic alone.

Hence $S\steps T$ implies $\Lib\vdash(S\to T)$.
Composing the implications along a finite chain from $P$ to $Q$ gives
$\Lib\vdash(P\to Q)$.
Chains in both directions give $\Lib\vdash(P\leftrightarrow Q)$.

A chain consisting of one statement proves reflexivity.
The requirement of chains in both directions gives symmetry.
For transitivity, concatenate a chain from $P$ to $Q$ with one from
$Q$ to $R$, and concatenate the reverse chains from $R$ to $Q$ and
from $Q$ to $P$.
The resulting chains establish $P\semeq R$.
\end{proof}

The proof of Proposition~\ref{prop:finer} rests on the following property of
chains.

\begin{lemma}[What a chain uses]
\label{lem:uses}
In the first-order reading of Appendix~\ref{app:relation-notation}, if
$S \steps^{*} T$, then $S \to T$ follows by the rules of logic alone from
the conclusions of the bridge instances used along the chain, each taken
under the hypotheses of the context where it was applied, where throughout
equality is read as an unknown relation of which only the replacement of
equals by equals is assumed, so that every equation occurring as a formula is
an atom of unknown truth. Hence if $P \semeq Q$, then $P \leftrightarrow Q$
follows in the same way from the instances used along the two chains.
\end{lemma}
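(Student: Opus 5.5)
We argue by induction on the length of the chain $S = S_1 \steps S_2 \steps \cdots \steps S_k = T$. Let $E$ be the set of bridge instances used, each closed universally over the variables of its context and taken under that context's hypotheses, i.e.\ formulas $\forall \Gamma.\,(\mathrm{hyps}(\Gamma) \to (\sigma(u)\asymp\sigma(v)))$. Equality is read as a binary relation symbol with only the congruence (substitution) schema assumed. We show $E \models S_i \to S_{i+1}$ for each single step, in every first-order interpretation satisfying the congruence schema and $E$. Chaining by transitivity of implication then gives $E \models S \to T$. For $P \semeq Q$ we apply this to both chains and take the union of the instance sets, which yields $P \leftrightarrow Q$. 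The completeness theorem converts this semantic statement into derivability ``by the rules of logic alone''.

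The single-step cases are as follows.
\begin{itemize}
\item \textbf{Logical rearrangement.} By definition, $\varphi \leftrightarrow \varphi'$ holds under every interpretation of the atoms, equations included, so it needs no assumption about equality. Replacing $\varphi$ by $\varphi'$ inside $\mathcal{C}$ preserves truth at every assignment, because the connectives and quantifiers are truth-functional in their subformulas.
\item \textbf{Structural weakening.} Dropping a conjunct, adding a disjunct and introducing an existential are valid local implications. They are propagated to the root by the polarity argument already given in the proof of Proposition~\ref{prop:semanteq}, which uses only the monotonicity of $\wedge,\vee,\forall,\exists$ and the antitonicity of the left side of $\to$. Adding a hypothesis is valid outright. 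Removing a hypothesis is sound because the side condition is that $\psi$ follows from the prefix hypotheses in every interpretation.
\item \textbf{Change of variables.} Adding a vacuous $\forall y$ is sound when $\mathsf{D}$ may be empty, since $S$ has no free $y$. Specialization $\forall x.\varphi \to \varphi[t/x]$ is valid because $t$ is a term over $\Pi$ and so denotes an element.
\end{itemize}

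The delicate case is \textbf{bridge rewriting}, and I expect it to be the main obstacle. Here the occurrence of $\sigma(u)$ sits inside $\mathcal{C}$ under context $\Gamma$. At any assignment to the variables of $\Gamma$ that satisfies the hypotheses of $\Gamma$, the instance in $E$ gives $\sigma(u)\asymp\sigma(v)$.
\begin{itemize}
\item If $\asymp$ is $\leftrightarrow$, truth-functionality gives equivalence of the enclosing formula at that assignment.
\item If $\asymp$ is $=$, we must show that the atoms containing $\sigma(u)$ have the same truth value after replacement. This uses exactly the replacement of equals by equals, applied through function-symbol nesting. Note that the occurrence may lie inside a function argument, which the positivity restriction does not exclude here. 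We therefore first prove a term-level congruence lemma by induction on the term context, and then lift it to the atom.
\end{itemize}
The subtle point is that the hypotheses of $\Gamma$ are only available \emph{at} the occurrence. We must therefore check that the induction over $\mathcal{C}$ carries them along: passing under $\psi\to\cdot$ adds $\psi$ to the assumptions, and passing under a quantifier generalizes the assignment. Both are compatible with the bridge instance being stated under exactly those hypotheses.

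Finally, we should also make sure that the instantiated bridge hypotheses $\sigma(\psi_i)$ are not needed as extra premises. The lemma only assumes the conclusions, taken under $\Gamma$, so the library derivations $\Lib,\Gamma\vdash\sigma(\psi_i)$ are simply discarded. This is precisely what makes the lemma strong enough for Proposition~\ref{prop:finer}: a bridge-free chain has $E=\emptyset$, so $S\to T$ becomes a pure validity.
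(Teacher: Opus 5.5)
Your proposal is correct and follows the paper's proof, with your semantic framing plus completeness being a more explicit rendering of ``follows by logic'': induction along the chain with one case per kind of step; the bridge case justified by each instance's conclusion, assumed under its context's hypotheses for every value of the bound variables; and the library's discharge of the bridge hypotheses deliberately left out, which is what Proposition~\ref{prop:finer} needs. The one point to tighten is the equational case, where your term-level congruence lemma needs $a = b \to f(\ldots a \ldots) = f(\ldots b \ldots)$ as a primitive axiom, because deriving it by replacement would start from $f(a) = f(a)$, the reflexivity the lemma excludes; so either put that axiom explicitly into your congruence schema (harmless, since the empty-equality model of Proposition~\ref{prop:finer} satisfies it vacuously) or, as the paper does, apply replacement of equals directly inside the atom at whatever depth the occurrence sits.
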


\begin{proof}
By induction on the chain, one case for each kind of step. A rearrangement
holds under every interpretation, so in particular under this reading. Each
weakening move is an implication of logic, and removing a hypothesis is
allowed only when it follows from the others under every interpretation. A
change of variables is universal generalisation or instantiation. Bridge
rewriting replaces $\sigma(u)$ by $\sigma(v)$ inside a formula. The
hypotheses of the instance are discharged by the library, which is why the
lemma assumes the instance's conclusion under the hypotheses of its context
rather than the bridge itself; from that assumption the replacement follows
by logic, for a biconditional by replacement of equivalents and for an
equation by replacement of equals inside an atom, whatever the atom's head
symbol denotes, and under a binder by the same replacement inside it, since
the conclusion is assumed for every value of the variables it binds. Composing
the steps gives $S \to T$, and for $P \semeq Q$ the chain
back gives $Q \to P$.
\end{proof}

\section{Worked chains for the definition}
\label{app:chains}

\paragraph{Admitted rearrangements.} The logical rearrangement action of
Definition~\ref{def:faithful-step} admits every equivalence that holds under
every interpretation, preserves atoms and leaves no atom idle. This covers
commutativity, associativity and distribution of the connectives, De Morgan's
laws, implication as disjunction, double negation, currying, the commutation
of like quantifiers, the distribution of $\forall$ over $\wedge$ and of
$\exists$ over $\vee$, prenex moves valid over possibly empty domains, and the unfolding of a biconditional
into two implications, each admitted only when an instance satisfies both conditions on atoms. It excludes whatever introduces or removes an atom,
such as a true conjunct with a new atom or a case distinction, any equivalence that relies on
a part being always true or always false, the laws of equality, since
equality is an atom like any other here and its symmetry or the substitution
of equals are steps only when the library states them as bridges, and the few
quantifier laws that hold only over nonempty domains, such as
$(\forall x.\,\varphi) \leftrightarrow \varphi$ for $x$ not in $\varphi$.

\paragraph{A pair settled by rearrangement alone.} Formalisers differ most
often in presentation rather than in substance, and a recurring pattern is
the same components re-ordered:

\begin{leancode}
\begin{Verbatim}[commandchars=\\\{\},fontfamily=zi4,fontsize=\leancodesize,formatcom=\color{leanCFg},baselinestretch=1.0,xleftmargin=0pt]
\textcolor{leanCCmt}{-- reference (Munkres 29.10)}
\textcolor{leanCKw}{theorem} \textcolor{leanCDecl}{exercise_29_10} \textcolor{leanCBr1}{\{}X : \textcolor{leanCKw}{Type}*\textcolor{leanCBr1}{\}}
  \textcolor{leanCBr1}{[}TopologicalSpace X\textcolor{leanCBr1}{]} \textcolor{leanCBr1}{[}T2Space X\textcolor{leanCBr1}{]} \textcolor{leanCBr1}{(}x : X\textcolor{leanCBr1}{)}
  \textcolor{leanCBr1}{(}hx : \ensuremath{\exists} U : Set X, x \ensuremath{\in} U \ensuremath{\wedge} IsOpen U \ensuremath{\wedge} \textcolor{leanCBr2}{(}\ensuremath{\exists} K : Set X, U \ensuremath{\subset} K \ensuremath{\wedge} IsCompact K\textcolor{leanCBr2}{)}\textcolor{leanCBr1}{)}
  \textcolor{leanCBr1}{(}U : Set X\textcolor{leanCBr1}{)} \textcolor{leanCBr1}{(}hU : IsOpen U\textcolor{leanCBr1}{)} \textcolor{leanCBr1}{(}hxU : x \ensuremath{\in} U\textcolor{leanCBr1}{)} :
  \ensuremath{\exists} \textcolor{leanCBr1}{(}V : Set X\textcolor{leanCBr1}{)}, IsOpen V \ensuremath{\wedge} x \ensuremath{\in} V \ensuremath{\wedge} IsCompact \textcolor{leanCBr1}{(}closure V\textcolor{leanCBr1}{)} \ensuremath{\wedge} closure V \ensuremath{\subseteq} U
\textcolor{leanCCmt}{-- candidate: same context, conclusion conjuncts permuted}
\textcolor{leanCKw}{theorem} \textcolor{leanCDecl}{exercise_29_10_alt} \textcolor{leanCBr1}{\{}X : \textcolor{leanCKw}{Type}*\textcolor{leanCBr1}{\}}
  \textcolor{leanCBr1}{[}TopologicalSpace X\textcolor{leanCBr1}{]} \textcolor{leanCBr1}{[}T2Space X\textcolor{leanCBr1}{]} \textcolor{leanCBr1}{(}x : X\textcolor{leanCBr1}{)}
  \textcolor{leanCBr1}{(}hx : \ensuremath{\exists} U : Set X, x \ensuremath{\in} U \ensuremath{\wedge} IsOpen U \ensuremath{\wedge} \textcolor{leanCBr2}{(}\ensuremath{\exists} K : Set X, U \ensuremath{\subset} K \ensuremath{\wedge} IsCompact K\textcolor{leanCBr2}{)}\textcolor{leanCBr1}{)}
  \textcolor{leanCBr1}{(}U : Set X\textcolor{leanCBr1}{)} \textcolor{leanCBr1}{(}hU : IsOpen U\textcolor{leanCBr1}{)} \textcolor{leanCBr1}{(}hxU : x \ensuremath{\in} U\textcolor{leanCBr1}{)} :
  \ensuremath{\exists} \textcolor{leanCBr1}{(}V : Set X\textcolor{leanCBr1}{)}, x \ensuremath{\in} V \ensuremath{\wedge} IsOpen V \ensuremath{\wedge} closure V \ensuremath{\subseteq} U \ensuremath{\wedge} IsCompact \textcolor{leanCBr1}{(}closure V\textcolor{leanCBr1}{)}
\end{Verbatim}
\end{leancode}

\noindent
No library lemma is needed, because the two statements are related by a
propositional rearrangement alone.
\paragraph{A pair settled by two bridges.} The following pair differs only
in where the division by $90$ sits:

\begin{leancode}
\begin{Verbatim}[commandchars=\\\{\},fontfamily=zi4,fontsize=\leancodesize,formatcom=\color{leanCFg},baselinestretch=1.0,xleftmargin=0pt]
\textcolor{leanCCmt}{-- reference (USAMO 1996 P1)}
\textcolor{leanCKw}{theorem} \textcolor{leanCDecl}{Usa1996P1_refs} :
    \textcolor{leanCBr1}{(}\textcolor{leanCNum}{1} / \textcolor{leanCBr2}{(}\textcolor{leanCNum}{90}:\ensuremath{\mathbb{R}}\textcolor{leanCBr2}{)}\textcolor{leanCBr1}{)} * \ensuremath{\sum} n \ensuremath{\in} Finset.range \textcolor{leanCNum}{90}, \textcolor{leanCBr1}{(}\textcolor{leanCNum}{2} * \textcolor{leanCBr2}{(}n+\textcolor{leanCNum}{1}\textcolor{leanCBr2}{)}\textcolor{leanCBr1}{)} * Real.sin \textcolor{leanCBr1}{(}\textcolor{leanCBr2}{(}\textcolor{leanCNum}{2} * \textcolor{leanCBr3}{(}n+\textcolor{leanCNum}{1}\textcolor{leanCBr3}{)}\textcolor{leanCBr2}{)} * Real.pi
        / \textcolor{leanCNum}{180}\textcolor{leanCBr1}{)}
    = \textcolor{leanCNum}{1} / Real.tan \textcolor{leanCBr1}{(}Real.pi / \textcolor{leanCNum}{180}\textcolor{leanCBr1}{)}
\textcolor{leanCCmt}{-- candidate: the average restated termwise (a variant of the autoformalized statement)}
\textcolor{leanCKw}{theorem} \textcolor{leanCDecl}{Usa1996P1_alt} :
    \ensuremath{\sum} n \ensuremath{\in} Finset.range \textcolor{leanCNum}{90},
      \textcolor{leanCBr1}{(}\textcolor{leanCNum}{2} * \textcolor{leanCBr2}{(}n+\textcolor{leanCNum}{1}\textcolor{leanCBr2}{)}\textcolor{leanCBr1}{)} * Real.sin \textcolor{leanCBr1}{(}\textcolor{leanCBr2}{(}\textcolor{leanCNum}{2} * \textcolor{leanCBr3}{(}n+\textcolor{leanCNum}{1}\textcolor{leanCBr3}{)}\textcolor{leanCBr2}{)} * Real.pi / \textcolor{leanCNum}{180}\textcolor{leanCBr1}{)} / \textcolor{leanCNum}{90}
    = \textcolor{leanCNum}{1} / Real.tan \textcolor{leanCBr1}{(}Real.pi / \textcolor{leanCNum}{180}\textcolor{leanCBr1}{)}
\end{Verbatim}
\end{leancode}
\noindent
Matching the two statements against the library's rewrite index finds
\texttt{one\_div\_mul\_eq\_div}, which turns $\tfrac{1}{a}\cdot b$ into
$\tfrac{b}{a}$, and \texttt{Finset.sum\_div}, which distributes division over a
finite sum. Neither equation mentions the trigonometric content, and rewriting
with them in turn takes one statement to the other. \Eval{} settles the pair in
both directions by exactly the two-step chain above at search depth two, but
not at the default depth of one (Appendix~\ref{app:eval}). The autoformalized
candidate in the recall suite also reindexes the sum, and the search does not
settle it. The chain never proves the trigonometric identity itself, which it
carries across unchanged.

\paragraph{Changing the domain of a variable.} A chain moves a bound variable
to a new domain with two changes of variables. It generalizes over a variable
of the new domain and specializes the old variable at a term built from it, and
bridges rewrite the body in between. Write $S = \forall n{:}\mathbb{N}.\; P\,\uparrow\!n$ and
$T = \forall z{:}\mathbb{Z}.\; 0 \le z \to P\,z$. The chain from $S$ to $T$
passes through the subtype of non-negative integers and uses the library's
own biconditional \texttt{Subtype.forall}:
\begin{align*}
& \forall n{:}\mathbb{N}.\; P\,\uparrow\!n \\
& \forall w{:}\{z \mid 0 \le z\}.\; \forall n{:}\mathbb{N}.\; P\,\uparrow\!n
   && \text{generalize } w \\
& \forall w.\; P\,\uparrow\!(w.\mathrm{val}.\mathrm{toNat})
   && \text{specialize } n := w.\mathrm{val}.\mathrm{toNat} \\
& \forall w.\; P\,(w.\mathrm{val})
   && \text{bridge \texttt{Int.toNat\_of\_nonneg}, condition from $w$'s type} \\
& \forall z.\; \forall h{:}0 \le z.\; P\,(\langle z, h\rangle.\mathrm{val})
   && \text{bridge \texttt{Subtype.forall}} \\
& \forall z{:}\mathbb{Z}.\; 0 \le z \to P\,z
   && \text{definitional unfolding.}
\end{align*}
The reverse direction cannot apply \texttt{Subtype.forall} backwards and
specialize $w$ at $\langle \uparrow\!n,\, p\rangle$, because the proof $p$ of
$0 \le \uparrow\!n$ would be the library theorem \texttt{Int.natCast\_nonneg},
and a term that a step introduces may contain no library proof. Two bridges
give a chain instead. $T$ unfolds to
$\forall z \in \mathrm{Ici}\,0.\; P\,z$, the bridge
\texttt{Nat.range\_cast\_int} rewrites $\mathrm{Ici}\,0$ to the range of the
cast from $\mathbb{N}$, and \texttt{Set.forall\_mem\_range} rewrites the result
to $S$. A model proof that takes the direct route instead meets a hypothesis
of the antecedent with a library fact, which \Checker{} admits under its
tolerance, and Appendix~\ref{app:cases} shows the same case for a positivity
hypothesis.
\section{Two failures of evaluation by proofs}
\label{app:beqplus}

Both pairs below are labelled not equivalent in ProofNetVerif. The label
concerns whether the candidate states the intended claim, so a pair can be
labelled not equivalent even when both statements are provable. The listings
show simplified proof steps from the \beqplus{} strategies, without setup code
or tactic wrappers. In each direction, \texttt{base\_theorem} names the
antecedent, the statement on the other side of the pair, and the goal is the
statement to prove.

The first reference states the associativity of multiplication in
\texttt{ZMod n}, Lean's type for the integers modulo $n$. The candidate states
an identity about remainders of natural numbers, which follows from
\texttt{Nat.mul\_mod} with the equality reversed.

\smallskip
\noindent
\begin{minipage}[t]{0.48\linewidth}
{\footnotesize\textbf{Reference} (associativity in $\mathbb{Z}/n\mathbb{Z}$)}
\begin{leancode}[unbreakable, before skip=4pt, after skip=4pt,
  overlay={\node[anchor=north east, inner sep=2.5pt, font=\scriptsize\sffamily, text=leanCTag]
    at (frame.north east) {Lean 4};}]
\begin{Verbatim}[commandchars=\\\{\},fontfamily=zi4,fontsize=\leancodesizemini,formatcom=\color{leanCFg},baselinestretch=1.0,xleftmargin=0pt]
\textcolor{leanCKw}{theorem} \textcolor{leanCDecl}{exercise_1_1_4} \textcolor{leanCBr1}{(}n : Nat\textcolor{leanCBr1}{)}
    \textcolor{leanCBr1}{(}a b c : ZMod n\textcolor{leanCBr1}{)} :
  \textcolor{leanCBr1}{(}a * b\textcolor{leanCBr1}{)} * c = a * \textcolor{leanCBr1}{(}b * c\textcolor{leanCBr1}{)}
\end{Verbatim}
\end{leancode}
\end{minipage}\hfill
\begin{minipage}[t]{0.48\linewidth}
{\footnotesize\textbf{Candidate} (labelled not equivalent)}
\begin{leancode}[unbreakable, before skip=4pt, after skip=4pt,
  overlay={\node[anchor=north east, inner sep=2.5pt, font=\scriptsize\sffamily, text=leanCTag]
    at (frame.north east) {Lean 4};}]
\begin{Verbatim}[commandchars=\\\{\},fontfamily=zi4,fontsize=\leancodesizemini,formatcom=\color{leanCFg},baselinestretch=1.0,xleftmargin=0pt]
\textcolor{leanCKw}{theorem} \textcolor{leanCDecl}{candidate} \textcolor{leanCBr1}{(}n : Nat\textcolor{leanCBr1}{)}
    \textcolor{leanCBr1}{(}a b c : Nat\textcolor{leanCBr1}{)} :
  \textcolor{leanCBr1}{(}a*b\textcolor{leanCBr1}{)} % n * \textcolor{leanCBr1}{(}c%n\textcolor{leanCBr1}{)} % n
    = a*b*c % n
\end{Verbatim}
\end{leancode}
\end{minipage}

\smallskip
\noindent
\begin{minipage}[t]{0.48\linewidth}
{\footnotesize\textbf{Simplified proof steps,} reference $\Rightarrow$ candidate}
\begin{leancode}[unbreakable, before skip=4pt, after skip=4pt,
  overlay={\node[anchor=north east, inner sep=2.5pt, font=\scriptsize\sffamily, text=leanCTag]
    at (frame.north east) {Lean 4};}]
\begin{Verbatim}[commandchars=\\\{\},fontfamily=zi4,fontsize=\leancodesizemini,formatcom=\color{leanCFg},baselinestretch=1.0,xleftmargin=0pt]
convert base_theorem \textcolor{leanCKw}{using} \textcolor{leanCNum}{0}
simp_all_arith!
exact \textcolor{leanCKw}{fun} n a b c =>
  base_theorem n a b c
\end{Verbatim}
\end{leancode}
\end{minipage}\hfill
\begin{minipage}[t]{0.48\linewidth}
{\footnotesize\textbf{Simplified proof steps,} candidate $\Rightarrow$ reference}
\begin{leancode}[unbreakable, before skip=4pt, after skip=4pt,
  overlay={\node[anchor=north east, inner sep=2.5pt, font=\scriptsize\sffamily, text=leanCTag]
    at (frame.north east) {Lean 4};}]
\begin{Verbatim}[commandchars=\\\{\},fontfamily=zi4,fontsize=\leancodesizemini,formatcom=\color{leanCFg},baselinestretch=1.0,xleftmargin=0pt]
convert base_theorem \textcolor{leanCKw}{using} \textcolor{leanCNum}{0}
simp_all_arith!
noncomm_ring
\end{Verbatim}
\end{leancode}
\end{minipage}
\smallskip

The \texttt{convert} step turns the goal into a comparison between the goal
and the antecedent as propositions. Simplification removes the remainder
identity but leaves an associativity obligation. In the direction on the
left, the last line proves that obligation, which is the reference itself,
with \texttt{base\_theorem}. In the direction on the right,
\texttt{noncomm\_ring} proves it without the antecedent. The candidate thus
has a direct proof by \texttt{Nat.mul\_mod} and the reference a direct proof
by \texttt{mul\_assoc}, and proofs of the two statements taken separately do
not show that the candidate states the intended claim
(Section~\ref{sec:beq_background}).

The second reference states that disjoint closed sets in a metric space have
disjoint open neighbourhoods, which \texttt{SeparatedNhds} expresses. The
candidate concludes only that the sets are disjoint, from the assumption that
their intersection is empty.

\smallskip
\noindent
\begin{minipage}[t]{0.48\linewidth}
{\footnotesize\textbf{Reference} (Rudin 2.19a)}
\begin{leancode}[unbreakable, before skip=4pt, after skip=4pt,
  overlay={\node[anchor=north east, inner sep=2.5pt, font=\scriptsize\sffamily, text=leanCTag]
    at (frame.north east) {Lean 4};}]
\begin{Verbatim}[commandchars=\\\{\},fontfamily=zi4,fontsize=\leancodesizemini,formatcom=\color{leanCFg},baselinestretch=1.0,xleftmargin=0pt]
\textcolor{leanCKw}{theorem} \textcolor{leanCDecl}{exercise_2_19a}
    \textcolor{leanCBr1}{\{}X : \textcolor{leanCKw}{Type}*\textcolor{leanCBr1}{\}} \textcolor{leanCBr1}{[}MetricSpace X\textcolor{leanCBr1}{]}
    \textcolor{leanCBr1}{(}A B : Set X\textcolor{leanCBr1}{)}
    \textcolor{leanCBr1}{(}hA : IsClosed A\textcolor{leanCBr1}{)}
    \textcolor{leanCBr1}{(}hB : IsClosed B\textcolor{leanCBr1}{)}
    \textcolor{leanCBr1}{(}hAB : Disjoint A B\textcolor{leanCBr1}{)} :
  SeparatedNhds A B
\end{Verbatim}
\end{leancode}
\end{minipage}\hfill
\begin{minipage}[t]{0.48\linewidth}
{\footnotesize\textbf{Candidate} (labelled not equivalent)}
\begin{leancode}[unbreakable, before skip=4pt, after skip=4pt,
  overlay={\node[anchor=north east, inner sep=2.5pt, font=\scriptsize\sffamily, text=leanCTag]
    at (frame.north east) {Lean 4};}]
\begin{Verbatim}[commandchars=\\\{\},fontfamily=zi4,fontsize=\leancodesizemini,formatcom=\color{leanCFg},baselinestretch=1.0,xleftmargin=0pt]
\textcolor{leanCKw}{theorem} \textcolor{leanCDecl}{candidate}
    \textcolor{leanCBr1}{\{}X : \textcolor{leanCKw}{Type}*\textcolor{leanCBr1}{\}} \textcolor{leanCBr1}{[}MetricSpace X\textcolor{leanCBr1}{]}
    \textcolor{leanCBr1}{\{}A B : Set X\textcolor{leanCBr1}{\}}
    \textcolor{leanCBr1}{(}hA : IsClosed A\textcolor{leanCBr1}{)}
    \textcolor{leanCBr1}{(}hB : IsClosed B\textcolor{leanCBr1}{)}
    \textcolor{leanCBr1}{(}h : A \ensuremath{\cap} B = \ensuremath{\emptyset}\textcolor{leanCBr1}{)} :
  Disjoint A B
\end{Verbatim}
\end{leancode}
\end{minipage}

\smallskip
\noindent
\begin{minipage}[t]{0.48\linewidth}
{\footnotesize\textbf{Simplified proof steps,} reference $\Rightarrow$ candidate}
\begin{leancode}[unbreakable, before skip=4pt, after skip=4pt,
  overlay={\node[anchor=north east, inner sep=2.5pt, font=\scriptsize\sffamily, text=leanCTag]
    at (frame.north east) {Lean 4};}]
\begin{Verbatim}[commandchars=\\\{\},fontfamily=zi4,fontsize=\leancodesizemini,formatcom=\color{leanCFg},baselinestretch=1.0,xleftmargin=0pt]
\textcolor{leanCKw}{have} this :=
  base_theorem A B hA hB
    \textcolor{leanCBr1}{(}Set.disjoint_iff_inter_eq_empty
      .mpr h\textcolor{leanCBr1}{)}
exact SeparatedNhds.disjoint this
\end{Verbatim}
\end{leancode}
\end{minipage}\hfill
\begin{minipage}[t]{0.48\linewidth}
{\footnotesize\textbf{Simplified proof steps,} candidate $\Rightarrow$ reference}
\begin{leancode}[unbreakable, before skip=4pt, after skip=4pt,
  overlay={\node[anchor=north east, inner sep=2.5pt, font=\scriptsize\sffamily, text=leanCTag]
    at (frame.north east) {Lean 4};}]
\begin{Verbatim}[commandchars=\\\{\},fontfamily=zi4,fontsize=\leancodesizemini,formatcom=\color{leanCFg},baselinestretch=1.0,xleftmargin=0pt]
\textcolor{leanCKw}{have} this : Disjoint A B := \textcolor{leanCKw}{by}
  apply_rules \textcolor{leanCBr1}{[}base_theorem\textcolor{leanCBr1}{]}
exact NormalSpace.normal
  A B hA hB this
\end{Verbatim}
\end{leancode}
\end{minipage}
\smallskip

In the direction on the left the antecedent is used but not needed. The
argument passed to it, \texttt{Set.disjoint\_iff\_inter\_eq\_empty.mpr h},
already proves the goal \texttt{Disjoint A B} from the goal's own hypothesis,
and \texttt{SeparatedNhds.disjoint} only turns the antecedent's conclusion back
into that goal. In the direction on the right the antecedent is not used at
all. \texttt{apply\_rules} proves the intermediate claim from the hypothesis
\texttt{hAB}, Lean prints that step as \texttt{have this := hAB}, and
\texttt{base\_theorem} does not appear in the proof term.
\texttt{NormalSpace.normal} then proves the reference from the hypotheses that
the sets are closed and disjoint. The right direction shows that a name in a
script need not appear in the proof term, and the left direction shows that
even an occurrence in the proof term can be unnecessary.

\section{Details of \Eval{}}
\label{app:eval}

\paragraph{Direct use.} The first stage applies the antecedent to the goal and
lets the logical rules match the two, which settles a direction when the goal
is the antecedent up to renaming, unfolding, or instantiation.

\paragraph{Normalisation.} The normalisation stage rewrites both statements
with a fixed set that carries no mathematical content: the library's cast
normalisation extension, whose lemmas are equations and biconditionals by
construction, and a curated pool of logical and set-membership identities
such as \texttt{not\_forall}, \texttt{forall\_and}, \texttt{exists\_prop},
\texttt{and\_imp}, \texttt{Set.mem\_setOf\_eq}, \texttt{Set.ext\_iff},
\texttt{Finset.mem\_range}, and \texttt{Subtype.ext\_iff}. Every member is
an equation or a biconditional, so the stage performs rearrangement and
bridge rewriting only. The antecedent itself is available as a rewrite rule,
which is how a direction closes when the goal is a rephrasing of the
antecedent's conclusion.

\paragraph{Logical rearrangement.} Rearrangement is performed by a closer
that asserts the antecedent into the local context, destructures conjunctions,
existentials, biconditionals and disjunctions on both sides, instantiates
universally quantified hypotheses at matching local objects, and searches
backwards over the logical rules with a small depth bound. No library fact
enters except through the normalisation set above.

\paragraph{Discovery.} Bridges are found by indexing: the two statements are
matched against the library's rewrite index, every fact whose left or right
side matches a subterm is collected, and the pool is filtered to facts whose
conclusion is an equation or a biconditional. The pool is ranked and capped
at forty candidates. Each candidate is materialised as an attempt that
rewrites the goal or the antecedent at the matching site, in either
orientation. Discovery runs only after the cheap
stages have failed, since those stages settle about three quarters of all
directions at a small fraction of the time.

\paragraph{Search strength.} Three parameters bound the search
(Table~\ref{tab:capability}). $D$ is the depth of chained rewrites, the
number of bridges applied in sequence before the closer runs; $W_k$ doubles
$k$ times the two per-step bridge budgets, the candidate bridges tried and the
lemmas discovered, which are 80 and 64 by default; $DW_k$ doubles $k$ times
the number of distinct intermediate statements retained at each depth and
carried to the next, 24 by default, and raises the budget spent per depth
level by a factor of four each time. The default is $D = 1$ with the base
budgets.

\paragraph{Proof-term grading.} Every constant of an accepted proof term is
classified, with local auxiliaries expanded through the kernel environment so
that nothing hides inside a tactic's private lemma. Admitted are the
antecedent; non-propositional constants, which are vocabulary; eliminators;
schematic constants, whose type is built from sorts, variables and the core
logical connectives and which therefore carry no domain content; facts whose
conclusion is an equation, a biconditional, a heterogeneous equation or a
disequality, or a class instance; and a fixed set of sixteen numeral facts,
such as \texttt{le\_refl}, \texttt{zero\_le\_one},
\texttt{two\_ne\_zero} and \texttt{Nat.zero\_le}, which discharge the
conditions of cast bridges and can relate no two statements on their own.
Any other constant fails the proof. The same classification filters the
discovery pool before any rewrite is attempted, so closer and audit agree by
construction.

\paragraph{No proof ex falso.} The proof term is walked with a negation depth.
Entering a lambda increases the depth only when its body proves
\texttt{False} and the introduced hypothesis actually occurs in that
derivation, which is the shape of an honest refutation. A \texttt{False}-typed
subterm at depth zero, and any of the eliminators \texttt{False.elim},
\texttt{absurd} and their kin at depth zero, fail the proof. A
self-contradictory antecedent paired with its own rearrangement is thus still
accepted, since its proof rearranges components and never derives
\texttt{False}, while the pair below is refused in both directions:

\begin{leancode}
\begin{Verbatim}[commandchars=\\\{\},fontfamily=zi4,fontsize=\leancodesize,formatcom=\color{leanCFg},baselinestretch=1.0,xleftmargin=0pt]
\textcolor{leanCCmt}{-- reference}
\textcolor{leanCKw}{theorem} \textcolor{leanCDecl}{r} : \ensuremath{\exists} n : \ensuremath{\mathbb{N}}, Even n \ensuremath{\wedge} \ensuremath{\lnot} Even n
\textcolor{leanCCmt}{-- candidate}
\textcolor{leanCKw}{theorem} \textcolor{leanCDecl}{c} : \ensuremath{\exists} x : \ensuremath{\mathbb{R}}, x = \textcolor{leanCNum}{0} \ensuremath{\wedge} x \ensuremath{\neq} \textcolor{leanCNum}{0}
\end{Verbatim}
\end{leancode}
 \paragraph{Relevance audit.} Before any search runs, this audit asks which parts of the goal actually need the antecedent. The goal is split into obligations, namely its conjuncts, the two sides of a biconditional, the body of an existential with its witness generalised again, and the domain of every hypothesis at any depth, and the split is taken on both the raw goal and its normalised form. An obligation that the antecedent asserts as one of its own components is kept. Every other obligation is attempted with the antecedent withheld, by the same proof procedure that each stage of the ladder in \S\ref{sec:eval} runs, with the same lemma pool and the same time budget. If any such attempt succeeds, that part of the goal is provable on its own, and the direction is refused without being searched. The pair below is refused this way, because its second conjunct is provable by reflexivity.

\begin{leancode}
\begin{Verbatim}[commandchars=\\\{\},fontfamily=zi4,fontsize=\leancodesize,formatcom=\color{leanCFg},baselinestretch=1.0,xleftmargin=0pt]
\textcolor{leanCCmt}{-- reference}
\textcolor{leanCKw}{theorem} \textcolor{leanCDecl}{r} : Irrational \textcolor{leanCBr1}{(}Real.sqrt \textcolor{leanCNum}{2}\textcolor{leanCBr1}{)}
\textcolor{leanCCmt}{-- candidate}
\textcolor{leanCKw}{theorem} \textcolor{leanCDecl}{c} : Irrational \textcolor{leanCBr1}{(}Real.sqrt \textcolor{leanCNum}{2}\textcolor{leanCBr1}{)} \ensuremath{\wedge} \textcolor{leanCBr1}{(}\textcolor{leanCNum}{2} + \textcolor{leanCNum}{2} = \textcolor{leanCNum}{4}\textcolor{leanCBr1}{)}
\end{Verbatim}
\end{leancode}

\noindent

\paragraph{Antecedent-free replay.} Every accepting attempt records the
transformation it applied, the lemma, orientation and site. The identical
transformation is replayed with the antecedent absent from the context, and
an acceptance whose replay still closes the goal is discarded.

\paragraph{Determinism.} Every judgement is a function of the two statements
alone. Per-attempt budgets are absolute rather than shares of a shrinking
remainder, the relevance probe is evaluated once per direction before the
ladder, and no stage depends on wall-clock time. Two runs on the same
statements give the same verdict on any machine.

\paragraph{A known false acceptance.} On the validation split, which
Table~\ref{tab:pnv} does not score, \Eval{} wrongly accepts one pair, Munkres
18.8b. The candidate adds a preorder on the space that its conclusion never
uses. In the direction from the candidate, the search supplies the
specialization preorder of the space, a structure the goal does not carry,
which the change-of-variables clause of Definition~\ref{def:faithful-step}
is meant to exclude. The instantiation check of \Checker{} refuses the same proof term, and
the search has no such check.

\section{Details of \Checker{}, and what it accepts beyond the definition}
\label{app:checker}

\paragraph{Reading a proof term.} \Checker{}, which we call the auditor, reads the proof term that Lean elaborates from a tactic script, not the script itself. The proof term is the fully explicit proof that Lean's kernel checks. The auditor walks the term from the root and gives every position one of three roles, shown in Figure~\ref{fig:checker}. First, the \emph{goal derivation} consists of the positions that produce the goal, namely the root, the body of every leading lambda and of every local binding, and the head and the arguments of each application. When the proof converts along an equation or a biconditional, the proof being converted also belongs to the goal derivation. Second, the \emph{antecedent arguments} are the arguments handed to the antecedent, or to a fact obtained from it, when it is applied. Third, the \emph{justification} of a conversion, labelled side condition in Figure~\ref{fig:checker}, is the equation or biconditional along which a rewrite by \texttt{rw} or \texttt{simp}, or a use of \texttt{Iff.mp} or \texttt{Iff.mpr}, converts the proof. A constant is judged by the role of the position it occupies and by the shape of its statement, not by its name. On the goal derivation only the antecedent, logical rules and bridges are admitted, and any other library fact fails the proof. A library bridge is admitted there only as a step of a conversion, or as an argument of a rule that combines equations, such as \texttt{Eq.trans} or \texttt{congrArg}. Applied anywhere else on the goal derivation, with no mention of the antecedent in its application, it fails the proof, because its conclusion then states part of the goal outright instead of rewriting the antecedent into it. The rule concerns library facts, not local lemmas. Library facts at the other two roles fall under the tolerance described below. The checks for unproved holes, for proof ex falso and on applications of the antecedent run at every position, justifications included.

\paragraph{Use of the antecedent in every part.} The auditor requires the antecedent to do work in each part of the proof, and it finds the parts by their position in the proof term. Not every mention of the antecedent counts as a use. A mention does not count when it is the discarded argument of \texttt{Function.const}, the value of a \texttt{have} or \texttt{let} whose name never occurs again, or part of a subterm whose statement is \texttt{True} or a closed equation with two sides that are equal by definition. A part whose proof has no use of the antecedent, and whose statement does not mention the antecedent, fails the proof. The parts are the following: (i) the whole proof, and on the goal derivation the rest of the proof after each \texttt{have} or \texttt{let}; (ii) each branch of a case split on anything not derived from the antecedent, and each branch of a case split on the antecedent, or on a fact derived from it, when nothing that the split produces is used; (iii) a proof passed as a lambda to a logical rule on the goal derivation, and the pointwise proof in an extensionality step such as \texttt{funext} or \texttt{Set.ext}; and (iv) every other component of a conjunction or a biconditional built on the goal derivation, unless it lies under a rewrite that the antecedent justifies. The rules above do not apply inside a justification, where a related rule applies instead. The related rule refuses a rewrite that turns a part of the goal into \texttt{True}, or drops it from a conjunction, by means of a proof of the part that does not use the antecedent.

\paragraph{Local lemmas.} A local lemma proved from the antecedent is expanded in place, and its proof is audited at the role of the position where the lemma is used. A local lemma proved without the antecedent, and any lemma in the file whose proof is left open, counts as an outside fact. Its statement is classified by shape and admitted or refused at the role where it is used, like a library fact. When the shape admits it, as an equation or a biconditional, its proof is read as well, and each direction of the proof is checked as an implication proof in its own right, with the direction's hypothesis standing in for the antecedent. A direction that proves a part of the goal without using its hypothesis fails the proof. The rule refuses a proof that states, as a local lemma, the equivalence of two statements that the library proves, proves each side outright, and rewrites along it. It also catches an added claim inside such a lemma. Every library fact that the proof of a local lemma uses is recorded like any other.

\paragraph{Instantiation of the antecedent.} Each time the proof applies the antecedent by name, the auditor checks the arguments of the application against the goal's own objects, which it fixes by aligning the two statements. It first counts the data variables that each statement binds, meaning the variables that are not proofs. When the goal binds fewer, it also counts those that appear once the goal's conclusion is unfolded. The goal's own objects are the first data variables that the proof introduces for the goal, as many as the larger count, together with the objects that the proof obtains from them by destructuring a hypothesis or by a local definition. A data variable introduced later is not one of the goal's own objects. The checks on type and data arguments concern only an argument that appears in the statement that the application delivers. First, a type argument, meaning an argument whose own type is a universe such as \texttt{Type} or \texttt{Prop}, must belong to the goal. If it contains a free variable, it must be one of the goal's own objects or a type that the goal statement mentions. If it is closed, it must share a constant other than a logical connective with the goal statement, and \texttt{Prop} itself passes only when the goal statement mentions \texttt{Prop}. Neither rule applies when the goal statement quantifies existentially over a type. Second, every data variable in a data argument must be one of the goal's own objects, and a data argument that the delivered statement uses as a type must itself be one of the goal's own objects or appear in the goal statement. Proofs and closed terms are not restricted. Third, a structure argument, meaning an instance in Lean's sense, is checked when it contains a free variable, its class concerns one of the goal's own objects, and it is not itself one of the goal's own instances. It passes if it is equal by definition to the goal's instance of the same class or appears in the goal statement. Otherwise, if its class can have more than one instance, it fails when the goal carries its own instance of the class for the same object. If its class has at most one instance, it fails unless Lean's instance search can build it with classical decidability available, or the goal statement assumes finiteness with a \texttt{Finite} or \texttt{Fintype} instance. Fourth, a witness that the proof writes out for an existential hypothesis of the antecedent passes if it contains a variable not derived from the antecedent, or if it is a closed term that the goal statement mentions. A closed witness foreign to the goal statement fails the proof, e.g., a fresh constant when the goal's own hypothesis already provides the object, and so does a witness built only from variables derived from the antecedent. The checks do not follow a local name given to a partial application of the antecedent, so an argument passed through such a name is not checked.

\paragraph{Replay without the antecedent.} The instantiation checks are backed by a replay, which runs only on a proof that passes every other check. The replay withholds the antecedent and keeps, by value, the local definitions and local facts that the proof builds without it. It then asks a weak prover, Lean's \texttt{assumption} followed by \texttt{exact?}, to close the goal from what remains, and a proof that uses the antecedent, another assumption or the goal itself does not count. The replay first uses the definitions and facts at the head of the proof, up to the first one that involves the antecedent. If the prover fails there, the replay follows the proof's introductions, its rewrites that do not use the antecedent, and the branches of its case splits. It probes at most four of the goals it reaches. If the prover closes such a goal with the proof's constructions, but cannot close it, or the original goal, from the library alone, the antecedent did no work and the proof is refused. If the prover closes the goal from the library alone, the replay neither refuses nor records the direction.

\paragraph{The tolerance, and a stricter audit policy.} \Eval{} meets the hypotheses of a bridge only with local hypotheses, its fixed rewrite set and the numeral facts of Appendix~\ref{app:eval}, and on the goal derivation the auditor admits only the antecedent, logical rules and bridges. The auditor tolerates library facts in two other positions. At an antecedent argument, a library fact may meet a hypothesis of the antecedent. A change of domain often leaves such a hypothesis, e.g., $0 \le \uparrow\!n$ in Appendix~\ref{app:chains}, and refusing the fact would reject a faithful proof for a gap in the library rather than for a fault of the proof. Inside a justification, a library fact of any shape is tolerated, because the current auditor does not distinguish a proof of a bridge's hypothesis from other content there. In both positions, every use of a fact that is neither a bridge nor a logical rule is recorded. A use at an antecedent argument is marked \emph{severe} when the fact, together with its arguments, draws on no local hypothesis and on nothing derived from the antecedent, even if it mentions the goal's variables. A use that draws on a hypothesis of the goal stays within what the goal grants, while a severe use meets a hypothesis of the antecedent with a fact that the goal never states. The stricter policy counts every severe use as a failure, at the top level and inside the proofs of local lemmas alike. The auditor offers this policy as a mode, and Table~\ref{tab:strict} reports the campaign of \S\ref{sec:campaign} under it.

\begin{table}[h]
\centering
\caption{The campaign of Table~\ref{tab:campaign} under the stricter audit
policy, in which a severe use, a library fact meeting a hypothesis of
the antecedent while drawing on no local hypothesis, fails the proof wherever
it occurs. Pairs are counted as both directions, one direction, or neither,
and precision and recall are over pair-level acceptance, as in
Table~\ref{tab:campaign}.}
\label{tab:strict}
\setlength{\tabcolsep}{3.5pt}
\resizebox{\textwidth}{!}{%
\begin{tabular}{lrrrcrrrcrr}
\toprule
 & \multicolumn{3}{c}{Equivalent (407)} & & \multicolumn{3}{c}{Non-equivalent (104)} & & & \\
\cmidrule(lr){2-4}\cmidrule(lr){6-8}
Prompt style & both & one & neither & & both & one & neither & & Precision & Recall \\
\midrule
Expedient    & 303 & 45 & 59 & & 0 & 46 & 58 & & 100.0 & 74.4 \\
Unrestricted & 317 & 56 & 34 & & 0 & 50 & 54 & & 100.0 & 77.9 \\
Faithful     & \textbf{323} & 56 & 28 & & 0 & 59 & 45 & & 100.0 & \textbf{79.4} \\
Rewrite      & 323 & 56 & 28 & & 0 & 64 & 40 & & 100.0 & 79.4 \\
\bottomrule
\end{tabular}}
\end{table}

Across the four styles, 199 proof directions carry a tolerated use at a hypothesis of the antecedent, 125 through a library constant and 74 only through a local lemma proved without the antecedent. The severe mark applies to 16, 13, 10, and 14 of them in the faithful, rewrite, expedient, and unrestricted styles, and 34 of the 53 severe uses draw on no local object at all. The facts involved are the openness of a fixed interval, the continuity of the identity on the reals, the openness of the whole space, the compactness of a singleton, and numeral facts such as the primality of a specific number. On the equivalent pairs, the stricter policy removes 7, 6, 6, and 5 pairs from the count certified in both directions, in the faithful, rewrite, expedient, and unrestricted styles. In each pair removed, the candidate carries a hypothesis that the library proves on its own, such as the continuity of the identity, and the direction that drops the hypothesis meets it with the library fact. Two further pairs lose their only certified direction, one of them in three styles, and in the fourth that pair is among those that lose both directions, because a numeral fact meets a hypothesis of the antecedent inside the proof of a local lemma, a use that the default policy records but only the stricter policy fails. On the non-equivalent pairs, the severe direction was the only certified one, so the pairs concerned move from one direction to neither. Under either policy, no non-equivalent pair is certified in both directions. Counting every tolerated use at a hypothesis of the antecedent as a failure, severe or not, lowers the faithful count further, to 303 of 407, but it also fails the uses that draw on a hypothesis of the goal.

\paragraph{Differences from the audits of \Eval{}.} The auditor has no
relevance audit that asks the library whether a part of the goal is provable
on its own. It judges the proof it is given, and it refuses a part of the goal
whose proof never depends on the antecedent, which covers a candidate that
only adds a claim the library proves on its own whenever the model proves
that claim without the antecedent. It does not grade the justification of a
rewrite, so a library fact can enter there, although a part of the goal
collapsed to a truth value inside a justification is refused. The auditor
reports every axiom a proof uses beyond the three standard ones but does not
refuse on them. One accepted proof, in the unrestricted style, uses
\texttt{native\_decide} to show that \texttt{ZMod 2} has two elements, a
true side fact that \texttt{decide} also proves. In the other direction, the
search does not check how the antecedent is instantiated, and the auditor
does.

\section{Datasets and adjudication}
\label{app:data}

\paragraph{ProofNetVerif.} The benchmark provides 3{,}752 reference and
candidate pairs with human labels in a validation and a test split. We score
the test split, 1{,}452 pairs of which 1{,}303 elaborate under our toolchain
(Appendix~\ref{app:repro}), under its published labels and under the
adjudicated labels described next. We reviewed the test pairs labelled
equivalent, and candidates for review came from a language-model screen and
from the results of our two programs. First, a language model, Claude Fable 5,
screened the pairs labelled equivalent as a pre-filter. Second, we reviewed the
pairs that no configuration of \Eval{} accepts and that the campaign of
\S\ref{sec:campaign} certifies in at most one direction. A person made every
final decision. The review changed 102 labels, all from equivalent to
non-equivalent, since pairs labelled not equivalent were not reviewed. The
released provenance shows that the review of 88 of the 102 was prompted by the
second source. The other 14 came from the language-model screen and from
manual rulings. At least eight changes follow from our criterion rather than
from an error in the benchmark. Four Artin 2.8.6 candidates state that the
centre of the product equals the product of the centres, where the reference
states an isomorphism, and four Rudin 1.4 candidates add a linear order
(Appendix~\ref{app:cases}). Only two relabelled pairs are ones that \beqplus{}
accepts. Both are Munkres 29.4, and each candidate states that
$\mathbb{N} \to [0,1]$ with the product topology is not locally compact, which
is false, since that space is compact. We report both label sets, and they give the same ranking of
the systems. Our criterion is equivalence with the reference, in the sense that
the candidate makes the same claim as the reference. The criterion is a human
judgement, and it is stricter than provable equivalence, as the last case of
Appendix~\ref{app:cases} shows. ProofNetVerif labels whether a candidate
faithfully formalizes the informal statement, so a candidate can be faithful to
the text without making the same claim as the reference, e.g., when the
reference states a more general result (Appendix~\ref{app:cases}). We did not
review the validation split, and Table~\ref{tab:pnv} does not score it.

\paragraph{The ablation set.} The 100 non-equivalent pairs of
Table~\ref{tab:ablation} comprise 50 ProofNetVerif test-split pairs and 50 constructed
pairs. The ProofNetVerif half contains all ten pairs \beqplus{} accepts on the
adjudicated test split under our toolchain, and 40 pairs sampled from the
remaining non-equivalent pairs. The constructed half has four classes, each the
smallest form of a failure met elsewhere in the paper. In the 18 \emph{added
claim} pairs, the candidate adds to a real statement a claim that the reference
does not make, e.g., a conjoined fact, an extra hypothesis, a tightened bound or
a stronger conclusion. When the added claim is a library fact, both
implications are provable, so an evaluator that finds both proofs can refuse
the pair only by checking how they use the antecedent. Added claims are the most
common defect among the 102 relabelled benchmark pairs, and the relevance audit
targets them. In the 14 \emph{neighbouring theorem} pairs, the candidate is a
different true theorem about the same objects, phrased like the reference, such
as Lagrange's theorem against Cauchy's. Such pairs show the first failure of
\S\ref{sec:algorithm}, and the two \beqplus{} acceptances of
Appendix~\ref{app:beqplus} are examples. In the 14 \emph{unrelated true fact}
pairs, the candidate is a fact provable on its own, e.g., a decidable arithmetic
identity, a restatement of a library theorem, or a statement about a decoy
instance, and the introduction gives an example. One of these pairs is the
pair found by Simon Sorg that \citet{poiroux2025reliable} report in their
Appendix A.6.1. \beqplus{} rejects it because of the guard added for it, and
\Eval{} needs proof-term grading to reject it. In the 4 \emph{vacuous
hypotheses} pairs, the hypotheses contradict each other or contain two
unrelated contradictions, like the vacuous statements that benchmark audits
report, and the explosion check targets them.

\paragraph{Recall suite.} Forty pairs: fifteen hand-written alternative
statements, each realising one named transformation of its reference, and
twenty-five candidates autoformalized blind from the natural-language
statement, without sight of the reference, for problems from Compfiles,
ProofNet\# and PutnamBench, each with a hand-proved equivalence witness. Two
pairs were removed on review, one as one-directional, its reference
under-formalizing the problem, and the IMO 1983 pair, whose candidate, as the
revision loop with the auditor in Appendix~\ref{app:cases} showed, claims what
the reference does not, leaving 38.

\paragraph{The certification set.} The model was run on 511 pairs: the 471
test-split pairs labelled equivalent before adjudication that elaborate on
our toolchain, and the 40 recall pairs. After adjudication these split into
407 equivalent pairs, 369 from ProofNetVerif and the 38 recall pairs, and 104
non-equivalent pairs, the 102 relabelled pairs and the two removed recall pairs.
\section{Full capability grid}
\label{app:grid}

Table~\ref{tab:capability} is the excerpt of the grid that Section~\ref{sec:capability} discusses, and Table~\ref{tab:capability-full} gives every configuration.

\begin{table}[t]
\centering
\caption{Effect of search strength on recall and cost over the \emph{equivalent set}. $D$ is the depth of chained
rewrites; $W_k$ and $DW_k$ double $k$ times the per-step bridge budgets and
the number of intermediate statements retained at each depth. Recall is not
comparable with Table~\ref{tab:pnv}, which does not contain the recall suite. The
full grid is Table~\ref{tab:capability-full}.}
\label{tab:capability}
\begin{tabular}{lrrrrrr}
\toprule
Configuration & Median & Mean & Max & Total & New & Recall \\
               & (s)   & (s)  & (s) & CPU   &     & \\
\midrule
BEq+  &   1.0 &   6.3 &   74 & 0.71\,h          & --- & 60.2\% \\
\midrule
D1 (base)     &   2.0 &  30.5 &  489 & \textbf{3.45\,h} & ---  & \textbf{65.4\%} \\
D1$+$W1       &   2.0 &  43.2 &  541 & 4.88\,h          & $+2$ & 65.8\% \\
D2            &   2.0 &  50.7 &  606 & 5.73\,h          & $+1$ & \textbf{66.1\%} \\
D8$+$W2$+$DW2 &   2.0 & 271.3 & 3126 & \textbf{30.67\,h} & 0   & 66.1\% \\
\bottomrule
\end{tabular}
\end{table}

\begin{table}[h]
\centering
\caption{Effect of search strength on recall and cost over the 407
adjudicated-equivalent pairs, all configurations. $D$, $W$ and $DW$ are
defined in Appendix~\ref{app:eval}. ``New'' counts pairs accepted by the
configuration and by none cheaper.}
\label{tab:capability-full}
\begin{tabular}{lrrrrrr}
\toprule
Configuration & Median & Mean & Max & Total & New & Recall \\
               & (s)   & (s)  & (s) & CPU   &     & \\
\midrule
\multicolumn{7}{l}{\emph{Reference point}}\\
BEq+ &   1.0 &   6.3 &   74 & 0.71\,h          & --- & 60.2\% \\
\midrule
D1 (base)      &   2.0 &  30.5 &  489 & \textbf{3.45\,h}  & ---  & \textbf{65.4\%} \\
D1$+$W1        &   2.0 &  43.2 &  541 & 4.88\,h           & $+2$ & 65.8\% \\
D2             &   2.0 &  50.7 &  606 & 5.73\,h           & $+1$ & \textbf{66.1\%} \\
D2$+$W1        &   2.0 &  65.0 &  642 & 7.34\,h           & 0    & 66.1\% \\
D2$+$W1$+$DW1  &   2.0 &  67.0 &  834 & 7.58\,h           & 0    & 66.1\% \\
D4             &   2.0 &  85.6 &  751 & 9.68\,h           & 0    & 66.1\% \\
D4$+$W1        &   2.0 & 100.2 &  808 & 11.33\,h          & 0    & 66.1\% \\
D4$+$W1$+$DW1  &   2.0 & 113.7 &  880 & 12.85\,h          & 0    & 66.1\% \\
D8$+$W1        &   2.0 & 148.7 & 1174 & 16.82\,h          & 0    & 66.1\% \\
D4$+$W2$+$DW2  &   2.0 & 172.2 & 2634 & 19.47\,h          & 0    & 66.1\% \\
D8$+$W2$+$DW2  &   2.0 & 271.3 & 3126 & \textbf{30.67\,h} & 0    & 66.1\% \\
\bottomrule
\end{tabular}
\end{table}

\section{Case studies of rejected model proofs}
\label{app:cases}

The non-equivalent pairs are instructive because the model often proves them
anyway, in one direction or with a proof that looks complete. The rejected
proof then shows where the two statements differ. We show three, each refused
by a different rule, and then an autoformalized pair whose label we changed
after a revision loop with the auditor. The three campaign proofs are printed
as the model wrote them, except that the candidate's theorem name,
\texttt{dummy} in ProofNetVerif, is printed as \texttt{candidate}. The proofs in
the last case were written in our development sessions, as described there.

\paragraph{Instantiating the antecedent at a foreign type.} The reference
formalizes Rudin's exercise 1.4, that a lower bound of a nonempty set is at
most an upper bound, for a partial order. The candidate states the same shape
for a linear order and an abstract element pair, which is closer to Rudin,
whose ordered sets are total. The reference is the more general statement, so
the two formal statements are not equivalent, although the candidate is a
faithful reading of the exercise. The model proves the
reference from the candidate anyway.

\begin{leancode}
\begin{Verbatim}[commandchars=\\\{\},fontfamily=zi4,fontsize=\leancodesize,formatcom=\color{leanCFg},baselinestretch=1.0,xleftmargin=0pt]
\textcolor{leanCCmt}{-- ===== assumed statement (antecedent) =====}
\textcolor{leanCKw}{theorem} \textcolor{leanCDecl}{candidate_assumed} \textcolor{leanCBr1}{\{}\ensuremath{\alpha} : \textcolor{leanCKw}{Type}*\textcolor{leanCBr1}{\}} \textcolor{leanCBr1}{[}LinearOrder \ensuremath{\alpha}\textcolor{leanCBr1}{]} \textcolor{leanCBr1}{\{}E : Set \ensuremath{\alpha}\textcolor{leanCBr1}{\}} \textcolor{leanCBr1}{(}hE : E.Nonempty\textcolor{leanCBr1}{)}
    \textcolor{leanCBr1}{\{}\ensuremath{\alpha}_bound : \ensuremath{\alpha}\textcolor{leanCBr1}{\}} \textcolor{leanCBr1}{(}h\ensuremath{\alpha} : \ensuremath{\forall} e \ensuremath{\in} E, \ensuremath{\alpha}_bound \ensuremath{\leq} e\textcolor{leanCBr1}{)} \textcolor{leanCBr1}{\{}\ensuremath{\beta}_bound : \ensuremath{\alpha}\textcolor{leanCBr1}{\}} \textcolor{leanCBr1}{(}h\ensuremath{\beta} : \ensuremath{\forall} e \ensuremath{\in} E, e \ensuremath{\leq} \ensuremath{\beta}_bound\textcolor{leanCBr1}{)}
    : \ensuremath{\alpha}_bound \ensuremath{\leq} \ensuremath{\beta}_bound := \textcolor{leanCKw}{by}
  \textcolor{leanCSorry}{sorry}

\textcolor{leanCCmt}{-- ===== goal statement, PROVED from the antecedent =====}
\textcolor{leanCKw}{theorem} \textcolor{leanCDecl}{exercise_1_4}
\textcolor{leanCBr1}{(}\ensuremath{\alpha} : \textcolor{leanCKw}{Type}*\textcolor{leanCBr1}{)} \textcolor{leanCBr1}{[}PartialOrder \ensuremath{\alpha}\textcolor{leanCBr1}{]}
\textcolor{leanCBr1}{(}s : Set \ensuremath{\alpha}\textcolor{leanCBr1}{)}
\textcolor{leanCBr1}{(}x y : \ensuremath{\alpha}\textcolor{leanCBr1}{)}
\textcolor{leanCBr1}{(}h\ensuremath{\sb{0}} : Set.Nonempty s\textcolor{leanCBr1}{)}
\textcolor{leanCBr1}{(}h\ensuremath{\sb{1}} : x \ensuremath{\in} lowerBounds s\textcolor{leanCBr1}{)}
\textcolor{leanCBr1}{(}h\ensuremath{\sb{2}} : y \ensuremath{\in} upperBounds s\textcolor{leanCBr1}{)}
: x \ensuremath{\leq} y := \textcolor{leanCKw}{by}
  classical
  obtain \textcolor{leanCBr1}{\ensuremath{\langle}}a, ha\textcolor{leanCBr1}{\ensuremath{\rangle}} := h\ensuremath{\sb{0}}
  \textcolor{leanCKw}{have} h := candidate_assumed
    \textcolor{leanCBr1}{(}E := \textcolor{leanCBr2}{\{}\textcolor{leanCBr3}{(}x \ensuremath{\leq} a : \textcolor{leanCKw}{Prop}\textcolor{leanCBr3}{)}\textcolor{leanCBr2}{\}}\textcolor{leanCBr1}{)}
    \textcolor{leanCBr1}{(}\ensuremath{\alpha}_bound := True\textcolor{leanCBr1}{)}
    \textcolor{leanCBr1}{(}\ensuremath{\beta}_bound := \textcolor{leanCBr2}{(}x \ensuremath{\leq} y\textcolor{leanCBr2}{)}\textcolor{leanCBr1}{)}
    \textcolor{leanCBr1}{(}Set.singleton_nonempty \textcolor{leanCBr2}{(}x \ensuremath{\leq} a : \textcolor{leanCKw}{Prop}\textcolor{leanCBr2}{)}\textcolor{leanCBr1}{)}
    \textcolor{leanCBr1}{(}\textcolor{leanCKw}{by}
      intro p hp
      subst p
      exact \textcolor{leanCKw}{fun} _ => h\ensuremath{\sb{1}} ha\textcolor{leanCBr1}{)}
    \textcolor{leanCBr1}{(}\textcolor{leanCKw}{by}
      intro p hp
      subst p
      exact \textcolor{leanCKw}{fun} hxa => hxa.trans \textcolor{leanCBr2}{(}h\ensuremath{\sb{2}} ha\textcolor{leanCBr2}{)}\textcolor{leanCBr1}{)}
  exact h True.intro
\end{Verbatim}
\end{leancode}

The candidate is a theorem about an arbitrary linearly ordered set, and
the proof instantiates that set at \texttt{Prop}, with the order's
elements standing in for propositions. The goal statement never mentions
\texttt{Prop}, but \texttt{Prop} appears in the statement that the application
of the antecedent delivers. The check on type arguments
(Appendix~\ref{app:checker}) therefore refuses the proof, and it also flags the
argument \texttt{True}, whose type \texttt{Prop} is a universe. The structure
check plays no part in the refusal, because it examines only instances on the
goal's own objects.

\paragraph{Instantiating the antecedent at a foreign object.} Munkres 22.5
states that the restriction of an open map to an open subspace is an open map.
The candidate states only that the image of the open set is open, which is a
single instance of the map being open and says nothing about the subspace. The
model proves the reference from the candidate.

\begin{leancode}
\begin{Verbatim}[commandchars=\\\{\},fontfamily=zi4,fontsize=\leancodesize,formatcom=\color{leanCFg},baselinestretch=1.0,xleftmargin=0pt]
\textcolor{leanCCmt}{-- ===== assumed statement (antecedent) =====}
\textcolor{leanCKw}{theorem} \textcolor{leanCDecl}{candidate_assumed} \textcolor{leanCBr1}{\{}X Y : \textcolor{leanCKw}{Type}*\textcolor{leanCBr1}{\}} \textcolor{leanCBr1}{[}TopologicalSpace X\textcolor{leanCBr1}{]} \textcolor{leanCBr1}{[}TopologicalSpace Y\textcolor{leanCBr1}{]} \textcolor{leanCBr1}{(}p :
    X \ensuremath{\rightarrow} Y\textcolor{leanCBr1}{)} \textcolor{leanCBr1}{(}hf : IsOpenMap p\textcolor{leanCBr1}{)} \textcolor{leanCBr1}{(}A : Set X\textcolor{leanCBr1}{)} \textcolor{leanCBr1}{(}hA : IsOpen A\textcolor{leanCBr1}{)} : IsOpen \textcolor{leanCBr1}{(}p '' A\textcolor{leanCBr1}{)} := \textcolor{leanCKw}{by}
  \textcolor{leanCSorry}{sorry}

\textcolor{leanCCmt}{-- ===== goal statement, PROVED from the antecedent =====}
\textcolor{leanCKw}{theorem} \textcolor{leanCDecl}{exercise_22_5} \textcolor{leanCBr1}{\{}X Y : \textcolor{leanCKw}{Type}*\textcolor{leanCBr1}{\}} \textcolor{leanCBr1}{[}TopologicalSpace X\textcolor{leanCBr1}{]}
  \textcolor{leanCBr1}{[}TopologicalSpace Y\textcolor{leanCBr1}{]} \textcolor{leanCBr1}{(}p : X \ensuremath{\rightarrow} Y\textcolor{leanCBr1}{)} \textcolor{leanCBr1}{(}hp : IsOpenMap p\textcolor{leanCBr1}{)}
  \textcolor{leanCBr1}{(}A : Set X\textcolor{leanCBr1}{)} \textcolor{leanCBr1}{(}hA : IsOpen A\textcolor{leanCBr1}{)} : IsOpenMap \textcolor{leanCBr1}{(}p \ensuremath{\circ} Subtype.val : A \ensuremath{\rightarrow} Y\textcolor{leanCBr1}{)} := \textcolor{leanCKw}{by}

  intro U hU
  simpa \textcolor{leanCBr1}{[}Function.comp_def, Set.image_image\textcolor{leanCBr1}{]} \textcolor{leanCKw}{using}
    candidate_assumed p hp \textcolor{leanCBr1}{(}Subtype.val '' U\textcolor{leanCBr1}{)} \textcolor{leanCBr1}{(}hA.isOpenMap_subtype_val U hU\textcolor{leanCBr1}{)}
\end{Verbatim}
\end{leancode}

The candidate's open-set argument is filled with \texttt{Subtype.val '' U},
where \texttt{U} is a set that exists only after the goal's own conclusion is
unfolded. The proof thus applies the antecedent to an object that it builds
itself, and it uses the candidate as a general schema rather than as the
statement that corresponds to the reference. Both statements bind six data
variables, namely the two spaces, their topologies, the map and the set, so the
auditor takes the first six data variables that the proof introduces as the
goal's own objects. The set \texttt{U} is the seventh, introduced by
\texttt{intro} once \texttt{IsOpenMap} is unfolded. The argument
\texttt{Subtype.val '' U} contains \texttt{U} and appears in the statement that
the antecedent delivers, so the check on data arguments
(Appendix~\ref{app:checker}) refuses the proof.

\paragraph{Not using the antecedent at all.} Artin 2.8.6 states that the centre of a product group is isomorphic to the product of the centres. The candidate states that the centre of the product, taken as a set, equals the product of the two centres as sets. An isomorphism between two groups and an equality of two sets are different claims, and the pair is not equivalent. The model proves the candidate from the reference in two lines.

\begin{leancode}
\begin{Verbatim}[commandchars=\\\{\},fontfamily=zi4,fontsize=\leancodesize,formatcom=\color{leanCFg},baselinestretch=1.0,xleftmargin=0pt]
\textcolor{leanCCmt}{-- ===== assumed statement (antecedent) =====}
\textcolor{leanCKw}{theorem} \textcolor{leanCDecl}{exercise_2_8_6_assumed} \textcolor{leanCBr1}{\{}G H : \textcolor{leanCKw}{Type}*\textcolor{leanCBr1}{\}} \textcolor{leanCBr1}{[}Group G\textcolor{leanCBr1}{]} \textcolor{leanCBr1}{[}Group H\textcolor{leanCBr1}{]} :
  Nonempty \textcolor{leanCBr1}{(}center \textcolor{leanCBr2}{(}G \ensuremath{\times} H\textcolor{leanCBr2}{)} \ensuremath{\simeq}* \textcolor{leanCBr2}{(}center G\textcolor{leanCBr2}{)} \ensuremath{\times} \textcolor{leanCBr2}{(}center H\textcolor{leanCBr2}{)}\textcolor{leanCBr1}{)} := \textcolor{leanCKw}{by}
  \textcolor{leanCSorry}{sorry}

\textcolor{leanCCmt}{-- ===== goal statement, PROVED from the antecedent =====}
\textcolor{leanCKw}{theorem} \textcolor{leanCDecl}{candidate} \textcolor{leanCBr1}{\{}G H : \textcolor{leanCKw}{Type}*\textcolor{leanCBr1}{\}} \textcolor{leanCBr1}{[}Group G\textcolor{leanCBr1}{]} \textcolor{leanCBr1}{[}Group H\textcolor{leanCBr1}{]} : \textcolor{leanCBr1}{(}center \textcolor{leanCBr2}{(}G \ensuremath{\times} H\textcolor{leanCBr2}{)} : Set \textcolor{leanCBr2}{(}G \ensuremath{\times} H\textcolor{leanCBr2}{)}\textcolor{leanCBr1}{)} =
    \textcolor{leanCBr1}{(}center G : Set G\textcolor{leanCBr1}{)} \ensuremath{\times}\ensuremath{\sp{\mathrm{s}}} \textcolor{leanCBr1}{(}center H : Set H\textcolor{leanCBr1}{)} := \textcolor{leanCKw}{by}
  rcases exercise_2_8_6_assumed \textcolor{leanCBr1}{(}G := G\textcolor{leanCBr1}{)} \textcolor{leanCBr1}{(}H := H\textcolor{leanCBr1}{)} \textcolor{leanCKw}{with} \textcolor{leanCBr1}{\ensuremath{\langle}}_e\textcolor{leanCBr1}{\ensuremath{\rangle}}
  exact Set.center_prod
\end{Verbatim}
\end{leancode}

The antecedent is extracted and bound to \texttt{\_e}, which never appears again, and the goal is then closed by the library theorem \texttt{Set.center\_prod} alone. The first line can be deleted and the proof still compiles. The auditor refuses the proof under two rules of Appendix~\ref{app:checker}, each of which suffices on its own. First, the proof destructures the antecedent and never uses what it extracts. The branch of the case split must then use the antecedent itself, and it does not. Second, \texttt{Set.center\_prod} is an equation, so the grading admits it as a bridge. The proof, however, applies it on the goal derivation outside any rewrite and without the antecedent. A bridge used in such a position states the goal outright, and the auditor refuses it.

Such rejected proofs prompted most of the label review
(Appendix~\ref{app:data}). The benchmark labels all three pairs above
equivalent, and the model's attempts, read as evidence rather than as
certificates, show where the statements of each pair differ. A person decided
every label.

\paragraph{A label changed after a revision loop with the auditor.} The auditor can also help to judge a statement, by revising a proof from the auditor's stated reasons and examining the faults that remain. IMO 1983 Problem 1 asks for all functions on the positive reals with a given functional equation and a limit condition. The Compfiles reference formalizes the domain with the subtype of positive reals, so positivity is carried by the type. The candidate was autoformalized blind from the natural-language statement, without sight of the reference. It works over the plain reals and restores positivity as a conjunct on the condition side of its biconditional. Both read as formalizations of the problem. A hand-written Lean proof establishes that the two are logically equivalent, and a strong model asked to compare the statements called them equivalent, on the ground that positivity follows from membership in the solution set. The pair therefore entered our recall suite labelled equivalent.

\begin{leancode}
\begin{Verbatim}[commandchars=\\\{\},fontfamily=zi4,fontsize=\leancodesize,formatcom=\color{leanCFg},baselinestretch=1.0,xleftmargin=0pt]
\textcolor{leanCCmt}{-- reference: positivity carried by the type \ensuremath{\mathbb{R}}+}
\textcolor{leanCKw}{abbrev} \textcolor{leanCDecl}{PosReal} : \textcolor{leanCKw}{Type} := \textcolor{leanCBr1}{\{} x : \ensuremath{\mathbb{R}} // \textcolor{leanCNum}{0} < x \textcolor{leanCBr1}{\}}

\textcolor{leanCKw}{notation} \textcolor{leanCStr}{"\ensuremath{\mathbb{R}}+"} => PosReal

\textcolor{leanCKw}{def} \textcolor{leanCDecl}{SolutionSet} : Set \textcolor{leanCBr1}{(}\ensuremath{\mathbb{R}}+ \ensuremath{\rightarrow} \ensuremath{\mathbb{R}}+\textcolor{leanCBr1}{)} := \textcolor{leanCBr1}{\{} \textcolor{leanCKw}{fun} x \ensuremath{\mapsto} \textcolor{leanCNum}{1} / x \textcolor{leanCBr1}{\}}
\textcolor{leanCKw}{theorem} \textcolor{leanCDecl}{Imo1983P1_refs} \textcolor{leanCBr1}{(}f : \ensuremath{\mathbb{R}}+ \ensuremath{\rightarrow} \ensuremath{\mathbb{R}}+\textcolor{leanCBr1}{)} :
    f \ensuremath{\in} SolutionSet \ensuremath{\leftrightarrow}
    \textcolor{leanCBr1}{(}\textcolor{leanCBr2}{(}\ensuremath{\forall} x y, f \textcolor{leanCBr3}{(}x * f y\textcolor{leanCBr3}{)} = y * f x\textcolor{leanCBr2}{)} \ensuremath{\wedge}
     \textcolor{leanCBr2}{(}\ensuremath{\forall} \ensuremath{\varepsilon}, \ensuremath{\exists} x, \ensuremath{\forall} y, x < y \ensuremath{\rightarrow} f y < \ensuremath{\varepsilon}\textcolor{leanCBr2}{)}\textcolor{leanCBr1}{)}
\textcolor{leanCCmt}{-- candidate: plain \ensuremath{\mathbb{R}}, positivity spelled out as a conjunct}
\textcolor{leanCKw}{def} \textcolor{leanCDecl}{Imo1983P1_solution} : Set \textcolor{leanCBr1}{(}\ensuremath{\mathbb{R}} \ensuremath{\rightarrow} \ensuremath{\mathbb{R}}\textcolor{leanCBr1}{)} := \textcolor{leanCBr1}{\{}f | \ensuremath{\forall} x : \ensuremath{\mathbb{R}}, \textcolor{leanCNum}{0} < x \ensuremath{\rightarrow} f x = \textcolor{leanCNum}{1} / x\textcolor{leanCBr1}{\}}
\textcolor{leanCKw}{theorem} \textcolor{leanCDecl}{Imo1983P1} \textcolor{leanCBr1}{(}f : \ensuremath{\mathbb{R}} \ensuremath{\rightarrow} \ensuremath{\mathbb{R}}\textcolor{leanCBr1}{)} :
    \textcolor{leanCBr1}{(}\textcolor{leanCBr2}{(}\ensuremath{\forall} x : \ensuremath{\mathbb{R}}, \textcolor{leanCNum}{0} < x \ensuremath{\rightarrow} \textcolor{leanCNum}{0} < f x\textcolor{leanCBr2}{)} \ensuremath{\wedge}
      \textcolor{leanCBr2}{(}\ensuremath{\forall} x y : \ensuremath{\mathbb{R}}, \textcolor{leanCNum}{0} < x \ensuremath{\rightarrow} \textcolor{leanCNum}{0} < y \ensuremath{\rightarrow} f \textcolor{leanCBr3}{(}x * f y\textcolor{leanCBr3}{)} = y * f x\textcolor{leanCBr2}{)} \ensuremath{\wedge}
      Tendsto f atTop \textcolor{leanCBr2}{(}nhds \textcolor{leanCNum}{0}\textcolor{leanCBr2}{)}\textcolor{leanCBr1}{)} \ensuremath{\leftrightarrow}
    f \ensuremath{\in} Imo1983P1_solution
\end{Verbatim}
\end{leancode}

The proofs of every prompt style were rejected in both directions, so we ran the auditor in a loop on the direction from the reference. After each rejection, an assistant language model in our development sessions wrote a new proof from the auditor's stated reasons alone. Four rounds removed every other fault that the auditor reported. One fault remained, which we judged to come from the statement rather than from the proof. The candidate's biconditional has positivity on its condition side, so its direction from membership in the solution set says that every solution is positive. The claim is true, and the library proves it. The reference never states it, however, because its functions are positive by type. The two placements are different claims, which coincide here only because the particular solution set entails positivity. A constraint that one statement carries in a type is re-encoded faithfully as a hypothesis, not as a conjunct. A control supports the diagnosis. On the candidate with positivity moved into a hypothesis and nothing else changed, both directions are certified, the direction from the reference at the first attempt and the reverse after three rounds. The reverse direction meets a hypothesis of the antecedent with a positivity fact under the tolerance of Appendix~\ref{app:checker}, the same kind of step as the reverse direction in Appendix~\ref{app:chains}. The hand-written witness and the strong model judged whether the conjunct is true, and the auditor judged where it comes from. The case marks a boundary of our criterion, which distinguishes a condition that the reference carries in a type from the same condition stated in the candidate's conclusion. Under our labelling criterion (Appendix~\ref{app:data}), the candidate makes a claim that the reference does not make, so we relabelled the pair non-equivalent and removed it from the recall suite, and all counts in the paper reflect the ruling. The final proof of the loop and the two proofs of the variant are printed below.

\begin{leancode}
\begin{Verbatim}[commandchars=\\\{\},fontfamily=zi4,fontsize=\leancodesize,formatcom=\color{leanCFg},baselinestretch=1.0,xleftmargin=0pt]
\textcolor{leanCCmt}{-- reference \ensuremath{\Rightarrow} candidate, the proof after four rounds of revision (still rejected)}
\textcolor{leanCCmt}{-- ===== assumed statement (antecedent) =====}
\textcolor{leanCKw}{theorem} \textcolor{leanCDecl}{Imo1983P1_refs_assumed} \textcolor{leanCBr1}{(}f : \ensuremath{\mathbb{R}}+ \ensuremath{\rightarrow} \ensuremath{\mathbb{R}}+\textcolor{leanCBr1}{)} :
    f \ensuremath{\in} SolutionSet \ensuremath{\leftrightarrow}
    \textcolor{leanCBr1}{(}\textcolor{leanCBr2}{(}\ensuremath{\forall} x y, f \textcolor{leanCBr3}{(}x * f y\textcolor{leanCBr3}{)} = y * f x\textcolor{leanCBr2}{)} \ensuremath{\wedge}
     \textcolor{leanCBr2}{(}\ensuremath{\forall} \ensuremath{\varepsilon}, \ensuremath{\exists} x, \ensuremath{\forall} y, x < y \ensuremath{\rightarrow} f y < \ensuremath{\varepsilon}\textcolor{leanCBr2}{)}\textcolor{leanCBr1}{)} := \textcolor{leanCKw}{by}
  \textcolor{leanCSorry}{sorry}

\textcolor{leanCCmt}{-- ===== goal statement, PROVED from the antecedent =====}
\textcolor{leanCKw}{theorem} \textcolor{leanCDecl}{Imo1983P1} \textcolor{leanCBr1}{(}f : \ensuremath{\mathbb{R}} \ensuremath{\rightarrow} \ensuremath{\mathbb{R}}\textcolor{leanCBr1}{)} :
    \textcolor{leanCBr1}{(}\textcolor{leanCBr2}{(}\ensuremath{\forall} x : \ensuremath{\mathbb{R}}, \textcolor{leanCNum}{0} < x \ensuremath{\rightarrow} \textcolor{leanCNum}{0} < f x\textcolor{leanCBr2}{)} \ensuremath{\wedge}
      \textcolor{leanCBr2}{(}\ensuremath{\forall} x y : \ensuremath{\mathbb{R}}, \textcolor{leanCNum}{0} < x \ensuremath{\rightarrow} \textcolor{leanCNum}{0} < y \ensuremath{\rightarrow} f \textcolor{leanCBr3}{(}x * f y\textcolor{leanCBr3}{)} = y * f x\textcolor{leanCBr2}{)} \ensuremath{\wedge}
      Tendsto f atTop \textcolor{leanCBr2}{(}nhds \textcolor{leanCNum}{0}\textcolor{leanCBr2}{)}\textcolor{leanCBr1}{)} \ensuremath{\leftrightarrow}
    f \ensuremath{\in} Imo1983P1_solution := \textcolor{leanCKw}{by}
  constructor
  \ensuremath{\cdot} rintro \textcolor{leanCBr1}{\ensuremath{\langle}}hf_pos, hf_eq, hf_lim\textcolor{leanCBr1}{\ensuremath{\rangle}}
    \textcolor{leanCKw}{let} g : \ensuremath{\mathbb{R}}+ \ensuremath{\rightarrow} \ensuremath{\mathbb{R}}+ := \textcolor{leanCKw}{fun} x => \textcolor{leanCBr1}{\ensuremath{\langle}}f x, hf_pos x x.property\textcolor{leanCBr1}{\ensuremath{\rangle}}
    \textcolor{leanCKw}{have} hiff := Imo1983P1_refs_assumed g
    \textcolor{leanCKw}{have} hg_eq : \ensuremath{\forall} x y, g \textcolor{leanCBr1}{(}x * g y\textcolor{leanCBr1}{)} = y * g x := \textcolor{leanCKw}{by}
      intro x y
      apply Subtype.ext
      exact hf_eq x y x.property y.property
    \textcolor{leanCKw}{have} hg_tail : \ensuremath{\forall} \ensuremath{\varepsilon}, \ensuremath{\exists} x, \ensuremath{\forall} y, x < y \ensuremath{\rightarrow} g y < \ensuremath{\varepsilon} := \textcolor{leanCKw}{by}
      intro \ensuremath{\varepsilon}
      obtain \textcolor{leanCBr1}{\ensuremath{\langle}}N, hN\textcolor{leanCBr1}{\ensuremath{\rangle}} := \textcolor{leanCBr1}{(}Metric.tendsto_atTop.1 hf_lim\textcolor{leanCBr1}{)} \ensuremath{\varepsilon} \ensuremath{\varepsilon}.property
      refine \textcolor{leanCBr1}{\ensuremath{\langle}}\textcolor{leanCBr2}{\ensuremath{\langle}}max N \textcolor{leanCNum}{1}, lt_of_lt_of_le zero_lt_one \textcolor{leanCBr3}{(}le_max_right N \textcolor{leanCNum}{1}\textcolor{leanCBr3}{)}\textcolor{leanCBr2}{\ensuremath{\rangle}}, ?_\textcolor{leanCBr1}{\ensuremath{\rangle}}
      intro y hxy
      change f y < \ensuremath{\varepsilon}
      \textcolor{leanCKw}{have} hyN : N \ensuremath{\leq} \textcolor{leanCBr1}{(}y : \ensuremath{\mathbb{R}}\textcolor{leanCBr1}{)} := le_trans \textcolor{leanCBr1}{(}le_max_left N \textcolor{leanCNum}{1}\textcolor{leanCBr1}{)} \textcolor{leanCBr1}{(}le_of_lt hxy\textcolor{leanCBr1}{)}
      simpa \textcolor{leanCBr1}{[}Real.dist_eq, abs_of_pos \textcolor{leanCBr2}{(}hf_pos y y.property\textcolor{leanCBr2}{)}\textcolor{leanCBr1}{]} \textcolor{leanCKw}{using} hN y hyN
    \textcolor{leanCKw}{have} himp := hiff.mpr
    \textcolor{leanCKw}{have} hg_mem := himp \textcolor{leanCBr1}{\ensuremath{\langle}}hg_eq, hg_tail\textcolor{leanCBr1}{\ensuremath{\rangle}}
    change \ensuremath{\forall} x : \ensuremath{\mathbb{R}}, \textcolor{leanCNum}{0} < x \ensuremath{\rightarrow} f x = \textcolor{leanCNum}{1} / x
    intro x hx
    exact congrArg Subtype.val \textcolor{leanCBr1}{(}congrFun hg_mem \textcolor{leanCBr2}{\ensuremath{\langle}}x, hx\textcolor{leanCBr2}{\ensuremath{\rangle}}\textcolor{leanCBr1}{)}
  \ensuremath{\cdot} intro hf_mem
    change \ensuremath{\forall} x : \ensuremath{\mathbb{R}}, \textcolor{leanCNum}{0} < x \ensuremath{\rightarrow} f x = \textcolor{leanCNum}{1} / x \textcolor{leanCKw}{at} hf_mem
    \textcolor{leanCKw}{let} g : \ensuremath{\mathbb{R}}+ \ensuremath{\rightarrow} \ensuremath{\mathbb{R}}+ := \textcolor{leanCKw}{fun} x => \textcolor{leanCNum}{1} / x
    \textcolor{leanCKw}{have} hiff := Imo1983P1_refs_assumed g
    \textcolor{leanCKw}{have} hmp := hiff.mp
    \textcolor{leanCKw}{have} hg := hmp rfl
    refine \textcolor{leanCBr1}{\ensuremath{\langle}}?_, ?_, ?_\textcolor{leanCBr1}{\ensuremath{\rangle}}
    \ensuremath{\cdot} intro x hx
      \textcolor{leanCKw}{have} hval : \textcolor{leanCBr1}{(}\textcolor{leanCBr2}{(}g \textcolor{leanCBr3}{\ensuremath{\langle}}x, hx\textcolor{leanCBr3}{\ensuremath{\rangle}} : \ensuremath{\mathbb{R}}+\textcolor{leanCBr2}{)} : \ensuremath{\mathbb{R}}\textcolor{leanCBr1}{)} = \textcolor{leanCNum}{1} / x := rfl
      rw \textcolor{leanCBr1}{[}hf_mem x hx, \ensuremath{\leftarrow} hval\textcolor{leanCBr1}{]}
      exact \textcolor{leanCBr1}{(}g \textcolor{leanCBr2}{\ensuremath{\langle}}x, hx\textcolor{leanCBr2}{\ensuremath{\rangle}}\textcolor{leanCBr1}{)}.\textcolor{leanCNum}{2}
    \ensuremath{\cdot} intro x y hx hy
      \textcolor{leanCKw}{have} h := congrArg Subtype.val \textcolor{leanCBr1}{(}hg.1 \textcolor{leanCBr2}{\ensuremath{\langle}}x, hx\textcolor{leanCBr2}{\ensuremath{\rangle}} \textcolor{leanCBr2}{\ensuremath{\langle}}y, hy\textcolor{leanCBr2}{\ensuremath{\rangle}}\textcolor{leanCBr1}{)}
      rw \textcolor{leanCBr1}{[}hf_mem y hy, hf_mem x hx,
        hf_mem \textcolor{leanCBr2}{(}x * \textcolor{leanCBr3}{(}\textcolor{leanCNum}{1} / y\textcolor{leanCBr3}{)}\textcolor{leanCBr2}{)} \textcolor{leanCBr2}{(}mul_pos hx \textcolor{leanCBr3}{(}one_div_pos.mpr hy\textcolor{leanCBr3}{)}\textcolor{leanCBr2}{)}\textcolor{leanCBr1}{]}
      exact h
    \ensuremath{\cdot} apply Metric.tendsto_atTop.2
      intro \ensuremath{\varepsilon} h\ensuremath{\varepsilon}
      obtain \textcolor{leanCBr1}{\ensuremath{\langle}}x, hx\textcolor{leanCBr1}{\ensuremath{\rangle}} := hg.2 \textcolor{leanCBr1}{\ensuremath{\langle}}\ensuremath{\varepsilon}, h\ensuremath{\varepsilon}\textcolor{leanCBr1}{\ensuremath{\rangle}}
      refine \textcolor{leanCBr1}{\ensuremath{\langle}}\textcolor{leanCBr2}{(}x : \ensuremath{\mathbb{R}}\textcolor{leanCBr2}{)} + \textcolor{leanCNum}{1}, ?_\textcolor{leanCBr1}{\ensuremath{\rangle}}
      intro y hy
      \textcolor{leanCKw}{have} hy_pos : \textcolor{leanCNum}{0} < y := lt_of_lt_of_le x.property \textcolor{leanCBr1}{(}le_trans
          \textcolor{leanCBr2}{(}le_add_of_nonneg_right zero_le_one\textcolor{leanCBr2}{)} hy\textcolor{leanCBr1}{)}
      \textcolor{leanCKw}{have} hxy : x < \textcolor{leanCBr1}{(}\textcolor{leanCBr2}{\ensuremath{\langle}}y, hy_pos\textcolor{leanCBr2}{\ensuremath{\rangle}} : \ensuremath{\mathbb{R}}+\textcolor{leanCBr1}{)} := \textcolor{leanCKw}{by}
        change \textcolor{leanCBr1}{(}x : \ensuremath{\mathbb{R}}\textcolor{leanCBr1}{)} < y
        exact lt_of_lt_of_le \textcolor{leanCBr1}{(}lt_add_one \textcolor{leanCBr2}{(}x : \ensuremath{\mathbb{R}}\textcolor{leanCBr2}{)}\textcolor{leanCBr1}{)} hy
      \textcolor{leanCKw}{have} htail := hx \textcolor{leanCBr1}{\ensuremath{\langle}}y, hy_pos\textcolor{leanCBr1}{\ensuremath{\rangle}} hxy
      \textcolor{leanCKw}{have} htail' : \textcolor{leanCBr1}{(}\textcolor{leanCNum}{1} / y : \ensuremath{\mathbb{R}}\textcolor{leanCBr1}{)} < \ensuremath{\varepsilon} := \textcolor{leanCKw}{by}
        exact_mod_cast htail
      rw \textcolor{leanCBr1}{[}hf_mem y hy_pos, Real.dist_eq, sub_zero,
        abs_of_pos \textcolor{leanCBr2}{(}one_div_pos.mpr hy_pos\textcolor{leanCBr2}{)}\textcolor{leanCBr1}{]}
      exact htail'
\end{Verbatim}
\end{leancode}
\begin{leancode}
\begin{Verbatim}[commandchars=\\\{\},fontfamily=zi4,fontsize=\leancodesize,formatcom=\color{leanCFg},baselinestretch=1.0,xleftmargin=0pt]
\textcolor{leanCCmt}{-- variant candidate (positivity as a hypothesis), reference \ensuremath{\Rightarrow} candidate (PASS)}
\textcolor{leanCCmt}{-- ===== assumed statement (antecedent) =====}
\textcolor{leanCKw}{theorem} \textcolor{leanCDecl}{Imo1983P1_refs_assumed} \textcolor{leanCBr1}{(}f : \ensuremath{\mathbb{R}}+ \ensuremath{\rightarrow} \ensuremath{\mathbb{R}}+\textcolor{leanCBr1}{)} :
    f \ensuremath{\in} SolutionSet \ensuremath{\leftrightarrow}
    \textcolor{leanCBr1}{(}\textcolor{leanCBr2}{(}\ensuremath{\forall} x y, f \textcolor{leanCBr3}{(}x * f y\textcolor{leanCBr3}{)} = y * f x\textcolor{leanCBr2}{)} \ensuremath{\wedge}
     \textcolor{leanCBr2}{(}\ensuremath{\forall} \ensuremath{\varepsilon}, \ensuremath{\exists} x, \ensuremath{\forall} y, x < y \ensuremath{\rightarrow} f y < \ensuremath{\varepsilon}\textcolor{leanCBr2}{)}\textcolor{leanCBr1}{)} := \textcolor{leanCKw}{by}
  \textcolor{leanCSorry}{sorry}

\textcolor{leanCCmt}{-- ===== goal statement, PROVED from the antecedent =====}
\textcolor{leanCKw}{theorem} \textcolor{leanCDecl}{Imo1983P1} \textcolor{leanCBr1}{(}f : \ensuremath{\mathbb{R}} \ensuremath{\rightarrow} \ensuremath{\mathbb{R}}\textcolor{leanCBr1}{)} \textcolor{leanCBr1}{(}hf_pos : \ensuremath{\forall} x : \ensuremath{\mathbb{R}}, \textcolor{leanCNum}{0} < x \ensuremath{\rightarrow} \textcolor{leanCNum}{0} < f x\textcolor{leanCBr1}{)} :
    \textcolor{leanCBr1}{(}\textcolor{leanCBr2}{(}\ensuremath{\forall} x y : \ensuremath{\mathbb{R}}, \textcolor{leanCNum}{0} < x \ensuremath{\rightarrow} \textcolor{leanCNum}{0} < y \ensuremath{\rightarrow} f \textcolor{leanCBr3}{(}x * f y\textcolor{leanCBr3}{)} = y * f x\textcolor{leanCBr2}{)} \ensuremath{\wedge}
      Tendsto f atTop \textcolor{leanCBr2}{(}nhds \textcolor{leanCNum}{0}\textcolor{leanCBr2}{)}\textcolor{leanCBr1}{)} \ensuremath{\leftrightarrow}
    f \ensuremath{\in} Imo1983P1_solution := \textcolor{leanCKw}{by}
  constructor
  \ensuremath{\cdot} rintro \textcolor{leanCBr1}{\ensuremath{\langle}}hf_eq, hf_lim\textcolor{leanCBr1}{\ensuremath{\rangle}}
    \textcolor{leanCKw}{let} g : \ensuremath{\mathbb{R}}+ \ensuremath{\rightarrow} \ensuremath{\mathbb{R}}+ := \textcolor{leanCKw}{fun} x => \textcolor{leanCBr1}{\ensuremath{\langle}}f x, hf_pos x x.property\textcolor{leanCBr1}{\ensuremath{\rangle}}
    \textcolor{leanCKw}{have} hiff := Imo1983P1_refs_assumed g
    \textcolor{leanCKw}{have} hg_eq : \ensuremath{\forall} x y, g \textcolor{leanCBr1}{(}x * g y\textcolor{leanCBr1}{)} = y * g x := \textcolor{leanCKw}{by}
      intro x y
      apply Subtype.ext
      exact hf_eq x y x.property y.property
    \textcolor{leanCKw}{have} hg_tail : \ensuremath{\forall} \ensuremath{\varepsilon}, \ensuremath{\exists} x, \ensuremath{\forall} y, x < y \ensuremath{\rightarrow} g y < \ensuremath{\varepsilon} := \textcolor{leanCKw}{by}
      intro \ensuremath{\varepsilon}
      obtain \textcolor{leanCBr1}{\ensuremath{\langle}}N, hN\textcolor{leanCBr1}{\ensuremath{\rangle}} := \textcolor{leanCBr1}{(}Metric.tendsto_atTop.1 hf_lim\textcolor{leanCBr1}{)} \ensuremath{\varepsilon} \ensuremath{\varepsilon}.property
      refine \textcolor{leanCBr1}{\ensuremath{\langle}}\textcolor{leanCBr2}{\ensuremath{\langle}}max N \textcolor{leanCNum}{1}, lt_of_lt_of_le zero_lt_one \textcolor{leanCBr3}{(}le_max_right N \textcolor{leanCNum}{1}\textcolor{leanCBr3}{)}\textcolor{leanCBr2}{\ensuremath{\rangle}}, ?_\textcolor{leanCBr1}{\ensuremath{\rangle}}
      intro y hxy
      change f y < \ensuremath{\varepsilon}
      \textcolor{leanCKw}{have} hyN : N \ensuremath{\leq} \textcolor{leanCBr1}{(}y : \ensuremath{\mathbb{R}}\textcolor{leanCBr1}{)} := le_trans \textcolor{leanCBr1}{(}le_max_left N \textcolor{leanCNum}{1}\textcolor{leanCBr1}{)} \textcolor{leanCBr1}{(}le_of_lt hxy\textcolor{leanCBr1}{)}
      simpa \textcolor{leanCBr1}{[}Real.dist_eq, abs_of_pos \textcolor{leanCBr2}{(}hf_pos y y.property\textcolor{leanCBr2}{)}\textcolor{leanCBr1}{]} \textcolor{leanCKw}{using} hN y hyN
    \textcolor{leanCKw}{have} himp := hiff.mpr
    \textcolor{leanCKw}{have} hg_mem := himp \textcolor{leanCBr1}{\ensuremath{\langle}}hg_eq, hg_tail\textcolor{leanCBr1}{\ensuremath{\rangle}}
    change \ensuremath{\forall} x : \ensuremath{\mathbb{R}}, \textcolor{leanCNum}{0} < x \ensuremath{\rightarrow} f x = \textcolor{leanCNum}{1} / x
    intro x hx
    exact congrArg Subtype.val \textcolor{leanCBr1}{(}congrFun hg_mem \textcolor{leanCBr2}{\ensuremath{\langle}}x, hx\textcolor{leanCBr2}{\ensuremath{\rangle}}\textcolor{leanCBr1}{)}
  \ensuremath{\cdot} intro hf_mem
    change \ensuremath{\forall} x : \ensuremath{\mathbb{R}}, \textcolor{leanCNum}{0} < x \ensuremath{\rightarrow} f x = \textcolor{leanCNum}{1} / x \textcolor{leanCKw}{at} hf_mem
    \textcolor{leanCKw}{let} g : \ensuremath{\mathbb{R}}+ \ensuremath{\rightarrow} \ensuremath{\mathbb{R}}+ := \textcolor{leanCKw}{fun} x => \textcolor{leanCNum}{1} / x
    \textcolor{leanCKw}{have} hiff := Imo1983P1_refs_assumed g
    \textcolor{leanCKw}{have} hmp := hiff.mp
    \textcolor{leanCKw}{have} hg := hmp rfl
    refine \textcolor{leanCBr1}{\ensuremath{\langle}}?_, ?_\textcolor{leanCBr1}{\ensuremath{\rangle}}
    \ensuremath{\cdot} intro x y hx hy
      \textcolor{leanCKw}{have} h := congrArg Subtype.val \textcolor{leanCBr1}{(}hg.1 \textcolor{leanCBr2}{\ensuremath{\langle}}x, hx\textcolor{leanCBr2}{\ensuremath{\rangle}} \textcolor{leanCBr2}{\ensuremath{\langle}}y, hy\textcolor{leanCBr2}{\ensuremath{\rangle}}\textcolor{leanCBr1}{)}
      rw \textcolor{leanCBr1}{[}hf_mem y hy, hf_mem x hx,
        hf_mem \textcolor{leanCBr2}{(}x * \textcolor{leanCBr3}{(}\textcolor{leanCNum}{1} / y\textcolor{leanCBr3}{)}\textcolor{leanCBr2}{)} \textcolor{leanCBr2}{(}mul_pos hx \textcolor{leanCBr3}{(}one_div_pos.mpr hy\textcolor{leanCBr3}{)}\textcolor{leanCBr2}{)}\textcolor{leanCBr1}{]}
      exact h
    \ensuremath{\cdot} apply Metric.tendsto_atTop.2
      intro \ensuremath{\varepsilon} h\ensuremath{\varepsilon}
      obtain \textcolor{leanCBr1}{\ensuremath{\langle}}x, hx\textcolor{leanCBr1}{\ensuremath{\rangle}} := hg.2 \textcolor{leanCBr1}{\ensuremath{\langle}}\ensuremath{\varepsilon}, h\ensuremath{\varepsilon}\textcolor{leanCBr1}{\ensuremath{\rangle}}
      refine \textcolor{leanCBr1}{\ensuremath{\langle}}\textcolor{leanCBr2}{(}x : \ensuremath{\mathbb{R}}\textcolor{leanCBr2}{)} + \textcolor{leanCNum}{1}, ?_\textcolor{leanCBr1}{\ensuremath{\rangle}}
      intro y hy
      \textcolor{leanCKw}{have} hy_pos : \textcolor{leanCNum}{0} < y := lt_of_lt_of_le x.property \textcolor{leanCBr1}{(}le_trans
          \textcolor{leanCBr2}{(}le_add_of_nonneg_right zero_le_one\textcolor{leanCBr2}{)} hy\textcolor{leanCBr1}{)}
      \textcolor{leanCKw}{have} hxy : x < \textcolor{leanCBr1}{(}\textcolor{leanCBr2}{\ensuremath{\langle}}y, hy_pos\textcolor{leanCBr2}{\ensuremath{\rangle}} : \ensuremath{\mathbb{R}}+\textcolor{leanCBr1}{)} := \textcolor{leanCKw}{by}
        change \textcolor{leanCBr1}{(}x : \ensuremath{\mathbb{R}}\textcolor{leanCBr1}{)} < y
        exact lt_of_lt_of_le \textcolor{leanCBr1}{(}lt_add_one \textcolor{leanCBr2}{(}x : \ensuremath{\mathbb{R}}\textcolor{leanCBr2}{)}\textcolor{leanCBr1}{)} hy
      \textcolor{leanCKw}{have} htail := hx \textcolor{leanCBr1}{\ensuremath{\langle}}y, hy_pos\textcolor{leanCBr1}{\ensuremath{\rangle}} hxy
      \textcolor{leanCKw}{have} htail' : \textcolor{leanCBr1}{(}\textcolor{leanCNum}{1} / y : \ensuremath{\mathbb{R}}\textcolor{leanCBr1}{)} < \ensuremath{\varepsilon} := \textcolor{leanCKw}{by}
        exact_mod_cast htail
      rw \textcolor{leanCBr1}{[}hf_mem y hy_pos, Real.dist_eq, sub_zero,
        abs_of_pos \textcolor{leanCBr2}{(}one_div_pos.mpr hy_pos\textcolor{leanCBr2}{)}\textcolor{leanCBr1}{]}
      exact htail'
\end{Verbatim}
\end{leancode}
\begin{leancode}
\begin{Verbatim}[commandchars=\\\{\},fontfamily=zi4,fontsize=\leancodesize,formatcom=\color{leanCFg},baselinestretch=1.0,xleftmargin=0pt]
\textcolor{leanCCmt}{-- variant candidate (positivity as a hypothesis), candidate \ensuremath{\Rightarrow} reference (PASS)}
\textcolor{leanCCmt}{-- ===== assumed statement (antecedent) =====}
\textcolor{leanCKw}{theorem} \textcolor{leanCDecl}{Imo1983P1_assumed} \textcolor{leanCBr1}{(}f : \ensuremath{\mathbb{R}} \ensuremath{\rightarrow} \ensuremath{\mathbb{R}}\textcolor{leanCBr1}{)} \textcolor{leanCBr1}{(}hf_pos : \ensuremath{\forall} x : \ensuremath{\mathbb{R}}, \textcolor{leanCNum}{0} < x \ensuremath{\rightarrow} \textcolor{leanCNum}{0} < f x\textcolor{leanCBr1}{)} :
    \textcolor{leanCBr1}{(}\textcolor{leanCBr2}{(}\ensuremath{\forall} x y : \ensuremath{\mathbb{R}}, \textcolor{leanCNum}{0} < x \ensuremath{\rightarrow} \textcolor{leanCNum}{0} < y \ensuremath{\rightarrow} f \textcolor{leanCBr3}{(}x * f y\textcolor{leanCBr3}{)} = y * f x\textcolor{leanCBr2}{)} \ensuremath{\wedge}
      Tendsto f atTop \textcolor{leanCBr2}{(}nhds \textcolor{leanCNum}{0}\textcolor{leanCBr2}{)}\textcolor{leanCBr1}{)} \ensuremath{\leftrightarrow}
    f \ensuremath{\in} Imo1983P1_solution := \textcolor{leanCKw}{by}
  \textcolor{leanCSorry}{sorry}
\textcolor{leanCCmt}{-- ===== goal statement, PROVED from the antecedent =====}
\textcolor{leanCKw}{theorem} \textcolor{leanCDecl}{Imo1983P1_refs} \textcolor{leanCBr1}{(}f : \ensuremath{\mathbb{R}}+ \ensuremath{\rightarrow} \ensuremath{\mathbb{R}}+\textcolor{leanCBr1}{)} :
    f \ensuremath{\in} SolutionSet \ensuremath{\leftrightarrow}
    \textcolor{leanCBr1}{(}\textcolor{leanCBr2}{(}\ensuremath{\forall} x y, f \textcolor{leanCBr3}{(}x * f y\textcolor{leanCBr3}{)} = y * f x\textcolor{leanCBr2}{)} \ensuremath{\wedge}
     \textcolor{leanCBr2}{(}\ensuremath{\forall} \ensuremath{\varepsilon}, \ensuremath{\exists} x, \ensuremath{\forall} y, x < y \ensuremath{\rightarrow} f y < \ensuremath{\varepsilon}\textcolor{leanCBr2}{)}\textcolor{leanCBr1}{)} := \textcolor{leanCKw}{by}
  \textcolor{leanCKw}{let} F : \ensuremath{\mathbb{R}} \ensuremath{\rightarrow} \ensuremath{\mathbb{R}} := \textcolor{leanCKw}{fun} x => \textcolor{leanCKw}{if} hx : \textcolor{leanCNum}{0} < x \textcolor{leanCKw}{then} \ensuremath{\uparrow}\textcolor{leanCBr1}{(}f \textcolor{leanCBr2}{\ensuremath{\langle}}x, hx\textcolor{leanCBr2}{\ensuremath{\rangle}}\textcolor{leanCBr1}{)} \textcolor{leanCKw}{else} \textcolor{leanCNum}{0}
  \textcolor{leanCKw}{have} hF_pos : \ensuremath{\forall} x : \ensuremath{\mathbb{R}}, \textcolor{leanCNum}{0} < x \ensuremath{\rightarrow} \textcolor{leanCNum}{0} < F x := \textcolor{leanCKw}{by}
    intro x hx
    \textcolor{leanCKw}{have} hFx : F x = \textcolor{leanCBr1}{(}f \textcolor{leanCBr2}{\ensuremath{\langle}}x, hx\textcolor{leanCBr2}{\ensuremath{\rangle}} : \ensuremath{\mathbb{R}}\textcolor{leanCBr1}{)} := dif_pos hx
    rw \textcolor{leanCBr1}{[}hFx\textcolor{leanCBr1}{]}
    exact \textcolor{leanCBr1}{(}f \textcolor{leanCBr2}{\ensuremath{\langle}}x, hx\textcolor{leanCBr2}{\ensuremath{\rangle}}\textcolor{leanCBr1}{)}.\textcolor{leanCNum}{2}
  \textcolor{leanCKw}{have} hiff := Imo1983P1_assumed F hF_pos
  \textcolor{leanCKw}{have} hmp := hiff.mp
  \textcolor{leanCKw}{have} hmpr := hiff.mpr
  constructor
  \ensuremath{\cdot} intro hf_mem
    \textcolor{leanCKw}{have} hF_mem : F \ensuremath{\in} Imo1983P1_solution := \textcolor{leanCKw}{by}
      \textcolor{leanCKw}{show} \ensuremath{\forall} x : \ensuremath{\mathbb{R}}, \textcolor{leanCNum}{0} < x \ensuremath{\rightarrow} F x = \textcolor{leanCNum}{1} / x
      intro x hx
      \textcolor{leanCKw}{have} hFx : F x = \textcolor{leanCBr1}{(}f \textcolor{leanCBr2}{\ensuremath{\langle}}x, hx\textcolor{leanCBr2}{\ensuremath{\rangle}} : \ensuremath{\mathbb{R}}\textcolor{leanCBr1}{)} := dif_pos hx
      \textcolor{leanCKw}{have} hval : \textcolor{leanCBr1}{(}f \textcolor{leanCBr2}{\ensuremath{\langle}}x, hx\textcolor{leanCBr2}{\ensuremath{\rangle}} : \ensuremath{\mathbb{R}}\textcolor{leanCBr1}{)} = \textcolor{leanCBr1}{(}\textcolor{leanCBr2}{(}\textcolor{leanCNum}{1} / \textcolor{leanCBr3}{(}\textcolor{leanCBr1}{\ensuremath{\langle}}x, hx\textcolor{leanCBr1}{\ensuremath{\rangle}} : \ensuremath{\mathbb{R}}+\textcolor{leanCBr3}{)} : \ensuremath{\mathbb{R}}+\textcolor{leanCBr2}{)} : \ensuremath{\mathbb{R}}\textcolor{leanCBr1}{)} :=
        congrArg Subtype.val \textcolor{leanCBr1}{(}congrFun hf_mem \textcolor{leanCBr2}{\ensuremath{\langle}}x, hx\textcolor{leanCBr2}{\ensuremath{\rangle}}\textcolor{leanCBr1}{)}
      \textcolor{leanCKw}{have} hdiv : \textcolor{leanCBr1}{(}\textcolor{leanCBr2}{(}\textcolor{leanCNum}{1} / \textcolor{leanCBr3}{(}\textcolor{leanCBr1}{\ensuremath{\langle}}x, hx\textcolor{leanCBr1}{\ensuremath{\rangle}} : \ensuremath{\mathbb{R}}+\textcolor{leanCBr3}{)} : \ensuremath{\mathbb{R}}+\textcolor{leanCBr2}{)} : \ensuremath{\mathbb{R}}\textcolor{leanCBr1}{)} = \textcolor{leanCNum}{1} / x := \textcolor{leanCKw}{by}
        simp
      rw \textcolor{leanCBr1}{[}hFx, hval, hdiv\textcolor{leanCBr1}{]}
    \textcolor{leanCKw}{have} hconj := hmpr hF_mem
    refine \textcolor{leanCBr1}{\ensuremath{\langle}}?_, ?_\textcolor{leanCBr1}{\ensuremath{\rangle}}
    \ensuremath{\cdot} intro x y
      \textcolor{leanCKw}{have} hkey := hconj.1 \textcolor{leanCBr1}{(}x : \ensuremath{\mathbb{R}}\textcolor{leanCBr1}{)} \textcolor{leanCBr1}{(}y : \ensuremath{\mathbb{R}}\textcolor{leanCBr1}{)} x.2 y.2
      \textcolor{leanCKw}{have} hFy : F \textcolor{leanCBr1}{(}y : \ensuremath{\mathbb{R}}\textcolor{leanCBr1}{)} = \textcolor{leanCBr1}{(}f y : \ensuremath{\mathbb{R}}\textcolor{leanCBr1}{)} := dif_pos y.2
      \textcolor{leanCKw}{have} hFx : F \textcolor{leanCBr1}{(}x : \ensuremath{\mathbb{R}}\textcolor{leanCBr1}{)} = \textcolor{leanCBr1}{(}f x : \ensuremath{\mathbb{R}}\textcolor{leanCBr1}{)} := dif_pos x.2
      \textcolor{leanCKw}{have} hFxy : F \textcolor{leanCBr1}{(}\textcolor{leanCBr2}{(}x : \ensuremath{\mathbb{R}}\textcolor{leanCBr2}{)} * \textcolor{leanCBr2}{(}f y : \ensuremath{\mathbb{R}}\textcolor{leanCBr2}{)}\textcolor{leanCBr1}{)} = \textcolor{leanCBr1}{(}f \textcolor{leanCBr2}{(}x * f y\textcolor{leanCBr2}{)} : \ensuremath{\mathbb{R}}\textcolor{leanCBr1}{)} :=
        dif_pos \textcolor{leanCBr1}{(}mul_pos x.2 \textcolor{leanCBr2}{(}f y\textcolor{leanCBr2}{)}.\textcolor{leanCNum}{2}\textcolor{leanCBr1}{)}
      rw \textcolor{leanCBr1}{[}hFy, hFx, hFxy\textcolor{leanCBr1}{]} \textcolor{leanCKw}{at} hkey
      exact Subtype.ext hkey
    \ensuremath{\cdot} intro \ensuremath{\varepsilon}
      obtain \textcolor{leanCBr1}{\ensuremath{\langle}}N, hN\textcolor{leanCBr1}{\ensuremath{\rangle}} := Metric.tendsto_atTop.1 hconj.2 \textcolor{leanCBr1}{(}\ensuremath{\varepsilon} : \ensuremath{\mathbb{R}}\textcolor{leanCBr1}{)} \ensuremath{\varepsilon}.2
      \textcolor{leanCKw}{have} hx\ensuremath{\sb{0}} : \textcolor{leanCBr1}{(}\textcolor{leanCNum}{0} < max N \textcolor{leanCNum}{1}\textcolor{leanCBr1}{)} = True := eq_true \textcolor{leanCBr1}{(}lt_of_lt_of_le one_pos \textcolor{leanCBr2}{(}le_max_right
          N \textcolor{leanCNum}{1}\textcolor{leanCBr2}{)}\textcolor{leanCBr1}{)}
      refine \textcolor{leanCBr1}{\ensuremath{\langle}}\textcolor{leanCBr2}{\ensuremath{\langle}}max N \textcolor{leanCNum}{1}, of_eq_true hx\ensuremath{\sb{0}}\textcolor{leanCBr2}{\ensuremath{\rangle}}, ?_\textcolor{leanCBr1}{\ensuremath{\rangle}}
      intro y hxy
      \textcolor{leanCKw}{have} hdy := hN \textcolor{leanCBr1}{(}y : \ensuremath{\mathbb{R}}\textcolor{leanCBr1}{)} \textcolor{leanCBr1}{(}le_of_lt \textcolor{leanCBr2}{(}lt_of_le_of_lt \textcolor{leanCBr3}{(}le_max_left N \textcolor{leanCNum}{1}\textcolor{leanCBr3}{)} hxy\textcolor{leanCBr2}{)}\textcolor{leanCBr1}{)}
      \textcolor{leanCKw}{have} hFy : F \textcolor{leanCBr1}{(}y : \ensuremath{\mathbb{R}}\textcolor{leanCBr1}{)} = \textcolor{leanCBr1}{(}f y : \ensuremath{\mathbb{R}}\textcolor{leanCBr1}{)} := dif_pos y.2
      \textcolor{leanCKw}{have} hdist : dist \textcolor{leanCBr1}{(}F \textcolor{leanCBr2}{(}y : \ensuremath{\mathbb{R}}\textcolor{leanCBr2}{)}\textcolor{leanCBr1}{)} \textcolor{leanCNum}{0} = \textcolor{leanCBr1}{(}f y : \ensuremath{\mathbb{R}}\textcolor{leanCBr1}{)} := \textcolor{leanCKw}{by}
        rw \textcolor{leanCBr1}{[}Real.dist_eq, sub_zero, hFy, abs_of_pos \textcolor{leanCBr2}{(}f y\textcolor{leanCBr2}{)}.\textcolor{leanCNum}{2}\textcolor{leanCBr1}{]}
      rw \textcolor{leanCBr1}{[}hdist\textcolor{leanCBr1}{]} \textcolor{leanCKw}{at} hdy
      exact hdy
  \ensuremath{\cdot} rintro \textcolor{leanCBr1}{\ensuremath{\langle}}hf_eq, hf_tail\textcolor{leanCBr1}{\ensuremath{\rangle}}
    \textcolor{leanCKw}{have} hF_eq : \ensuremath{\forall} x y : \ensuremath{\mathbb{R}}, \textcolor{leanCNum}{0} < x \ensuremath{\rightarrow} \textcolor{leanCNum}{0} < y \ensuremath{\rightarrow} F \textcolor{leanCBr1}{(}x * F y\textcolor{leanCBr1}{)} = y * F x := \textcolor{leanCKw}{by}
      intro x y hx hy
      \textcolor{leanCKw}{have} hFy : F y = \textcolor{leanCBr1}{(}f \textcolor{leanCBr2}{\ensuremath{\langle}}y, hy\textcolor{leanCBr2}{\ensuremath{\rangle}} : \ensuremath{\mathbb{R}}\textcolor{leanCBr1}{)} := dif_pos hy
      \textcolor{leanCKw}{have} hFx : F x = \textcolor{leanCBr1}{(}f \textcolor{leanCBr2}{\ensuremath{\langle}}x, hx\textcolor{leanCBr2}{\ensuremath{\rangle}} : \ensuremath{\mathbb{R}}\textcolor{leanCBr1}{)} := dif_pos hx
      \textcolor{leanCKw}{have} hFxy : F \textcolor{leanCBr1}{(}x * \textcolor{leanCBr2}{(}f \textcolor{leanCBr3}{\ensuremath{\langle}}y, hy\textcolor{leanCBr3}{\ensuremath{\rangle}} : \ensuremath{\mathbb{R}}\textcolor{leanCBr2}{)}\textcolor{leanCBr1}{)} = \textcolor{leanCBr1}{(}f \textcolor{leanCBr2}{(}\textcolor{leanCBr3}{(}\textcolor{leanCBr1}{\ensuremath{\langle}}x, hx\textcolor{leanCBr1}{\ensuremath{\rangle}} : \ensuremath{\mathbb{R}}+\textcolor{leanCBr3}{)} * f \textcolor{leanCBr3}{\ensuremath{\langle}}y, hy\textcolor{leanCBr3}{\ensuremath{\rangle}}\textcolor{leanCBr2}{)} : \ensuremath{\mathbb{R}}\textcolor{leanCBr1}{)} :=
        dif_pos \textcolor{leanCBr1}{(}mul_pos hx \textcolor{leanCBr2}{(}f \textcolor{leanCBr3}{\ensuremath{\langle}}y, hy\textcolor{leanCBr3}{\ensuremath{\rangle}}\textcolor{leanCBr2}{)}.\textcolor{leanCNum}{2}\textcolor{leanCBr1}{)}
      \textcolor{leanCKw}{have} hkey : \textcolor{leanCBr1}{(}f \textcolor{leanCBr2}{(}\textcolor{leanCBr3}{(}\textcolor{leanCBr1}{\ensuremath{\langle}}x, hx\textcolor{leanCBr1}{\ensuremath{\rangle}} : \ensuremath{\mathbb{R}}+\textcolor{leanCBr3}{)} * f \textcolor{leanCBr3}{\ensuremath{\langle}}y, hy\textcolor{leanCBr3}{\ensuremath{\rangle}}\textcolor{leanCBr2}{)} : \ensuremath{\mathbb{R}}\textcolor{leanCBr1}{)} = \textcolor{leanCBr1}{(}\textcolor{leanCBr2}{(}\textcolor{leanCBr3}{\ensuremath{\langle}}y, hy\textcolor{leanCBr3}{\ensuremath{\rangle}} : \ensuremath{\mathbb{R}}+\textcolor{leanCBr2}{)} * f \textcolor{leanCBr2}{\ensuremath{\langle}}x, hx\textcolor{leanCBr2}{\ensuremath{\rangle}} :
          \ensuremath{\mathbb{R}}\textcolor{leanCBr1}{)} :=
        congrArg Subtype.val \textcolor{leanCBr1}{(}hf_eq \textcolor{leanCBr2}{\ensuremath{\langle}}x, hx\textcolor{leanCBr2}{\ensuremath{\rangle}} \textcolor{leanCBr2}{\ensuremath{\langle}}y, hy\textcolor{leanCBr2}{\ensuremath{\rangle}}\textcolor{leanCBr1}{)}
      \textcolor{leanCKw}{have} hmul : \textcolor{leanCBr1}{(}\textcolor{leanCBr2}{(}\textcolor{leanCBr3}{\ensuremath{\langle}}y, hy\textcolor{leanCBr3}{\ensuremath{\rangle}} : \ensuremath{\mathbb{R}}+\textcolor{leanCBr2}{)} * f \textcolor{leanCBr2}{\ensuremath{\langle}}x, hx\textcolor{leanCBr2}{\ensuremath{\rangle}} : \ensuremath{\mathbb{R}}\textcolor{leanCBr1}{)} = y * \textcolor{leanCBr1}{(}f \textcolor{leanCBr2}{\ensuremath{\langle}}x, hx\textcolor{leanCBr2}{\ensuremath{\rangle}} : \ensuremath{\mathbb{R}}\textcolor{leanCBr1}{)} := rfl
      rw \textcolor{leanCBr1}{[}hFy, hFx, hFxy, hkey, hmul\textcolor{leanCBr1}{]}
    \textcolor{leanCKw}{have} hF_lim : Tendsto F atTop \textcolor{leanCBr1}{(}nhds \textcolor{leanCNum}{0}\textcolor{leanCBr1}{)} := \textcolor{leanCKw}{by}
      rw \textcolor{leanCBr1}{[}Metric.tendsto_atTop\textcolor{leanCBr1}{]}
      intro \ensuremath{\delta} h\ensuremath{\delta}
      obtain \textcolor{leanCBr1}{\ensuremath{\langle}}x, hx\textcolor{leanCBr1}{\ensuremath{\rangle}} := hf_tail \textcolor{leanCBr1}{\ensuremath{\langle}}\ensuremath{\delta}, h\ensuremath{\delta}\textcolor{leanCBr1}{\ensuremath{\rangle}}
      refine \textcolor{leanCBr1}{\ensuremath{\langle}}\textcolor{leanCBr2}{(}x : \ensuremath{\mathbb{R}}\textcolor{leanCBr2}{)} + \textcolor{leanCNum}{1}, ?_\textcolor{leanCBr1}{\ensuremath{\rangle}}
      intro y hy
      \textcolor{leanCKw}{have} hy_pos : \textcolor{leanCNum}{0} < y := lt_of_lt_of_le \textcolor{leanCBr1}{(}lt_trans x.2 \textcolor{leanCBr2}{(}lt_add_one \textcolor{leanCBr3}{(}x : \ensuremath{\mathbb{R}}\textcolor{leanCBr3}{)}\textcolor{leanCBr2}{)}\textcolor{leanCBr1}{)} hy
      \textcolor{leanCKw}{have} hxlt : \textcolor{leanCBr1}{(}x : \ensuremath{\mathbb{R}}\textcolor{leanCBr1}{)} < y := lt_of_lt_of_le \textcolor{leanCBr1}{(}lt_add_one \textcolor{leanCBr2}{(}x : \ensuremath{\mathbb{R}}\textcolor{leanCBr2}{)}\textcolor{leanCBr1}{)} hy
      \textcolor{leanCKw}{have} hfy := hx \textcolor{leanCBr1}{\ensuremath{\langle}}y, hy_pos\textcolor{leanCBr1}{\ensuremath{\rangle}} hxlt
      \textcolor{leanCKw}{have} hFy : F y = \textcolor{leanCBr1}{(}f \textcolor{leanCBr2}{\ensuremath{\langle}}y, hy_pos\textcolor{leanCBr2}{\ensuremath{\rangle}} : \ensuremath{\mathbb{R}}\textcolor{leanCBr1}{)} := dif_pos hy_pos
      \textcolor{leanCKw}{have} hdist : dist \textcolor{leanCBr1}{(}F y\textcolor{leanCBr1}{)} \textcolor{leanCNum}{0} = \textcolor{leanCBr1}{(}f \textcolor{leanCBr2}{\ensuremath{\langle}}y, hy_pos\textcolor{leanCBr2}{\ensuremath{\rangle}} : \ensuremath{\mathbb{R}}\textcolor{leanCBr1}{)} := \textcolor{leanCKw}{by}
        rw \textcolor{leanCBr1}{[}Real.dist_eq, sub_zero, hFy, abs_of_pos \textcolor{leanCBr2}{(}f \textcolor{leanCBr3}{\ensuremath{\langle}}y, hy_pos\textcolor{leanCBr3}{\ensuremath{\rangle}}\textcolor{leanCBr2}{)}.\textcolor{leanCNum}{2}\textcolor{leanCBr1}{]}
      rw \textcolor{leanCBr1}{[}hdist\textcolor{leanCBr1}{]}
      exact hfy
    \textcolor{leanCKw}{have} hF_mem := hmp \textcolor{leanCBr1}{\ensuremath{\langle}}hF_eq, hF_lim\textcolor{leanCBr1}{\ensuremath{\rangle}}
    \textcolor{leanCKw}{show} f = \textcolor{leanCKw}{fun} x => \textcolor{leanCNum}{1} / x
    funext x
    \textcolor{leanCKw}{have} hfx : F \textcolor{leanCBr1}{(}x : \ensuremath{\mathbb{R}}\textcolor{leanCBr1}{)} = \textcolor{leanCBr1}{(}f x : \ensuremath{\mathbb{R}}\textcolor{leanCBr1}{)} := dif_pos x.2
    \textcolor{leanCKw}{have} hFx1 : F \textcolor{leanCBr1}{(}x : \ensuremath{\mathbb{R}}\textcolor{leanCBr1}{)} = \textcolor{leanCNum}{1} / \textcolor{leanCBr1}{(}x : \ensuremath{\mathbb{R}}\textcolor{leanCBr1}{)} := hF_mem \textcolor{leanCBr1}{(}x : \ensuremath{\mathbb{R}}\textcolor{leanCBr1}{)} x.2
    \textcolor{leanCKw}{have} hdiv : \textcolor{leanCBr1}{(}\textcolor{leanCBr2}{(}\textcolor{leanCNum}{1} / x : \ensuremath{\mathbb{R}}+\textcolor{leanCBr2}{)} : \ensuremath{\mathbb{R}}\textcolor{leanCBr1}{)} = \textcolor{leanCNum}{1} / \textcolor{leanCBr1}{(}x : \ensuremath{\mathbb{R}}\textcolor{leanCBr1}{)} := \textcolor{leanCKw}{by}
      simp
    \textcolor{leanCKw}{have} hfinal : \textcolor{leanCBr1}{(}f x : \ensuremath{\mathbb{R}}\textcolor{leanCBr1}{)} = \textcolor{leanCBr1}{(}\textcolor{leanCBr2}{(}\textcolor{leanCNum}{1} / x : \ensuremath{\mathbb{R}}+\textcolor{leanCBr2}{)} : \ensuremath{\mathbb{R}}\textcolor{leanCBr1}{)} := \textcolor{leanCKw}{by}
      rw \textcolor{leanCBr1}{[}\ensuremath{\leftarrow} hfx, hFx1, hdiv\textcolor{leanCBr1}{]}
    exact Subtype.ext hfinal
\end{Verbatim}
\end{leancode}
\section{Reproducibility}
\label{app:repro}

\paragraph{Toolchain.} \Eval{} and \Checker{} run on Lean
\texttt{v4.30.0-rc1} with Mathlib at revision \texttt{0c154d67}. Every
statement in the paper is elaborated under this toolchain, and the
elaboration filter of Appendix~\ref{app:data} refers to it.

\paragraph{The \beqplus{} port.} The released implementation of \beqplus{}
is vendored verbatim and never edited. A copy carries exactly two
toolchain-compatibility changes, each marked in the source. The first maps
the retired tactic \texttt{simp\_all\_arith!} to its current spelling
\texttt{simp\_all!\ +arith +decide}, which is the same simplifier with the
same options and the only retired tactic the method uses. The second retunes
the memory containment of the per-pair Lean server for the larger current
environment. Nothing about the method changes. The ported copy produces
the \beqplus{} numbers of our tables.

\paragraph{The model campaign.} Proofs were written by OpenAI Codex CLI
0.151.0 with model \texttt{gpt-5.6-sol} at reasoning effort \texttt{medium},
six pairs in parallel and one session per pair and prompt style, on 30 August
2026, against Lean \texttt{v4.28.1} with Mathlib at revision \texttt{1f9fffd5},
the current release at the time. The returned proofs were then checked by
\Checker{} under our toolchain, and a direction whose proof no longer
elaborates there counts as a rejection. Of the 4{,}088 proof directions, the
model left 340 unproved, and 62 no longer elaborate on our toolchain. Per
style the model used between 80 and 85
million input tokens, of which 87\% were cache hits, and about 1.5 million
output tokens, at a wall time of about two hours.

\paragraph{Hardware and cost.} Every run of \Eval{} and \Checker{} was
produced on a single consumer laptop CPU with no GPU. The costs of
Table~\ref{tab:capability-full} are total CPU time over the 407 pairs, 3.45
hours for the base configuration and 30.67 hours for the strongest. Every
judgement is deterministic, as described in Appendix~\ref{app:eval}, so a
re-run reproduces every verdict, while the timings depend on the machine.

\paragraph{Release.} The code of both programs, the adjudicated labels of
the ProofNetVerif test split with the provenance of each change, the
ablation set and the recall suite, the four prompts, and every model-written proof
with its verdict are released with the paper.

\end{document}